\newtheorem{lemma}{Lemma}
\newtheorem{proposition}{Proposition}
\theoremstyle{definition}
\theoremstyle{remark}
\newtheorem{remark}{Remark}
\title{RefiDiff: Progressive Refinement Diffusion for Efficient Missing Data Imputation}
\author{
   Md Atik Ahamed, Qiang Ye, Qiang Cheng\thanks{Corresponding author}
}
\begin{document}

\maketitle

\begin{abstract}
Missing values in high-dimensional, mixed-type datasets pose significant challenges for data imputation, particularly under Missing Not At Random (MNAR) mechanisms. Existing methods struggle to integrate local and global data characteristics, limiting performance in MNAR and high-dimensional settings. We propose an innovative framework, RefiDiff, combining local machine learning predictions with a novel Mamba-based denoising network efficiently capturing long-range dependencies among features and samples with low computational complexity. RefiDiff bridges the predictive and generative paradigms of imputation, leveraging pre-refinement for initial warm-up imputations and post-refinement to polish results, enhancing stability and accuracy. By encoding mixed-type data into unified tokens, RefiDiff enables robust imputation without architectural or hyperparameter tuning. RefiDiff outperforms state-of-the-art (SOTA) methods across missing-value settings,
demonstrating strong performance in MNAR settings and superior out-of-sample generalization. Extensive evaluations on nine real-world datasets demonstrate its robustness, scalability, and effectiveness in handling complex missingness patterns.
\end{abstract}

\begin{links}
    \link{Code}{https://github.com/Atik-Ahamed/RefiDiff}
\end{links}

\section{Introduction}

Missing values are a pervasive challenge in real-world datasets, arising from sensor failures, data corruption, or operational issues. Accurate imputation is essential for downstream analysis and modeling, particularly in high-dimensional, mixed-type datasets such as those from the UCI Machine Learning Repository~\citep{kelly2023uci} and other sources~\citep{Koklu2020MulticlassCO,pace1997sparse}.

A wide range of imputation methods has been developed, including statistical techniques, matrix completion, deep generative models, diffusion-based methods, and hybrid frameworks. These methods aim to address three canonical missingness mechanisms: Missing Completely At Random (MCAR), Missing At Random (MAR), and Missing Not At Random (MNAR). Among these, MNAR is the most challenging, as the probability of missingness depends on the missing values themselves~\citep{missing-reason-1,miracle,geron2022hands}. In contrast, MCAR is the most tractable, with missingness independent of both observed and unobserved data~\citep{mice}.

Most imputation methods fall into two fundamental paradigms: \emph{predictive} approaches, which estimate missing values through direct mappings from observed entries (e.g., via regressors or classifiers), and \emph{generative} approaches, which treat imputation as a stochastic process by modeling the underlying data distribution (e.g., via diffusion models). Predictive methods are typically accurate and efficient but deterministic, often lacking uncertainty modeling. Generative models, by contrast, offer uncertainty-aware and diverse imputations but are computationally intensive and slower.

These paradigmatic differences are further reflected in their  methodological perspectives: predictive models tend to adopt a \emph{local} view, inferring missing values using partial observations per feature, while generative models take a \emph{global} view, modeling the full data distribution. However, few methods effectively integrate these paradigms and perspectives, especially for high-dimensional, mixed-type data. This gap results in limited performance under MNAR settings, reduced robustness to distribution shifts, and sensitivity to hyperparameters. Despite their complementary strengths, predictive and generative approaches have rarely been unified into a general-purpose framework, representing a missed opportunity to jointly leverage deterministic precision and probabilistic robustness.

To address these challenges, we propose \textbf{RefiDiff}, an innovative framework that unifies predictive and generative paradigms through a progressive refinement process. RefiDiff begins with local imputations from machine learning regressors/classifiers, organized into a warm-up (pre-refinement) and polishing (post-refinement) sequence (Figure~\ref{fig:our_imputation_method}). The preliminary estimates are encoded as unified tokens and passed to a global generative stage powered by diffusion. For this, we introduce a Mamba-based denoising network~\citep{gu2023mamba,ahamed2024mambatab}, a selective state-space model with linear complexity and Transformer-level expressivity. This design combines the efficiency of predictive methods with the flexibility of diffusion models, enabling effective modeling of long-range dependencies across complex, mixed-type data.

RefiDiff outperforms state-of-the-art methods such as DIFFPUTER~\citep{zhang2025diffputer} across all three missingness scenarios (MCAR, MAR, MNAR), with particularly strong performance under MNAR. Compared to DIFFPUTER’s TabDDPM, our Mamba-based model reduces computational cost by 4$\times$ while maintaining high accuracy. Extensive evaluations on nine real-world datasets confirm RefiDiff’s robustness, scalability, and efficiency.

In summary, our main contributions are as follows:
\begin{itemize}
\item We propose a progressive refinement strategy that systematically unifies predictive and generative paradigms for robust and flexible imputation.
\item We introduce a novel Mamba-based denoising model for capturing long-range dependencies efficiently in high-dimensional, mixed-type tabular data.
\item RefiDiff is a plug-and-play solution, requiring no architectural or hyperparameter tuning across diverse datasets.
\item It achieves state-of-the-art performance under MCAR, MAR, and MNAR settings, with notable gains in MNAR scenarios.
\end{itemize}
We validate RefiDiff through comprehensive experiments and ablation studies on nine real-world datasets.

\section{Related Work}
\begin{figure*}[ht]
    \centering
    \includegraphics[width=0.85\linewidth]{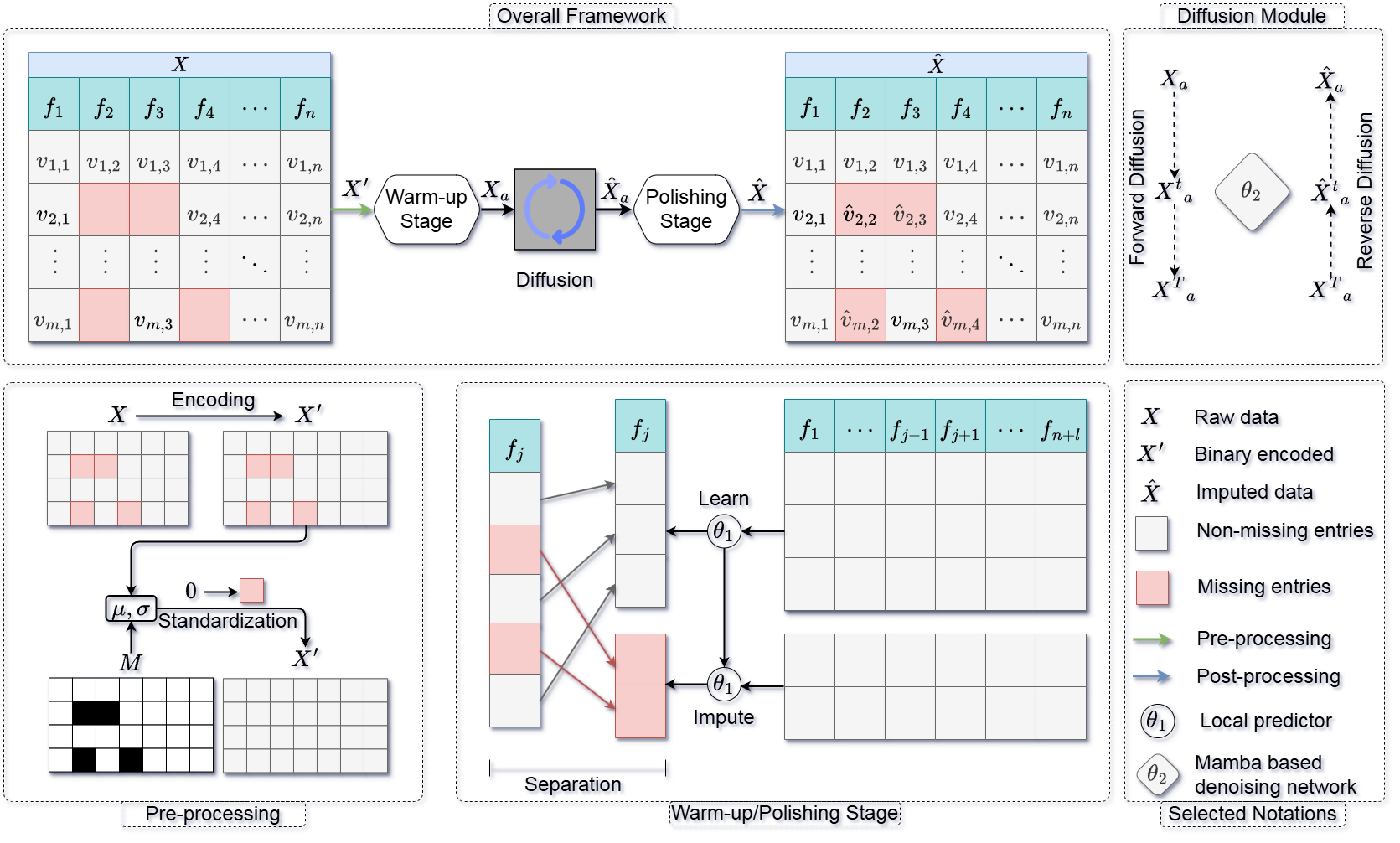}
    \caption{Overview of the proposed imputation framework. The process begins with a warm-up stage on pre-processed data, followed by a diffusion module that iteratively denoises the data. A polishing stage further enhances the imputations. Our designed denoiser $\theta_2$ will be shown in Figure~\ref{fig:denoising_network_ours}.}
    \label{fig:our_imputation_method}
    \vspace{-16pt}
\end{figure*}
The field of data imputation spans several methodological categories, which we review below.

{\bf{Traditional methods}} rely on statistical and iterative approaches. The Expectation-Maximization (EM) algorithm \citep{em,em-early} provides maximum likelihood estimates of model parameters under a pre-specified model for imputing missing data. Multiple Imputation by Chained Equations (MICE) \citep{mice} uses regression models to impute values iteratively, while K-Nearest Neighbors (KNN) imputation \citep{knn} estimates missing values based on data point similarity. MissForest \citep{missforest} employs random forests to handle mixed-type data non-parametrically. While these approaches are widely adopted for their interpretability and efficiency, they mainly leverage local relationships between observed values and often struggle with complex, high-dimensional datasets or non-linear patterns.

{\bf{Matrix completion}} techniques offer another perspective on imputation, moving beyond traditional approaches. SoftImpute \citep{softimpute} uses low-rank matrix factorization to recover missing entries, optimizing via alternating least squares. This approach is particularly effective for datasets with underlying low-rank structures, such as those in recommender systems \citep{luenberger1997optimization, kang2016top}. However, these methods may underperform when the missingness mechanism is complex or when the data do not conform to low-rank assumptions.

{\bf{Deep learning}} advances have introduced more sophisticated imputation techniques. Variational Autoencoders (VAEs) \citep{vae} and their extensions, such as MIWAE \citep{miwae} and MissVAE \citep{missvae}, model the data distribution to impute missing values, capturing complex patterns through latent representations. Generative Adversarial Networks (GANs) \citep{gan}, as seen in GAIN \citep{gain}, employ adversarial training to generate realistic imputations and have been adapted for tabular data. Normalizing flows \citep{normalizing_flows}, as explored in MCFlow \citep{mcflow}, offer exact likelihood estimation via invertible mappings, which can improve modeling of complex global distributions but their impact on imputation accuracy is indirect and architecture-dependent. These generative models excel in capturing non-linear dependencies but may require significant computational resources and careful tuning \citep{adam}.

{\bf{Diffusion models}} have recently emerged as a promising paradigm for data imputation, inspired by their success in image generation \citep{ddpm,edm,repaint}. TabDDPM \citep{tabddpm} adapts denoising diffusion probabilistic models for tabular data, modeling data generation as a reverse diffusion process. Similarly, TabCSDI \citep{tabcsdi} and MissDiff \citep{missdiff} focus on handling missing values by conditioning diffusion processes on observed data. DIFFPUTER \citep{zhang2025diffputer} integrates EM-driven diffusion for robust imputation, while DiffImpute \citep{diffimpute} leverages denoising diffusion to impute tabular data with many iterations. ForestDiff \citep{forest_diff} combines diffusion with gradient-boosted trees, offering a hybrid approach. These methods demonstrate strong performance in capturing complex global distributions but may face challenges with scalability and hyperparameter sensitivity \citep{mle_diff,vp}, and their iterative sampling nature may lead to slower inference times compared to autoregressive or flow-based models.

{\bf{Causally-aware and graph-based methods}}, such as MIRACLE \citep{miracle}, incorporate causal relationships to improve imputation accuracy, particularly in datasets with structural dependencies. Graph-based approaches, including GRAPE \citep{grape} and IGRM \citep{igrm}, model data as graphs to leverage relational information for imputation. These methods are effective in structured datasets but may require domain-specific knowledge to define graph structures accurately, leading to limitations such as scalability with large graphs or sensitivity to graph structure specification. 

{\bf{Adaptive and hybrid frameworks}} aim to combine the strengths of multiple imputation strategies. HyperImpute \citep{hyperimpute} employs automatic model selection, e.g., via meta-learning, to choose the best imputation method for a given dataset, enhancing generalizability. ReMasker \citep{du2024remasker} utilizes masked autoencoding to impute tabular data, drawing inspiration from self-supervised learning paradigms. Optimal transport-based methods, such as MOT \citep{mot} and TDM \citep{tdm}, frame imputation as a distribution matching problem, offering robust solutions for heterogeneous data. These approaches are highly flexible but may introduce additional computational overhead.

In summary, these data imputation methods have advanced the modeling of complex missingness patterns in datasets such as those from the UCI Machine Learning Repository and real-world applications. However, most struggle to simultaneously capture both global and local structures of the observed and missing entries, as well as their interdependencies. To address this, we propose a novel framework that effectively models these relationships. Our approach achieves SOTA performance under MCAR, MAR, and particularly MNAR conditions, with a significantly more efficient architecture compared to diffusion-based models like DIFFPUTER.

\section{Methodology}
\label{sec:method}
In this section, we describe each component of our framework step by step. The overall architecture, shown in Figure~\ref{fig:our_imputation_method}, consists of four major stages: Pre-processing, Warm-up Refinement, Diffusion Imputation, and Post-processing and Polishing, which we detail below.\\\\
\textbf{Preliminaries.}
We consider a dataset represented as a matrix $X \in \mathbb{R}^{m \times n}$, where $m$ is the number of samples and $n$ is the number of features. Missing entries in $X$ are indicated by a binary mask $M \in \{0,1\}^{m\times n}$, where $M_{i,j}=1$ means $X_{i,j}$ is missing and $M_{i,j}=0$ means it is observed. Our goal is to estimate all missing values $X_{i,j}$ for which $M_{i,j}=1$, using only the observed portions of the data. The missingness pattern can follow any of the standard mechanisms: MCAR, MAR, or MNAR (details are provided in Appendix A), and we consider both in-sample and out-of-sample scenarios when evaluating our method. Regardless of mechanism or data split, the core objective remains the same: to robustly impute missing entries across all three missingness settings and generalize well to unseen (out-of-sample) data.\\\\
\textbf{Pre-processing.} Real-world datasets can contain both numerical and categorical features. We first apply a type-preserving encoding and normalization pipeline to 
prepare the data for downstream modeling. Categorical features are binary-encoded, yielding $l$ additional binary indicator columns. These are concatenated with the $n$ numerical features to form an expanded matrix $X' \in \mathbb{R}^{m \times d}$ with $d = n + l$ columns. The mask $M$ is expanded correspondingly to $d$ columns (marking the new binary features as missing wherever the original categorical entry was missing). Next, we standardize each numerical feature using its mean $\mu$ and standard deviation $\sigma$ computed from the {\textit{non-missing training entries only}} (to avoid leakage of test information or missingness bias). The same $\mu$ and $\sigma$ are later used to standardize that feature in the out-of-sample set. All observed values in $X'$ are thus transformed to have zero mean and unit variance (approximately), while missing entries are temporarily filled with zeros as neutral placeholders. After pre-processing, we get a standardized matrix with normalized observed entries and zero-filled missing entries awaiting imputation.\\\\
\textbf{Warm-up Refinement.}
Before invoking the diffusion model, we perform a warm-up imputation that provides initial guesses for all missing values in $X'$. The idea is to leverage fast, local models to ease the subsequent global modeling task, by filling in plausible values; especially in heavy-MNAR or out-of-distribution cases, we reduce the burden on the diffusion model, which would otherwise have to start from arbitrary initializations (e.g., zero-filled vectors).
We adopt a simple column-wise regression strategy inspired by methods like MICE, but done in a single pass. Concretely, for each feature column $f_j$ (where $j=1,\dots,d$), we use the current partially imputed data to train a lightweight predictive model $\theta_1^{(j)}$ (e.g., a default XGBoost or CatBoost regressor/classifier). The model $\theta_1^{(j)}: \mathbb{R}^{d-1} \to \mathbb{R}$ is trained to predict feature $f_j$ from all other features: we take all samples $i$ where $f_j$ is observed ($M_{i,j}=0$) and use $(X'_{i,\setminus j},\,X'_{i,j})$ as input-output pairs to fit $\theta_1^{(j)}$. Once trained, we apply $\theta_1^{(j)}$ to estimate the missing entries in column $j$, i.e. for each $i$ with $M_{i,j}=1$ we set $({X_a})_{i,j} := \theta_1^{(j)}(X'_{i,\setminus j})$. We then discard $\theta_1^{(j)}$ and move to the next column. This process sweeps through the feature set $j=1$ to $d$ exactly once. Unlike iterative methods (EM, MICE) or recent models like DIFFPUTER that perform multiple passes, our warm-up produces a complete imputed dataset $X_a$ in {\textit{one pass}}, greatly improving efficiency.

{\textit{Properties of One-Pass Imputation}}: This refinement has three key properties: (i) {\textit{Non-overwriting}}: all originally observed entries remain unchanged in $X_a$; (ii) {\textit{Well-defined mapping}}: each missing entry is imputed by a deterministic function of that sample’s other features (through the learned model for that column); and (iii) {\textit{One-pass completion}}: each feature is processed once and each missing value is filled exactly one time, so the procedure terminates after a single sweep. These outcomes are guaranteed by construction, and we formalize them in Proposition 2 (Appendix B) and provide a proof there for completeness. In summary, the warm-up stage yields an initial imputed matrix $X_a$ that preserves all observed data and provides reasonable first-fill values for all missing entries.\\\\
\textbf{Diffusion Module.}
After warm-up, we feed the refined data $X_a$ into a generative diffusion model to {\textit{globally refine}} the imputation. The diffusion module treats the entire dataset (or each data sample) as input and performs a denoising diffusion process conditioned on the mask $M$. The core idea is to progressively replace the initial guesses in $X_a$ with more accurate values by sampling from a model of the joint data distribution, all while exactly preserving the observed entries. We adopt a continuous-time diffusion process: starting from $X_0 = X_a$, we gradually add Gaussian noise to corrupt $X_0$ into a noisy version $X_t$ (for $t$ increasing to some $T$), according to a noise schedule $\sigma(t)$ (details in Appendix C). A denoising network $\theta_2$ is trained to reverse this process, i.e., to predict the instantaneous noise at each step $t$ and push $X_t$ back toward the clean $X_0$. Crucially, $\theta_2$ is given access to the mask $M$ and is designed to leave observed components untouched: at each denoising step, we clamp $X_{t}$ at the observed coordinates to equal $X_{\!a}$, and update only the missing coordinates. This way, the diffusion never alters ground-truth data and focuses its denoising only on unknown parts. We use a lightweight yet expressive architecture for $\theta_2$, illustrated in Figure \ref{fig:denoising_network_ours}. 

\begin{figure}[t]
    \centering
\includegraphics[width=0.6\linewidth]{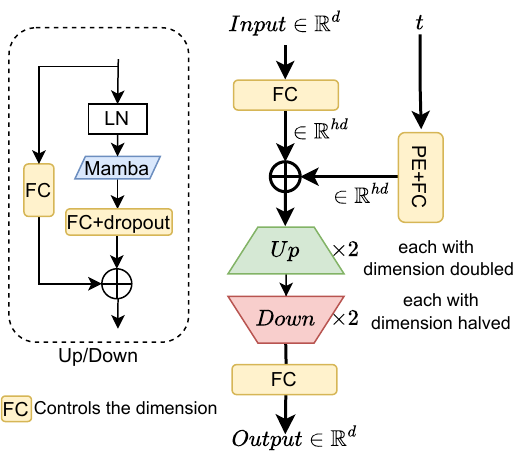}
    \caption{Illustration of the denoising network $\theta_2$, featuring a diamond-shaped structure with Mamba-based residual Up/Down blocks.}
    \label{fig:denoising_network_ours}
\vspace{-15pt}
\end{figure}
It has a symmetric ``diamond'' structure with two residual up-sampling blocks followed by two down-sampling blocks (each block built around the Mamba state-space layer to capture long-range dependencies efficiently). This design gives Transformer-like expressivity with linear complexity, enabling us to model high-dimensional data (many features and samples) without quadratic cost. Each block applies dimension doubling or halving (for up/down steps) with skip connections, fully connected layers (FC), positional encoding (PE), layer normalization, and dropout, yielding a robust multi-scale denoiser.

During training, we optimize $\theta_2$ using the EDM loss~\cite{karras2022elucidating}. Specifically, the loss is of the form:
\[
\mathcal{L}_{\text{SM}}(\theta_2) = \mathbb{E}_{X_0, \varepsilon, t} \left[ \left\| \theta_2(X_t, t, M) - \nabla_{X_t} \log p(X_t \mid X_0) \right\|_2^2 \right],
\]

This loss weights the denoising objective according to noise magnitude (so the model learns to handle all corruption levels), thus handling noise-adaptivity better than standard score-matching loss. 
At inference, we run the reverse diffusion: initialize $X_T$ as Gaussian noise (with observed entries still clamped to $X_{\!a}$), and iteratively denoise from $t=T$ down to $t=0$ using $\theta_2$. We perform $N$ stochastic runs of this reverse process for each input and average the results to obtain a final imputed output $\hat{X}_a$. This ensembling over multiple diffusion trajectories improves the robustness and stability of the imputation. 

We adopt the Variance Exploding (VE) SDE formulation for our diffusion process, as it provides higher noise levels at earlier timesteps, which empirically leads to improved sample diversity and better exploration of the missing-value space. VE SDE is particularly advantageous in imputation tasks under MNAR conditions, where the model benefits from starting from a more randomized state to avoid overfitting to local patterns. Unlike VP SDEs (e.g., DDPM), which gradually corrupt data with bounded noise, VE allows a more aggressive corruption and reconstruction process, which improves robustness in our experiments. We found this choice to yield more stable and diverse imputations. 

To provide a theoretical foundation for the validity of our masked diffusion-based imputation, we establish a conditional sampling guarantee under ideal training assumptions: 
The RefiDiff reverse process recovers the true conditional distribution of the missing values given the observed ones. This result and proof can be found in Appendix D.

To further strengthen the theoretical foundation of RefiDiff, we provide a quantitative bound on its imputation accuracy. Specifically, we analyze how closely the reverse diffusion process recovers the true conditional distribution $p(\mathbf{x}^{\text{mis}} \mid \mathbf{x}^{\text{obs}})$, in terms of the quality of the learned denoising score function, the discretization of the diffusion process, and the number of stochastic samples used. This result is formalized in the proposition below:
\begin{proposition}[Approximation Bound for RefiDiff]
\label{prop:approximation_bound_brief}
Under mild regularity conditions, the KL divergence between the RefiDiff-imputed distribution $\hat{p}(\mathbf{x}^{\text{mis}} \mid \mathbf{x}^{\text{obs}})$ and the true conditional distribution is bounded by:
\[
\mathrm{KL}\big(\hat{p}(\mathbf{x}^{\text{mis}} \mid \mathbf{x}^{\text{obs}}) \,\|\, p(\mathbf{x}^{\text{mis}} \mid \mathbf{x}^{\text{obs}})\big) \leq C_1 T \varepsilon_\theta^2 + C_2 \ \delta t + C_3 \  \frac{1}{N},
\]
where $\varepsilon_\theta$ is the error of the learned score function, $\delta t$ is the diffusion step size, $N$ is the number of reverse diffusion trajectories averaged, and $C_1, C_2, C_3$ are constants independent of the data.
\end{proposition}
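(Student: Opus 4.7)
The plan is to decompose the total KL divergence into three contributions matching the three terms on the right-hand side, bound each under mild regularity assumptions, and combine them via a chain-of-measures argument. I would introduce three intermediate path measures: $p^{\ast}$, the true masked reverse-SDE law whose terminal marginal on the missing coordinates equals $p(\mathbf{x}^{\text{mis}}\mid\mathbf{x}^{\text{obs}})$ (which exists by the conditional sampling guarantee referenced just above the proposition); $p^{\theta}$, the continuous-time reverse SDE driven by the learned denoiser $\theta_2$ with clamped observed coordinates; and $p^{\theta,\delta t}$, the Euler--Maruyama discretization with step size $\delta t$ that RefiDiff actually simulates. The estimator $\hat p$ is the $N$-trajectory ensemble drawn from $p^{\theta,\delta t}$. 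I would then bound $\mathrm{KL}(\hat p\,\|\,p^{\ast})$ by controlling the three pairwise gaps $(p^{\theta},p^{\ast})$, $(p^{\theta,\delta t},p^{\theta})$, and $(\hat p,p^{\theta,\delta t})$ separately.

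For the score-approximation contribution $C_1 T \varepsilon_\theta^2$, I would apply Girsanov's theorem to the path measures $p^{\theta}$ and $p^{\ast}$. Under the clamping convention, the two SDEs differ only in the drift on the missing coordinates, so the Radon--Nikodym exponent has the form
\[
\tfrac12\int_0^T \bigl\|\nabla_{X_t}\log p_t(X_t\mid \mathbf{x}^{\text{obs}}) - \theta_2(X_t,t,M)\bigr\|_2^2\,dt.
\]
Taking expectations and invoking the training assumption $\mathbb{E}\|\nabla\log p_t - \theta_2\|_2^2 \le \varepsilon_\theta^2$ uniformly in $t\in[0,T]$ produces the linear-in-$T$ factor. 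This step mirrors the standard score-matching analysis (e.g.\ in the style of Chen et al.), specialized to the conditional setting by treating the observed coordinates as fixed boundary data.

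For the discretization term $C_2\,\delta t$, I would invoke a classical Euler--Maruyama KL bound: assuming the learned score is Lipschitz in $X_t$ and the process has bounded second moments (part of the mild regularity conditions), the per-step KL contribution is $O(\delta t^2)$, and summing over $T/\delta t$ steps yields $O(T\,\delta t)$, with $T$ absorbed into $C_2$. For the ensemble term $C_3/N$, I would view $\hat p$ as obtained by averaging $N$ independent reverse trajectories from $p^{\theta,\delta t}$; under a bounded-Fisher-information assumption on $p^{\theta,\delta t}$, a variance bound followed by the $\mathrm{KL}\le\chi^2$ inequality delivers an expected KL of order $1/N$. The three bounds then combine additively along the chain $\hat p \to p^{\theta,\delta t}\to p^{\theta}\to p^{\ast}$ to give the claimed inequality, with constants $C_1,C_2,C_3$ that depend only on the Lipschitz constants, moment bounds, and Fisher-information factors and not on the data distribution.

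The step I expect to be the main obstacle is the ensemble term. Averaging $N$ samples does not literally define a KL-consistent density estimator without additional smoothing, so naively invoking empirical-process machinery yields a rate slower than $1/N$. The cleanest way around this is to interpret ``averaging the results'' as averaging the denoiser outputs (equivalently the predicted scores) across the $N$ reverse runs, rather than averaging the final samples. This converts ensembling into an $O(1/N)$ variance reduction on the effective drift, which then feeds back into the Girsanov computation and yields the $O(1/N)$ KL contribution cleanly. The score-matching and discretization pieces are then largely mechanical once the regularity assumptions (Lipschitz score, bounded second moments, finite Fisher information of $p^{\theta,\delta t}$) are stated precisely.
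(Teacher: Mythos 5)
Your decomposition is the same one the paper uses: the proof in Appendix E splits the KL into a score-approximation term (citing the convergence analyses of Chen et al.\ and Conforti et al.\ to get $C_1 T \varepsilon_\theta^2$), an Euler--Maruyama discretization term of order $\delta t$, and a Monte Carlo averaging term of order $1/N$, then combines them ``by triangle inequality for KL (or more carefully, via the chain rule for divergence across approximations).'' Where you differ is in the level of explicitness and, more substantively, in the third term. You unpack the first term with an explicit Girsanov computation on the clamped path measures, which is exactly the machinery underlying the cited results but is left implicit in the paper; that is a strictly more self-contained route. On the $1/N$ term you have identified a real weakness in the paper's own argument: the paper treats $\hat p$ as ``the empirical distribution obtained by averaging over $N$ reverse diffusion samples'' and invokes the law of large numbers and the Dvoretzky--Kiefer--Wolfowitz inequality to claim $\mathrm{KL}(\hat p \,\|\, p_{\text{discrete}}) \le C_3/N$. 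As you note, the KL divergence of an empirical (atomic) measure against a continuous law is infinite, and DKW controls sup-norm CDF distance, not KL, so the paper's version of this step does not actually deliver the stated bound without additional smoothing or a reinterpretation of what $\hat p$ is. Your proposed repair --- reading ``averaging the results'' as averaging the denoiser outputs across the $N$ runs, so that ensembling becomes an $O(1/N)$ variance reduction on the effective drift that feeds back into the Girsanov exponent --- is a defensible way to recover the $1/N$ rate and is closer to what RefiDiff actually does at inference (the method averages the $N$ imputed outputs to form a point estimate). Both you and the paper still lean on an additive combination of KL terms along the chain of measures, which requires justification since KL is not a metric; your path-measure framing makes that step easier to make rigorous via data processing from path space to terminal marginals, so on balance your route is the more careful one.
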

The full version of this proposition and its brief proof are provided in Appendix E. This bound shows that RefiDiff's imputation converges to the Bayes-optimal conditional mean as the score model improves, the diffusion steps are refined, and sufficient averaging is performed. Compared to prior qualitative results, this quantitative guarantee helps justify our design choices, including the use of ensembling, careful time discretization, and the progressive refinement pipeline. This bound also motivates our specific design strategies, including multi-trajectory averaging and a progressive denoising approach.\\\\
\textbf{Post-processing and Polishing.} 
Finally, we apply a brief post-processing and polishing step. After the polishing stage, we invert the earlier standardization and encoding: the entries of $\hat{X}$ are de-standardized by multiplying by $\sigma$ and adding $\mu$ for each feature, and binary-encoded categorical columns are decoded back to the original category labels. This yields the final imputed data matrix $\hat{X}$ in the original feature space. As an additional polishing, 
optionally, we perform a final sweep of the lightweight column-wise models $\theta_1^{(j)}$ on $\hat{X}_a$ to refine any residual discrepancies introduced during diffusion.
This polishing may help because diffusion may introduce slight distributional noise, a quick predictive pass can adjust the estimates to better conform to typical feature distributions. This polishing re-uses the same one-pass procedure described earlier, now applied to $\hat{X}_a$. The polishing ensures that $\hat{X}$ retains all originally observed entries (by construction) and that all imputed values are as reasonable and consistent as possible feature-wise. After post-processing, $\hat{X}$ is ready for downstream use. It contains no missing entries, preserves information from $X$, and reflects a blend of local predictive accuracy and global generative coherence.

Collectively, these stages (pre-processing, warm-up refinement, diffusion-based global refinement, and post-processing) form a theoretically grounded and practically robust imputation pipeline for diverse missingness patterns.
\section{Experiments}
In this section, we evaluate our framework against existing imputation methods on nine benchmark datasets, following the setup of DIFFPUTER~\cite{zhang2025diffputer}. Results are reported under three missingness mechanisms: MNAR, MCAR, and MAR, adhering to established protocols~\cite{zhang2025diffputer,du2024remasker}. Unlike prior work~\cite{du2024remasker,zhang2025diffputer} that omits some dataset-mechanism combinations, we conduct a complete evaluation for all three types. To ensure fairness, all methods are tested with identical 10 random masks. For baselines, we use official implementations with recommended hyperparameters. Performance is measured only on missing entries, as observed values remain unchanged.\\
\begin{table*}[t]
\centering

\setlength{\tabcolsep}{1mm}
\begin{tabular}{lcccc|cccc|cccc|c}
\toprule
\multirow{3}{*}{Method} 
& \multicolumn{4}{c}{\textbf{MNAR}} 
& \multicolumn{4}{c}{\textbf{MCAR}} 
& \multicolumn{4}{c}{\textbf{MAR}}  & \multirow{3}{*}{Rank ($\downarrow$)} \\
& \multicolumn{2}{c}{In-Sample} & \multicolumn{2}{c}{Out-of-Sample}
& \multicolumn{2}{c}{In-Sample} & \multicolumn{2}{c}{Out-of-Sample}
& \multicolumn{2}{c}{In-Sample} & \multicolumn{2}{c}{Out-of-Sample}\\
& MAE & RMSE & MAE & RMSE 
& MAE & RMSE & MAE & RMSE 
& MAE & RMSE & MAE & RMSE\\
\midrule
EM & 42.42 & 82.30 & 46.13 & 107.00 & 38.70 & 67.42 & 40.93 & 89.71 & 42.57 & 82.63 & 43.72 & 91.08 & 5.42 \\
MIWAE & 68.21 & 121.21 & 65.99 & 110.34 & 62.55 & 102.88 & 62.59 & 101.61 & 68.24 & 122.50 & 65.85 & 112.69 & 10.08 \\
GAIN & 75.84 & 126.83 & 73.26 & 117.78 & 67.02 & 104.05 & 66.38 & 104.74 & 82.85 & 138.06 & 77.00 & 126.87 & 11.75 \\
SoftImpute & 59.82 & 103.66 & 59.68 & 99.27 & 52.74 & 84.01 & 53.95 & 87.48 & 60.80 & 104.45 & 59.30 & 99.16 & 7.92 \\
MICE & 69.01 & 124.49 & 70.83 & 614.63 & 65.66 & 95.08 & 65.60 & 94.31 & 69.47 & 108.12 & 73.17 & 1062.95 & 10.67 \\
MIRACLE & 54.54 & 110.86 & 55.86 & 110.21 & 48.47 & 89.45 & 51.24 & 99.10 & 59.48 & 117.85 & 64.17 & 116.85 & 8.67 \\
KNN & 55.95 & 108.99 & 51.30 & 92.93 & 46.42 & 82.48 & 46.39 & 80.90 & 43.22 & 88.43 & 46.45 & 87.82 & 6.42 \\
MissForest & 46.50 & 89.51 & 46.13 & 83.89 & 43.01 & 75.27 & 43.05 & 74.18 & 48.21 & 94.38 & 46.05 & 86.80 & 5.58 \\
HyperImpute & 37.95 & 79.61 & 38.45 & 104.61 & 34.51 & 65.22 & 36.38 & 158.27 & 38.62 & 80.81 & 38.89 & 111.65 & 4.67 \\
DIFFPUTER & 37.27 & 86.86 & 34.54 & 72.73 & 31.72 & 63.49 & 31.39 & 61.85 & 39.15 & 90.95 & 35.32 & 76.09 & \underline{2.67} \\
ReMasker & 39.66 & 80.23 & 39.52 & 74.14 & 35.84 & 65.19 & 35.84 & 64.15 & 38.39 & 78.82 & 38.06 & 74.90 & 3.00 \\
\rowcolor{blue!10} Ours & 34.49 & 78.83 & 34.38 & 70.12 & 31.41 & 63.16 & 32.20 & 63.11 & 34.52 & 78.22 & 34.43 & 73.82 & \textbf{1.17} \\
\bottomrule
\end{tabular}
\caption{Performance comparison across all methods and settings for numerical columns. We report MAE and RMSE for in-sample and out-of-sample imputation under three missingness mechanisms: MNAR, MCAR, and MAR. Lower values indicate better performance. The best and second-best average ranks are highlighted in \textbf{bold} and \underline{underline}, respectively.}
\label{tab:main_reg}
\end{table*}
\begin{table}[t]
\centering

\setlength{\tabcolsep}{1mm}

\begin{tabular}{lcc|cc|cc|c}
\toprule
\multirow{2}{*}{Method} 
& \multicolumn{2}{c}{\textbf{MNAR}} 
& \multicolumn{2}{c}{\textbf{MCAR}} 
& \multicolumn{2}{c}{\textbf{MAR}} & Rank \\
& IS & OOS
& IS & OOS & IS & OOS & $\downarrow$ \\

\midrule
EM & 58.15 & 57.77 & 58.26 & 58.08 & 54.77 & 57.81 & 4.67 \\
MIWAE & 41.74 & 42.06 & 41.91 & 42.07 & 40.57 & 41.99 & 12.00 \\
GAIN & 48.31 & 46.98 & 48.86 & 47.62 & 44.99 & 47.21 & 9.33 \\
SI & 47.86 & 47.11 & 47.25 & 47.35 & 45.67 & 46.84 & 9.67 \\
MICE & 45.52 & 45.53 & 45.41 & 45.54 & 43.33 & 45.84 & 11.00 \\
Miracle & 54.97 & 54.57 & 55.22 & 54.46 & 49.84 & 51.43 & 7.33 \\
KNN & 53.77 & 54.09 & 54.14 & 54.01 & 53.37 & 55.65 & 7.50 \\
MF & 55.34 & 55.25 & 55.76 & 55.69 & 52.25 & 57.55 & 6.00 \\
HI & 59.69 & 58.96 & 60.14 & 58.59 & 54.42 & 57.04 & 4.50 \\
DP & 60.07 & 60.49 & 60.26 & 60.36 & 57.24 & 60.85 & 3.00 \\
RM & 63.01 & 62.92 & 63.25 & 63.04 & 59.88 & 64.43 & \underline{1.83} \\
\rowcolor{blue!10} Ours & 63.19 & 63.08 & 63.56 & 63.20 & 60.05 & 64.35 & \textbf{1.17} \\
\bottomrule
\end{tabular}
\caption{Performance comparison across all methods and settings for categorical columns. We report accuracy for in-sample (IS) and out-of-sample (OOS) imputation under all missingness mechanisms. Higher values indicate better performance. The best and second-best average ranks are highlighted in \textbf{bold} and \underline{underline}, respectively.}
\label{tab:main_acc}
\vspace{-20pt}
\end{table}
\begin{figure}[t]
    \centering
    
    \begin{subfigure}[t]{0.48\linewidth}
    
        \centering        
        \includegraphics[width=\linewidth]{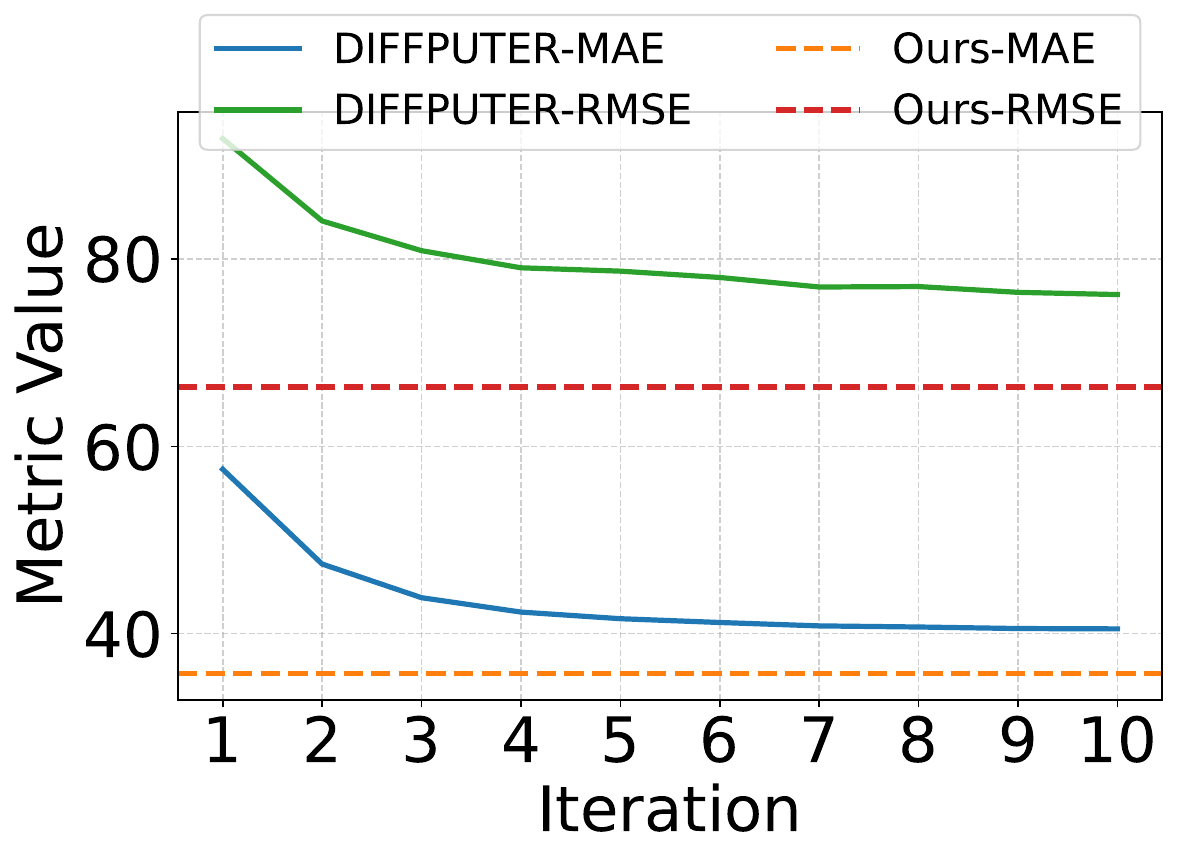}
        \caption{California}
        \label{subfig:california_iter}
    \end{subfigure}
    \hfill
    \begin{subfigure}[t]{0.48\linewidth}
        \centering
        \includegraphics[width=\linewidth]{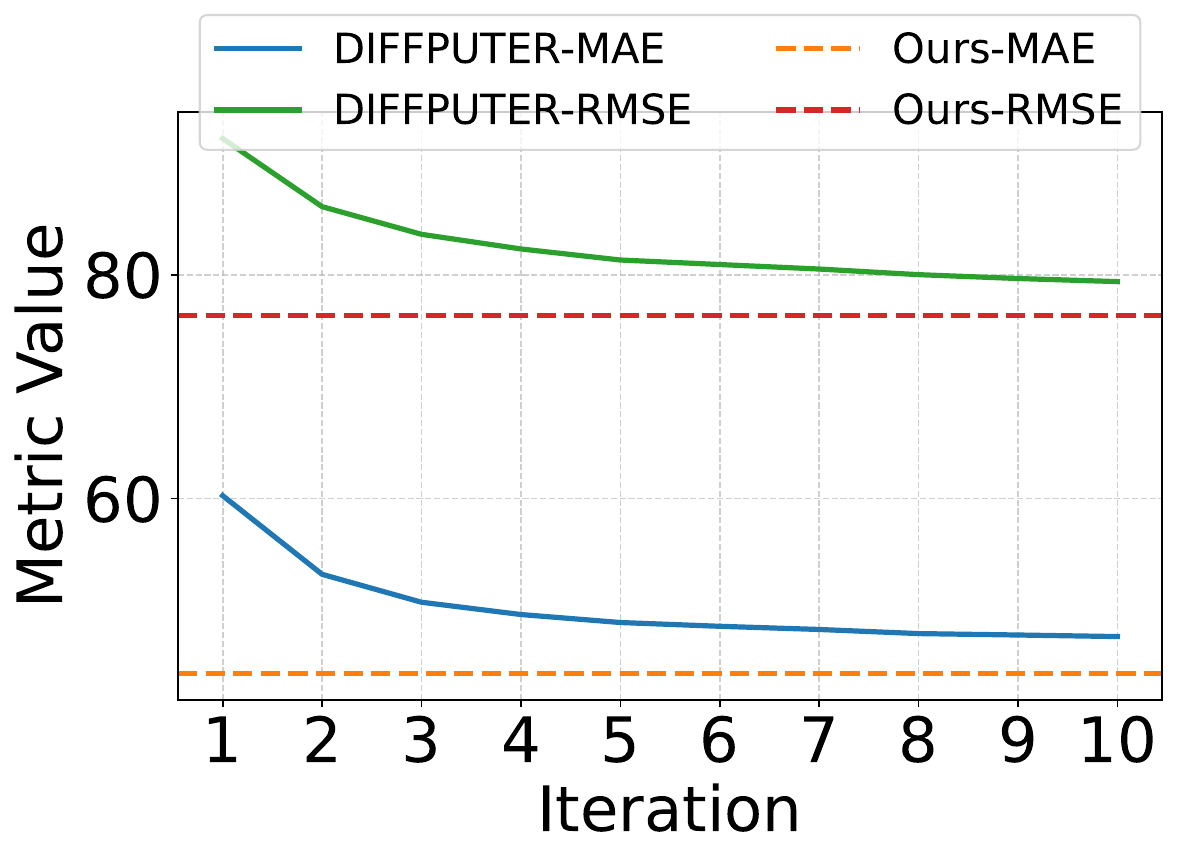}
        \caption{Magic}
        \label{subfig:magic_iter}
    \end{subfigure}
    \hfill
    \begin{subfigure}[t]{0.48\linewidth}
        \centering
        \includegraphics[width=\linewidth]{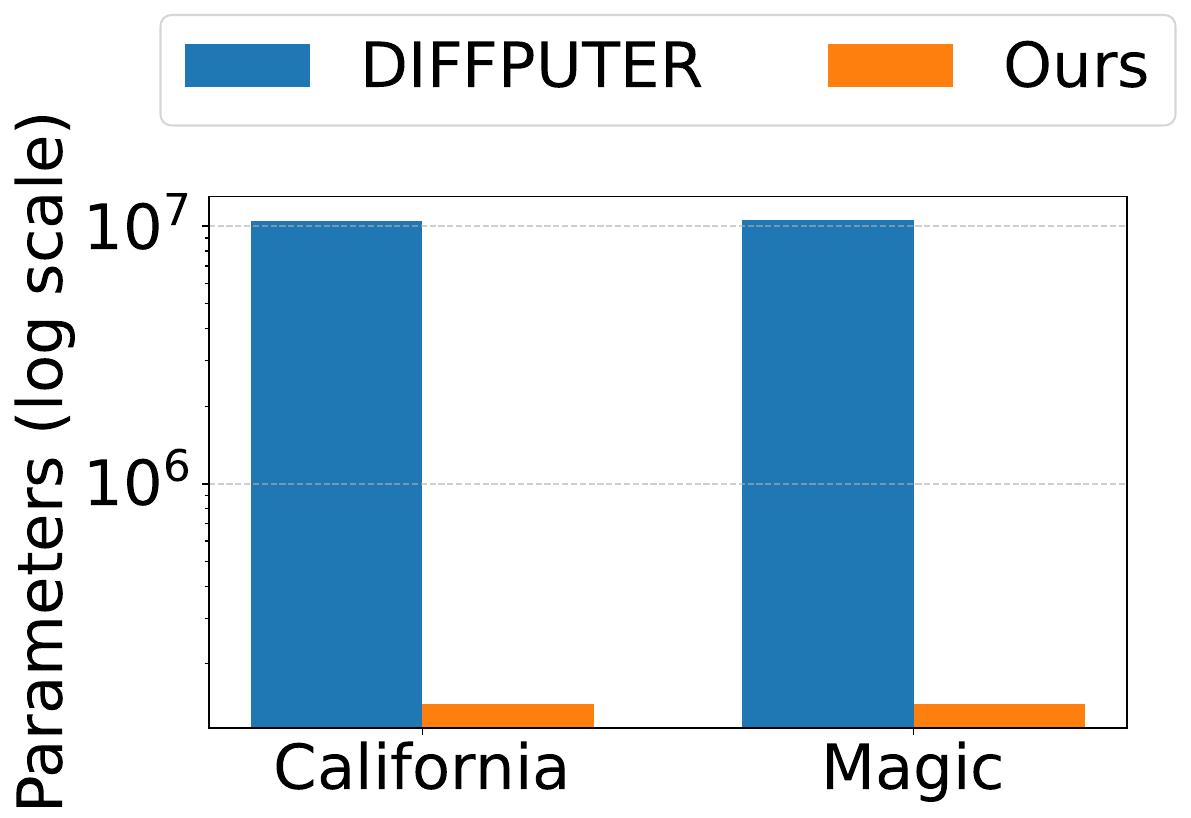}
        \caption{Parameters}
        \label{subfig:denoising_param}
        
    \end{subfigure}
    \hfill
    \begin{subfigure}[t]{0.48\linewidth}
        \centering
        \includegraphics[width=\linewidth]{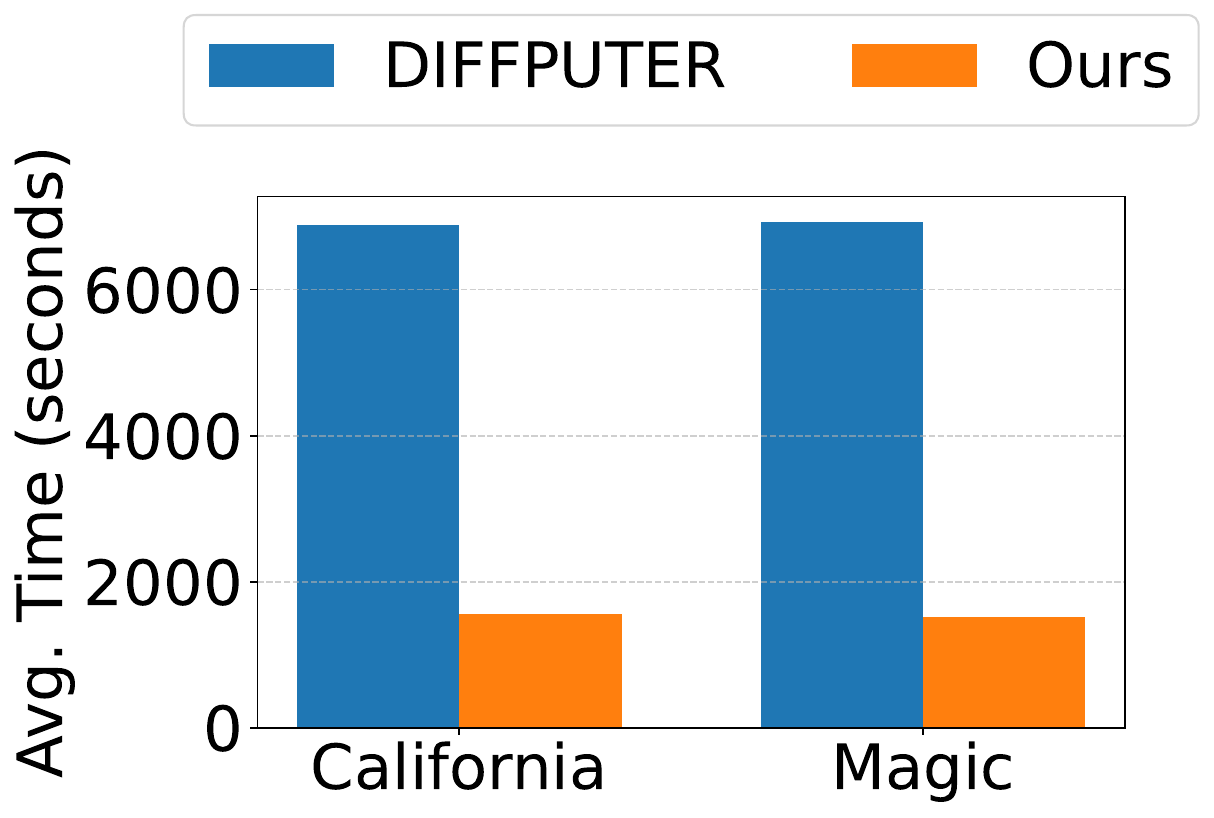}
        \caption{Run time}
        \label{subfig:run_time}
    
    \end{subfigure}
    \caption{Comparison between DIFFPUTER and our method on two datasets (California and Magic) under the MNAR setting. (a) and (b) show in-sample MAE and RMSE over iterations. (c) compares denoising network parameter counts. (d) presents average runtime over 10 random masks.}
    \label{fig:efficiency}
    \vspace{-10pt}
\end{figure}
\begin{figure}[t]
    \centering
    
    \begin{subfigure}[t]{\linewidth}
    
        \centering        
        \includegraphics[width=\linewidth]{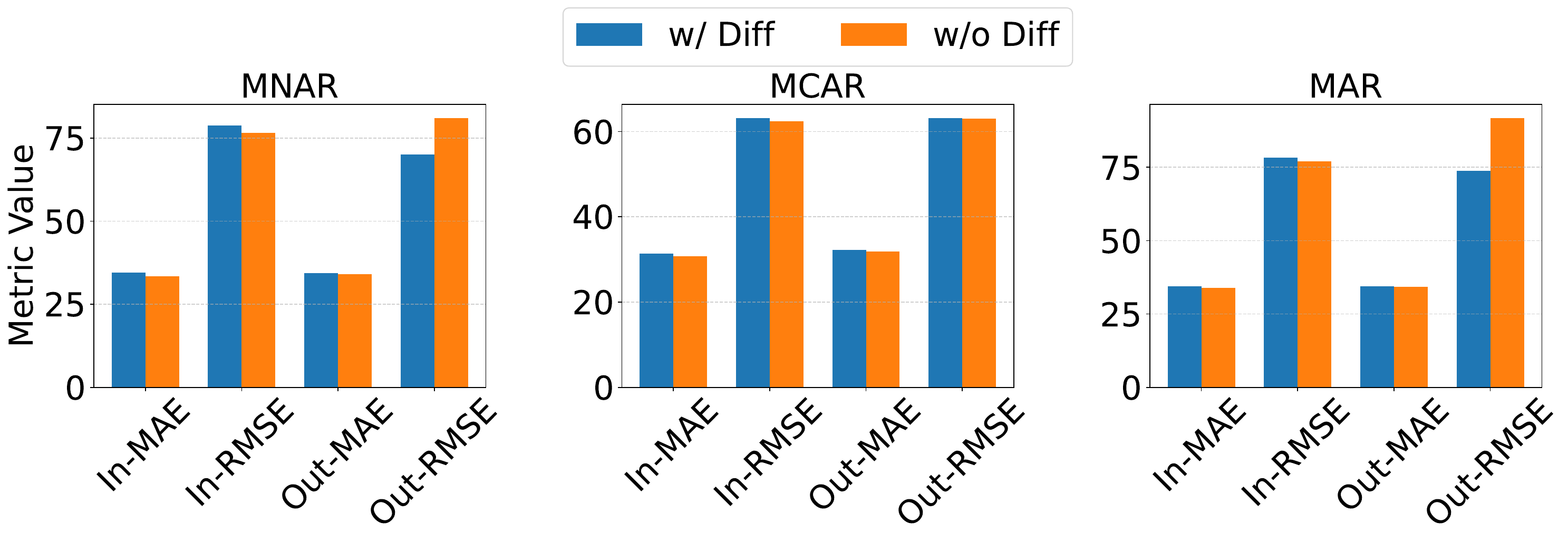}
        \caption{With and without diffusion}
        \label{subfig:ablation_diff}
    \end{subfigure}
    \hfill
    \begin{subfigure}[t]{\linewidth}
        \centering
        \includegraphics[width=\linewidth]{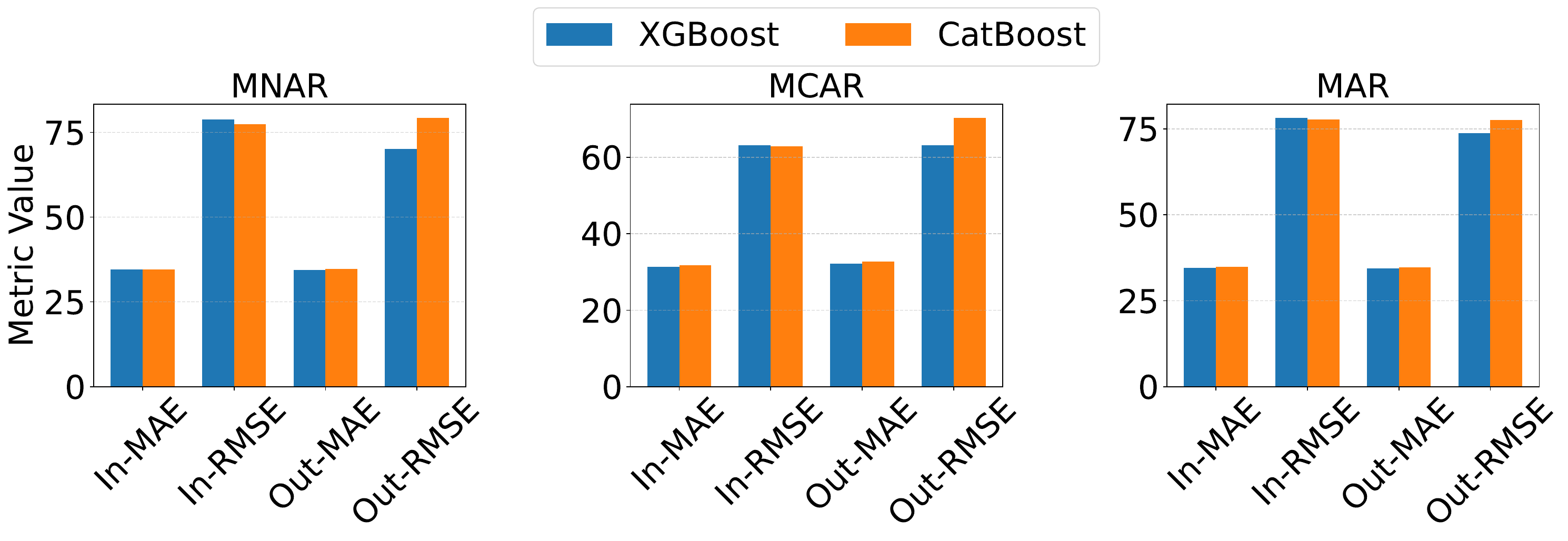}
        \caption{Ablation on regressor choice}
        \label{subfig:ablation_regressor}
    \end{subfigure}
    
    \caption{Ablation experiments of our proposed framework in various settings and components.}
    \label{fig:ablation}
    \vspace{-15pt}
\end{figure}

\noindent{\textbf{Datasets and Evaluation Metrics.}}
We evaluate all methods on nine real datasets~\citep{zhang2025diffputer}: five with only numerical features (California, Magic, Bean, Gesture, Letter) and four with mixed features (Default, News, Adult, Shoppers). Performance is measured by MAE and RMSE for numerical attributes and accuracy for categorical ones. Dataset details are in Appendix F.\\
\textbf{Data Processing.}
All features (numerical and categorical) are merged into a unified data matrix and standardized (Section~\ref{sec:method} Pre-processing). We use 70\% of the data as in-sample and 30\% as out-of-sample, introducing 30\% missingness via binary masks for MNAR, MCAR, and MAR. Separate masks are used for in- and out-of-sample evaluations. Categorical features are binary-encoded and zero-padded for uniform width, while missing values are zero-initialized and masked to prevent access to ground truth during training. Following DIFFPUTER, categorical columns are de-standardized and decoded for accuracy, while numerical columns remain standardized for MAE and RMSE. Full de-standardization can be applied if needed. \\
\textbf{Baselines and Implementation.}
For benchmarking, we compare our model, RefiDiff with default structures and hyperparameters, against a carefully selected group of eleven baselines, representing classical and SOTA models. The baselines include EM~\cite{em}, MIWAE~\cite{miwae}, GAIN~\cite{gain}, SoftImpute (SI)~\cite{softimpute}, MICE~\cite{mice}, MIRACLE (Miracle)~\cite{miracle}, KNN, MissForest (MF)~\cite{missforest}, HyperImpute (HI)~\cite{hyperimpute}, DIFFPUTER (DP)~\cite{zhang2025diffputer}, and ReMasker (RM)~\cite{du2024remasker}, which are representative of traditional iterative, matrix completion-based imputation techniques, deep learning-based methods, adversarial learning, diffusion methods. We use official implementations and recommended settings for all baselines, evaluating them under the same masks, metrics, and data splits for fairness. Implementation and hyperparameter details are in Appendix G.\\
\textbf{Result Analysis.} We evaluate all methods under three missingness mechanisms using in- and out-of-sample metrics. Tables~\ref{tab:main_reg} and \ref{tab:main_acc} show averaged results across datasets and masks, with detailed results in Appendix K. 

Table~\ref{tab:main_reg} reports MAE/RMSE (scaled by $10^{-2}$ for better readability) for numerical imputation, and Table~\ref{tab:main_acc} shows categorical accuracy. RefiDiff achieves the best or second-best performance in most cases, with an average rank of 1.17, outperforming SOTA baselines like DIFFPUTER, ReMasker, and HyperImpute.

In Table~\ref{tab:main_reg}, our method achieves consistently lower MAE and RMSE across all missingness types, demonstrating strong generalization for both numerical and categorical features. It notably surpasses SOTA baselines, particularly in the challenging MNAR scenario, due to our effective denoising and refinement stages. Our model shows statistically significant improvement over baseline models. We validate our results with statistical significance tests in Appendix H.

Figure~\ref{fig:efficiency} compares our method with DIFFPUTER in terms of convergence, complexity, and runtime. Figures~\ref{subfig:california_iter}-\ref{subfig:magic_iter} show in-sample MAE and RMSE for California and Magic datasets under MNAR. While DIFFPUTER improves gradually over iterations, our method achieves strong results without iterations, showing its ability to produce high-quality imputations efficiently. Figure~\ref{subfig:denoising_param} shows that our denoising network uses far fewer parameters than DIFFPUTER, showing both effectiveness and memory efficiency.

Figure~\ref{subfig:run_time} compares average runtime over 10 random masks. DIFFPUTER’s iterative process makes it much slower, while our method is about four times faster with lower computational cost. Both methods were tested on the same setup (NVIDIA V100 GPU, 8 CPU cores, 32 GB RAM). These results confirm that our design is both accurate and efficient, with variation analysis in Appendix I, further verifying RefiDiff’s stability and generalization.

\section{Ablation Study and Sensitivity Analysis}
To analyze the contributions of individual components, we perform ablation and sensitivity studies. The ablation examines the impact of key design choices, such as the diffusion module and regression model, on imputation performance. Sensitivity analysis explores the effect of varying the denoising network architecture and the number of sampling trials in reverse diffusion. Together, these evaluations reveal the framework's robustness, efficiency, and generalization.

\textbf{Effectiveness of the Diffusion Module.}
To evaluate the impact of the diffusion module within our framework, we conduct an ablation study comparing the full model ("w/ Diff") to a reduced variant that includes only the warm-up and polishing stages ("w/o Diff"). As shown in Figure~\ref{subfig:ablation_diff}, we report in-sample and out-of-sample MAE and RMSE under all three missingness mechanisms: MCAR, MAR, and MNAR. While the "w/o Diff" model achieves slightly lower in-sample MAE in the MCAR setting, it performs notably worse in out-of-sample RMSE, especially under MAR and MNAR. Specifically, the out-of-sample RMSE increases from 73.82 to 91.80 under MAR and from 70.12 to 81.07 under MNAR when diffusion is removed. These results highlight that the diffusion module plays a vital role in improving generalization and capturing complex feature interactions, particularly under more difficult missingness scenarios.

\textbf{Ablation on Regressor Choice.} The warm-up and polishing stages in RefiDiff rely on a regression model to estimate missing numerical values. To evaluate the sensitivity of our framework to the choice of regressor, we compare XGBoost~\cite{chen2016xgboost}, which serves as the default in our main experiments, with CatBoost~\cite{dorogush2018catboost} as an alternative. As shown in Figure~\ref{subfig:ablation_regressor}, both regressors perform competitively across all missingness types and evaluation metrics. While CatBoost yields slightly better in-sample RMSE under the MCAR setting, XGBoost consistently achieves lower out-of-sample MAE and RMSE, particularly in the more challenging MAR and MNAR scenarios. These results justify our use of XGBoost as the default regressor, as it provides a favorable balance between accuracy and robustness. Importantly, both variants outperform existing SOTA baselines, reinforcing the generalizability and effectiveness of our overall framework.

Additional experiments in Appendix J further demonstrate RefiDiff's versatility. When integrated into DIFFPUTER as a replacement denoiser (Figure 8 in Appendix J), our $\theta_2$ yields comparable performance, underscoring its plug-and-play capability. We also analyze the relationship between imputation performance and the Mamba block's hidden dimension size and its selective modeling of dependencies (Figures 9 and 10 in Appendix J), providing insights into optimal model configuration and the mechanism behind its effectiveness.

\section{Conclusion}
This paper presents RefiDiff, a framework for robust data imputation in high-dimensional, mixed-type datasets with complex missingness. RefiDiff bridges gaps in existing methods by integrating local and global data characteristics via a progressive pre- and post-refinement strategy. Locally, it uses machine learning predictions, while globally, a Mamba-based denoising network captures feature and sample dependencies~\citep{ahamed2024mambatab}. Pre-refinement generates initial imputations, refined post-hoc for accuracy and stability, enabling tuning-free imputation across diverse datasets. By encoding mixed-type data into unified tokens, RefiDiff ensures robust performance. Evaluations on nine real-world datasets show RefiDiff outperforms state-of-the-art methods in MCAR, MAR, and MNAR scenarios, excelling in MNAR with 4x faster training than the DIFFPUTER approach. Its efficiency and user-friendly design make RefiDiff ideal for practical applications. While binary encoding ensures compatibility with continuous diffusion, future work could explore native treatments of categorical variables to improve semantic fidelity. We also plan to extend RefiDiff to streaming data, adaptive refinement for sparse datasets, and applications in domains like healthcare and finance.
\section{Acknowledgments}
This work was supported in part by the NSF under Grants IIS 2327113 and ITE 2433190; and the NIH under Grants P30AG072946. We would like to thank the NSF support for AI research resources with NAIRR240219, Jetstream2, PSC, and the University of Kentucky Center for Computational Sciences and Information Technology Services Research Computing for their support and use of the LCC.
\bibliography{2.references}

@inproceedings{
zhang2025diffputer,
    title={DiffPuter: An {EM}-Driven Diffusion Model for Missing Data Imputation},
    author={Hengrui Zhang and Liancheng Fang and Qitian Wu and Philip S. Yu},
    booktitle={The Thirteenth International Conference on Learning Representations},
    year={2025},
    url={https://openreview.net/forum?id=3fl1SENSYO}
}

@inproceedings{
du2024remasker,
    title={ReMasker: Imputing Tabular Data with Masked Autoencoding},
    author={Tianyu Du and Luca Melis and Ting Wang},
    booktitle={The Twelfth International Conference on Learning Representations},
    year={2024},
    url={https://openreview.net/forum?id=KI9NqjLVDT}
}

@misc{gesture,
  author       = {Madeo, Renata and Wagner, Priscilla and Peres, Sarajane},
  title        = {Gesture Phase Segmentation},
  year         = {2013},
  publisher    = {UCI Machine Learning Repository},
  doi          = {10.24432/C5Z32C},
  note         = {{DOI}: https://doi.org/10.24432/C5Z32C}
}

@misc{letter,
  author       = {Slate, David},
  title        = {{Letter Recognition}},
  year         = {1991},
  howpublished = {UCI Machine Learning Repository},
  note         = {{DOI}: https://doi.org/10.24432/C5ZP40}
}

@article{pace1997sparse,
  title={Sparse spatial autoregressions},
  author={Pace, R Kelley and Barry, Ronald},
  journal={Statistics \& Probability Letters},
  volume={33},
  number={3},
  pages={291--297},
  year={1997},
  publisher={Elsevier}
}

@book{geron2022hands,
  title={Hands-on Machine Learning with Scikit-Learn, Keras, and TensorFlow: Concepts, Tools, and Techniques to Build Intelligent Systems},
  author={G{\'e}ron, Aur{\'e}lien},
  year={2022},
  publisher={O'Reilly Media, Inc.}
}

@misc{magic,
  author       = {Bock, R.},
  title        = {{MAGIC Gamma Telescope}},
  year         = {2004},
  howpublished = {UCI Machine Learning Repository},
  note         = {{DOI}: https://doi.org/10.24432/C52C8B}
}

@article{Koklu2020MulticlassCO,
  title={Multiclass classification of dry beans using computer vision and machine learning techniques},
  author={Murat Koklu and Ilker Ali {\"O}zkan},
  journal={Comput. Electron. Agric.},
  year={2020},
  volume={174},
  pages={105507},
  url={https://api.semanticscholar.org/CorpusID:219762890}
}

@misc{adult,
  author       = {Becker, Barry and Kohavi, Ronny},
  title        = {{Adult}},
  year         = {1996},
  howpublished = {UCI Machine Learning Repository},
  note         = {{DOI}: https://doi.org/10.24432/C5XW20}
}

@misc{default,
  author       = {Yeh, I-Cheng},
  title        = {{Default of Credit Card Clients}},
  year         = {2009},
  howpublished = {UCI Machine Learning Repository},
  note         = {{DOI}: https://doi.org/10.24432/C55S3H}
}

@misc{shoppers,
  author       = {Sakar, C. and Kastro, Yomi},
  title        = {{Online Shoppers Purchasing Intention Dataset}},
  year         = {2018},
  howpublished = {UCI Machine Learning Repository},
  note         = {{DOI}: https://doi.org/10.24432/C5F88Q}
}

@misc{news,
author         = {Fernandes, Kelwin and Vinagre, Pedro and Cortez, Paulo                        and Sernadela, Pedro},
  title        = {{Online News Popularity}},
  year         = {2015},
  howpublished = {UCI Machine Learning Repository},
  note         = {{DOI}: https://doi.org/10.24432/C5NS3V}
}

@inproceedings{pytorch,
  title={PyTorch: an imperative style, high-performance deep learning library},
  author={Paszke, Adam and Gross, Sam and Massa, Francisco and Lerer, Adam and Bradbury, James and Chanan, Gregory and Killeen, Trevor and Lin, Zeming and Gimelshein, Natalia and Antiga, Luca and others},
  booktitle={Proceedings of the 33rd International Conference on Neural Information Processing Systems},
  pages={8026--8037},
  year={2019}
}

@article{missing-reason-1,
  title={Applications of multiple imputation in medical studies: from {AIDS} to {NHANES}},
  author={Barnard, John and Meng, Xiao-Li},
  journal={Statistical Methods in Medical Research},
  volume={8},
  number={1},
  pages={17--36},
  year={1999},
  publisher={Sage Publications Sage CA: Thousand Oaks, CA}
}

@inproceedings{knn,
  title={K-nearest neighbor (k-NN) based missing data imputation},
  author={Pujianto, Utomo and Wibawa, Aji Prasetya and Akbar, Muhammad Iqbal and others},
  booktitle={2019 5th International Conference on Science in Information Technology (ICSITech)},
  pages={83--88},
  year={2019},
  organization={IEEE}
}

@article{softimpute,
  title={Matrix completion and low-rank {SVD} via fast alternating least squares},
  author={Hastie, Trevor and Mazumder, Rahul and Lee, Jason D and Zadeh, Reza},
  journal={The Journal of Machine Learning Research},
  volume={16},
  number={1},
  pages={3367--3402},
  year={2015},
  publisher={JMLR. org}
}

@inproceedings{repaint,
  title={Repaint: Inpainting using denoising diffusion probabilistic models},
  author={Lugmayr, Andreas and Danelljan, Martin and Romero, Andres and Yu, Fisher and Timofte, Radu and Van Gool, Luc},
  booktitle={Proceedings of the IEEE/CVF Conference on Computer Vision and Pattern Recognition},
  pages={11461--11471},
  year={2022}
}

@inproceedings{forest_diff,
  title={Generating and Imputing Tabular Data via Diffusion and Flow-based Gradient-Boosted Trees},
  author={Jolicoeur-Martineau, Alexia and Fatras, Kilian and Kachman, Tal},
  booktitle={International Conference on Artificial Intelligence and Statistics},
  pages={1288--1296},
  year={2024},
  organization={PMLR}
}

@inproceedings{tabddpm,
  title={Tabddpm: Modelling tabular data with diffusion models},
  author={Kotelnikov, Akim and Baranchuk, Dmitry and Rubachev, Ivan and Babenko, Artem},
  booktitle={International Conference on Machine Learning},
  pages={17564--17579},
  year={2023},
  organization={PMLR}
}

@inproceedings{edm,
  title={Elucidating the design space of diffusion-based generative models},
  author={Karras, Tero and Aittala, Miika and Aila, Timo and Laine, Samuli},
  booktitle={Proceedings of the 36th International Conference on Neural Information Processing Systems},
  pages={26565--26577},
  year={2022}
}

@article{em-early,
  title={Maximum likelihood from incomplete data via the {EM} algorithm},
  author={Dempster, Arthur P and Laird, Nan M and Rubin, Donald B},
  journal={Journal of the Royal Statistical Society: Series B (methodological)},
  volume={39},
  number={1},
  pages={1--22},
  year={1977},
  publisher={Wiley Online Library}
}

@article{vae,
  title={Auto-encoding variational bayes},
  author={Kingma, Diederik P and Welling, Max},
  journal={arXiv preprint arXiv:1312.6114},
  year={2013}
}

@inproceedings{gan,
  title={Generative adversarial nets},
  author={Goodfellow, Ian J and Pouget-Abadie, Jean and Mirza, Mehdi and Xu, Bing and Warde-Farley, David and Ozair, Sherjil and Courville, Aaron and Bengio, Yoshua},
  booktitle={Proceedings of the 27th International Conference on Neural Information Processing Systems},
  pages={2672--2680},
  year={2014}
}

@inproceedings{normalizing_flows,
  title={Variational inference with normalizing flows},
  author={Rezende, Danilo and Mohamed, Shakir},
  booktitle={International Conference on Machine Learning},
  pages={1530--1538},
  year={2015},
  organization={PMLR}
}

@inproceedings{ddpm,
  title={Denoising Diffusion Probabilistic Models},
  author={Ho, Jonathan and Jain, Ajay and Abbeel, Pieter},
  booktitle={Proceedings of the 34th International Conference on Neural Information Processing Systems},
  pages={6840--6851},
  year={2020}
}

@inproceedings{tabcsdi,
  title={Diffusion models for missing value imputation in tabular data},
  author={Zheng, Shuhan and Charoenphakdee, Nontawat},
  booktitle={NeurIPS 2022 First Table Representation Workshop},
  year={2022}
}

@inproceedings{mle_diff,
  title={Maximum Likelihood Training of Score-Based Diffusion Models},
  author={Song, Yang and Durkan, Conor and Murray, Iain and Ermon, Stefano},
  booktitle={Advances in Neural Information Processing Systems},
  year={2021}
}

@inproceedings{vp,
  title={Score-Based Generative Modeling through Stochastic Differential Equations},
  author={Song, Yang and Sohl-Dickstein, Jascha and Kingma, Diederik P and Kumar, Abhishek and Ermon, Stefano and Poole, Ben},
  booktitle={The Ninth International Conference on Learning Representations},
  year={2021}
}

@article{em,
  title={Pattern classification with missing data: a review},
  author={Garc{\'\i}a-Laencina, Pedro J and Sancho-G{\'o}mez, Jos{\'e}-Luis and Figueiras-Vidal, An{\'\i}bal R},
  journal={Neural Computing and Applications},
  volume={19},
  pages={263--282},
  year={2010},
  publisher={Springer}
}

@article{mice,
  title={{MICE}: Multivariate imputation by chained equations in {R}},
  author={Van Buuren, Stef and Groothuis-Oudshoorn, Karin},
  journal={Journal of Statistical Software},
  volume={45},
  pages={1--67},
  year={2011}
}

@article{miracle,
  title={Miracle: Causally-aware imputation via learning missing data mechanisms},
  author={Kyono, Trent and Zhang, Yao and Bellot, Alexis and van der Schaar, Mihaela},
  journal={Advances in Neural Information Processing Systems},
  volume={34},
  pages={23806--23817},
  year={2021}
}

@inproceedings{hyperimpute,
  title={Hyperimpute: Generalized iterative imputation with automatic model selection},
  author={Jarrett, Daniel and Cebere, Bogdan C and Liu, Tennison and Curth, Alicia and van der Schaar, Mihaela},
  booktitle={International Conference on Machine Learning},
  pages={9916--9937},
  year={2022},
  organization={PMLR}
}

@inproceedings{gain,
  title={Gain: Missing data imputation using generative adversarial nets},
  author={Yoon, Jinsung and Jordon, James and Schaar, Mihaela},
  booktitle={International Conference on Machine Learning},
  pages={5689--5698},
  year={2018},
  organization={PMLR}
}

@inproceedings{miwae,
  title={{MIWAE}: Deep generative modelling and imputation of incomplete data sets},
  author={Mattei, Pierre-Alexandre and Frellsen, Jes},
  booktitle={International Conference on Machine Learning},
  pages={4413--4423},
  year={2019},
  organization={PMLR}
}

@article{missvae,
  title={Handling incomplete heterogeneous data using vaes},
  author={Nazabal, Alfredo and Olmos, Pablo M and Ghahramani, Zoubin and Valera, Isabel},
  journal={Pattern Recognition},
  volume={107},
  pages={107501},
  year={2020},
  publisher={Elsevier}
}

@inproceedings{adam,
  author    = {Diederik P. Kingma and
               Jimmy Ba},
  title     = {Adam: {A} Method for Stochastic Optimization},
  booktitle = {International Conference on Learning Representations},
  year      = {2015}
}

@article{diffimpute,
  title={{DiffImpute}: Tabular Data Imputation With Denoising Diffusion Probabilistic Model},
  author={Wen, Yizhu and Yi, Kai and Ke, Jing and Shen, Yiqing},
  journal={arXiv preprint arXiv:2403.13863},
  year={2024}
}

@article{missdiff,
  title={{Missdiff}: Training diffusion models on tabular data with missing values},
  author={Ouyang, Yidong and Xie, Liyan and Li, Chongxuan and Cheng, Guang},
  journal={arXiv preprint arXiv:2307.00467},
  year={2023}
}

@inproceedings{mcflow,
  title={Mcflow: Monte carlo flow models for data imputation},
  author={Richardson, Trevor W and Wu, Wencheng and Lin, Lei and Xu, Beilei and Bernal, Edgar A},
  booktitle={Proceedings of the IEEE/CVF Conference on Computer Vision and Pattern Recognition},
  pages={14205--14214},
  year={2020}
}

@article{missforest,
  title={{MissForest}—non-parametric missing value imputation for mixed-type data},
  author={Stekhoven, Daniel J and B{\"u}hlmann, Peter},
  journal={Bioinformatics},
  volume={28},
  number={1},
  pages={112--118},
  year={2012},
  publisher={Oxford University Press}
}

@inproceedings{tdm,
  title={Transformed distribution matching for missing value imputation},
  author={Zhao, He and Sun, Ke and Dezfouli, Amir and Bonilla, Edwin V},
  booktitle={International Conference on Machine Learning},
  pages={42159--42186},
  year={2023},
  organization={PMLR}
}

@inproceedings{mot,
  title={Missing data imputation using optimal transport},
  author={Muzellec, Boris and Josse, Julie and Boyer, Claire and Cuturi, Marco},
  booktitle={International Conference on Machine Learning},
  pages={7130--7140},
  year={2020},
  organization={PMLR}
}

@article{grape,
  title={Handling missing data with graph representation learning},
  author={You, Jiaxuan and Ma, Xiaobai and Ding, Yi and Kochenderfer, Mykel J and Leskovec, Jure},
  journal={Advances in Neural Information Processing Systems},
  volume={33},
  pages={19075--19087},
  year={2020}
}

@inproceedings{igrm,
  title={Data imputation with iterative graph reconstruction},
  author={Zhong, Jiajun and Gui, Ning and Ye, Weiwei},
  booktitle={Proceedings of the AAAI Conference on Artificial Intelligence},
  volume={37},
  number={9},
  pages={11399--11407},
  year={2023}
}

@book{luenberger1997optimization,
  title={Optimization by Vector Space Methods},
  author={Luenberger, David G},
  year={1997},
  publisher={John Wiley \& Sons}
}

@inproceedings{kang2016top,
  title={Top-n recommender system via matrix completion},
  author={Kang, Zhao and Peng, Chong and Cheng, Qiang},
  booktitle={Proceedings of the AAAI Conference on Artificial Intelligence},
  volume={30},
  number={1},
  year={2016}
}

@inproceedings{karras2022elucidating,
  title={Elucidating the design space of diffusion-based generative models},
  author={Karras, Tero and Aittala, Miika and Aila, Timo and Laine, Samuli},
  booktitle={Advances in Neural Information Processing Systems},
  volume={35},
  pages={26565--26577},
  year={2022}
}

@inproceedings{ahamed2024mambatab,
  title={{MambaTab}: A plug-and-play model for learning tabular data},
  author={Ahamed, Md Atik and Cheng, Qiang},
  booktitle={2024 IEEE 7th International Conference on Multimedia Information Processing and Retrieval (MIPR)},
  pages={369--375},
  year={2024},
  organization={IEEE}
}

@article{IsmailFawaz2018deep,
  Title                    = {Deep learning for time series classification: a review},
  Author                   = {Ismail Fawaz, Hassan and Forestier, Germain and Weber, Jonathan and Idoumghar, Lhassane and Muller, Pierre-Alain},
  journal                  = {Data Mining and Knowledge Discovery},
  Year                     = {2019},
  volume                   = {33},
  number                   = {4},
  pages                    = {917--963},
}

@article{ahamed2025tscmamba,
  title={{TSCMamba}: Mamba meets multi-view learning for time series classification},
  author={Ahamed, Md Atik and Cheng, Qiang},
  journal={Information Fusion},
  volume={120},
  pages={103079},
  year={2025},
  publisher={Elsevier}
}

@article{demvsar2006statistical,
  title={Statistical comparisons of classifiers over multiple data sets},
  author={Dem{\v{s}}ar, Janez},
  journal={Journal of Machine Learning Research},
  volume={7},
  number={Jan},
  pages={1--30},
  year={2006}
}

@article{gu2023mamba,
  title={Mamba: Linear-time sequence modeling with selective state spaces},
  author={Gu, Albert and Dao, Tri},
  journal={arXiv preprint arXiv:2312.00752},
  year={2023}
}

@article{scikit-learn,
  title={Scikit-learn: Machine Learning in {P}ython},
  author={Pedregosa, F. and Varoquaux, G. and Gramfort, A. and Michel, V.
          and Thirion, B. and Grisel, O. and Blondel, M. and Prettenhofer, P.
          and Weiss, R. and Dubourg, V. and Vanderplas, J. and Passos, A. and
          Cournapeau, D. and Brucher, M. and Perrot, M. and Duchesnay, E.},
  journal={Journal of Machine Learning Research},
  volume={12},
  pages={2825--2830},
  year={2011}
}

@article{dorogush2018catboost,
  title={CatBoost: gradient boosting with categorical features support},
  author={Dorogush, Anna Veronika and Ershov, Vasily and Gulin, Andrey},
  journal={arXiv preprint arXiv:1810.11363},
  year={2018}
}

@inproceedings{chen2016xgboost,
  title={Xgboost: A scalable tree boosting system},
  author={Chen, Tianqi and Guestrin, Carlos},
  booktitle={Proceedings of the 22nd acm sigkdd international conference on knowledge discovery and data mining},
  pages={785--794},
  year={2016}
}

@inproceedings{song2021scorebased,
  title={Score-Based Generative Modeling through Stochastic Differential Equations},
  author={Yang Song and Jascha Sohl-Dickstein and Diederik P. Kingma and Abhishek Kumar and Stefano Ermon and Ben Poole},
  booktitle={Advances in Neural Information Processing Systems (NeurIPS)},
  year={2021},
  volume={34},
  pages={17144--17158},
  publisher={Curran Associates, Inc.}
}

@article{salimans2022progressive,
  title={Progressive distillation for fast sampling of diffusion models},
  author={Salimans, Tim and Ho, Jonathan},
  journal={arXiv preprint arXiv:2202.00512},
  year={2022}
}

@inproceedings{chen2023sampling,
  title = {Sampling is as Easy as Learning the Score: Theory for Diffusion Models with Minimal Data Assumptions},
  author = {Chen, Sitan and Chewi, Sinho and Li, Jerry and Li, Yuanzhi and Salim, Adil and Zhang, Anru R.},
  booktitle = {Proceedings of the 11th International Conference on Learning Representations (ICLR)},
  year = {2023},
  url = {https://arxiv.org/abs/2209.11215}
}

@article{conforti2023score,
  title = {KL Convergence Guarantees for Score Diffusion Models under Minimal Data Assumptions},
  author = {Conforti, Giovanni and Durmus, Alain and Silveri, Marta G.},
  journal = {SIAM Journal on Mathematics of Data Science},
  volume = {7},
  number = {1},
  pages = {86--109},
  year = {2025},
  url = {https://epubs.siam.org/doi/10.1137/23M1613670}
}

@misc{kelly2023uci,
  title={The UCI machine learning repository},
  author={Kelly, Markelle and Longjohn, Rachel and Nottingham, Kolby},
  year={2023}
}
\appendix
\large\textbf{APPENDIX}
\setcounter{secnumdepth}{1}
\section{Missingness Mechanisms}
\label{app:missingness-mechanisms}
We evaluate imputation performance under three widely adopted missingness mechanisms: MCAR, MAR, and MNAR. These settings determine how the missing entries are distributed within the dataset.

\begin{itemize}
    \item \textbf{Missing Completely At Random (MCAR):} Missing entries are placed uniformly at random, independent of any values in the dataset. This represents the simplest missingness pattern.

    \item \textbf{Missing At Random (MAR):} The pattern of missingness depends on the observed entries but not on the values that are missing. For example, a column may have missing values only when another observed column has a specific range of values.

    \item \textbf{Missing Not At Random (MNAR):} The probability of an entry being missing depends on its own (unobserved) value. This is the most realistic and challenging setting, where missingness is often driven by latent or unrecorded factors.
\end{itemize}

We use the same mask-generation procedures as prior work~\cite{zhang2025diffputer}, and our implementation ensures a consistent evaluation by applying these masks across all methods and datasets. The code for generating these masks is included in the supplementary materials.

\section{Proposition~\ref{thm:warmup} and Proof}
\label{app:standard_properties}
\begin{proposition}
\label{thm:warmup}
Let $Z \in \mathbb{R}^{m \times d}$ be the standardized and binary-encoded data matrix, and $M \in \{0,1\}^{m \times d}$ be the binary mask, where $M_{i,j} = 0$ indicates that the entry $Z_{i,j}$ is observed (non-missing) and $M_{i,j} = 1$ indicates that it is missing. For each feature index $j \in \{1, \dots, d\}$, let $\theta_1^{(j)}: \mathbb{R}^{d-1} \rightarrow \mathbb{R}$ be a predictive function trained using the observed pairs $\mathcal{D}_j := \left\{ (Z_{k, \setminus j}, Z_{k,j}) \mid M_{k,j} = 0, \ \text{for any} \ k \le m \right\}$. Define the imputed matrix $\hat{Z} \in \mathbb{R}^{m \times d}$ entry-wise as:
\[
\hat{Z}_{i,j} =
\begin{cases}
Z_{i,j}, & \text{if } M_{i,j} = 0, \\
\theta_1^{(j)}(Z_{i, \setminus j}), & \text{if } M_{i,j} = 1.
\end{cases}
\]

Then, the following properties hold:
\begin{enumerate}
    \item \textbf{Non-overwriting:} For all $(i,j)$ such that $M_{i,j} = 0$, we have $\hat{Z}_{i,j} = Z_{i,j}$.
    
    \item \textbf{Well-defined mapping:} For all $(i,j)$ such that $M_{i,j} = 1$, the imputed value $\hat{Z}_{i,j}$ is a measurable function of $Z_{i, \setminus j}$ with respect to the model $\theta_1^{(j)}$ trained on $\mathcal{D}_j$.
    
    \item \textbf{One-pass process:} Each column is processed once, and each missing entry is imputed a single time. The full imputation completes in one pass over the feature set.
\end{enumerate}
\end{proposition}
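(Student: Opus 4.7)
The plan is to verify the three properties directly from the piecewise definition of $\hat{Z}_{i,j}$, since the proposition is essentially a structural statement about the one-pass warm-up procedure rather than an analytic result. I would organize the argument as three short paragraphs, one per property, with no appeal to convergence or iterative reasoning.

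For non-overwriting, the proof is immediate: whenever $M_{i,j}=0$, the top branch of the piecewise definition applies, so $\hat{Z}_{i,j}=Z_{i,j}$ by inspection. For the well-defined mapping claim, I would note that for each column $j$, the training set $\mathcal{D}_j$ is a deterministic function of the pair $(Z,M)$, and a standard regressor (XGBoost or CatBoost, as used in the main text, or in fact any Borel-measurable learner) produces a fixed Borel-measurable prediction function $\theta_1^{(j)}\colon\mathbb{R}^{d-1}\to\mathbb{R}$ once training on $\mathcal{D}_j$ is complete. Consequently, for any $(i,j)$ with $M_{i,j}=1$, the imputed value $\theta_1^{(j)}(Z_{i,\setminus j})$ is a measurable function of $Z_{i,\setminus j}$, as required. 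The subtlety worth flagging is that $\mathcal{D}_j$ is built only from rows with $M_{k,j}=0$, so the training set never depends on values that are being imputed within the same pass, eliminating any circularity.

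For the one-pass property I would argue by construction: the outer loop iterates $j=1,\dots,d$ exactly once, and within iteration $j$ the update touches only the entries $\{(i,j):M_{i,j}=1\}$ in column $j$. Since the column index never repeats and the mask $M$ is fixed throughout, each missing entry is assigned exactly once, and the procedure terminates after $d$ iterations, yielding a complete matrix $\hat{Z}$ whose construction never required evaluating a fixed point. The main obstacle is not mathematical difficulty but bookkeeping: one must be careful to state that $\theta_1^{(j)}$ is trained on observed rows of the original $Z$ (not on partially-imputed intermediate states), which is what makes the pass truly ``one-shot'' and distinguishes it from MICE-style chained equations. I would close by remarking that these three properties are precisely what justify passing $X_a$ to the diffusion module as a fixed, observation-preserving warm start, with no convergence argument needed.
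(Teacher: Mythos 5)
Your proposal is correct and follows essentially the same route as the paper: each of the three properties is verified directly from the piecewise definition, with the measurability of the imputed entries reduced to the observation that a deterministic (or seed-fixed) learner trained on $\mathcal{D}_j$ yields a fixed measurable map $\theta_1^{(j)}$ -- a point the paper isolates as a small lemma but argues identically. The only cosmetic difference is that you state the no-circularity observation (training only on rows with $M_{k,j}=0$, with $Z$ fixed throughout) explicitly, which the paper leaves implicit in its one-pass paragraph.
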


In the following, we will provide a proof of this Proposition. 

\label{app:proof_warmup}
\begin{proof}
Let $Z \in \mathbb{R}^{m \times d}$ and $M \in \{0,1\}^{m \times d}$ be the given input matrix and mask, where $M_{i,j} = 1$ denotes a missing entry with value to be predicted $\hat{Z}_{i,j}$ and $M_{i,j} = 0$ denotes an observed entry with non-missing value $Z_{i,j}$. For each feature index $j \in \{1, \dots, d\}$, let $\theta_1^{(j)}: \mathbb{R}^{d-1} \rightarrow \mathbb{R}$ be a predictive function trained on the dataset:
\[
\mathcal{D}_j = \left\{ (Z_{k, \setminus j}, Z_{k,j}) \mid M_{k,j} = 0 \right\}.
\]
Define $\mathcal{F}j: \mathbb{R}^{d-1} \to \mathbb{R}$ as the function implemented by $\theta_1^{(j)}$. Also, we define the imputed matrix $\hat{Z} \in \mathbb{R}^{m \times d}$ entry-wise by:
\[
\hat{Z}_{ i,j} =
\begin{cases}
Z_{i,j}, & \text{if } M_{i,j} = 0, \\
\theta_1^{(j)}(Z_{i, \setminus j}), & \text{if } M_{i,j} = 1.
\end{cases}
\]

We now prove each of the three properties stated in Proposition~\ref{thm:warmup}.

\textbf{1. Non-overwriting.}  
Let $(i,j)$ be such that $M_{i,j} = 0$. By construction of the imputed matrix, we directly assign:
\[
\hat{Z}_{i,j} = Z_{i,j}.
\]
No transformation, learning, or approximation is applied to observed entries. Hence, $\hat{Z}_{i,j} = Z_{i,j}$ for all non-missing entries, as required.

\textbf{2. Well-defined mapping.}  
Let $(i,j)$ be such that $M_{i,j} = 1$. Then the value $\hat{Z}_{i,j}$ is computed as:
\[
\hat{Z}_{i,j} = \theta_1^{(j)}(Z_{i, \setminus j}),
\]
where $\theta_1^{(j)}$ is trained on $\mathcal{D}_j = \{(Z_{k, \setminus j}, Z_{k,j}) \mid M_{k,j} = 0\}$. We now establish that $\theta_1^{(j)}$ defines a measurable mapping under mild assumptions.

\begin{lemma}
\label{lem:measurable_mapping}
Suppose the training algorithm used to construct $\theta_1^{(j)}$ is either deterministic or conditioned on fixed random seeds. Then, for each $j$, the function $\theta_1^{(j)}$ is measurable, and hence the mapping is well-defined and measurable.
\end{lemma}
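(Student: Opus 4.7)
The plan is to proceed in three logical steps: first reduce the (possibly randomized) training procedure to a deterministic mapping via the seed-conditioning assumption; then argue that the resulting trained predictor is a Borel-measurable function on $\mathbb{R}^{d-1}$; and finally invoke closure of measurable functions under composition to conclude that the pointwise imputation rule $Z_{i,\setminus j} \mapsto \theta_1^{(j)}(Z_{i,\setminus j})$ is well-defined and measurable.

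First, I would fix an index $j$ and the observed training set $\mathcal{D}_j$. By hypothesis, the training algorithm is either deterministic or equipped with a fixed random seed $s$; in the latter case, conditioning on $s$ turns the procedure into a deterministic map $A$ from the pair $(\mathcal{D}_j, s)$ into a parameter space $\Theta$. Thus we may write $\theta_1^{(j)} = A(\mathcal{D}_j, s)$, and since both $\mathcal{D}_j$ and $s$ are fixed by assumption, $\theta_1^{(j)}$ is a single, unambiguous element of $\Theta$, i.e., a well-defined deterministic function from $\mathbb{R}^{d-1}$ to $\mathbb{R}$. This already secures the ``well-defined'' half of the conclusion.

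Second, I would verify that this fixed function is Borel-measurable. For the tree-based predictors actually used in the warm-up stage (XGBoost or CatBoost), the output is a finite sum of piecewise-constant functions induced by axis-aligned threshold splits; the preimage of any Borel set under such a function is a finite union of half-open rectangles, which is manifestly Borel. More generally, any model whose forward pass is assembled from continuous operations (linear layers, standard activations, softmax) together with finitely many indicator-based branches yields a Borel-measurable function, covering essentially every standard regressor or classifier that might be plugged into this stage.

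Finally, the imputation map acts on $Z_{i,\setminus j} \in \mathbb{R}^{d-1}$, which is itself a Borel-measurable coordinate projection of $Z$. Since composition of Borel-measurable functions is Borel-measurable, $Z \mapsto \theta_1^{(j)}(Z_{i,\setminus j})$ is measurable, completing the proof. The main obstacle I anticipate is purely notational: cleanly disentangling the dependence of $\theta_1^{(j)}$ on the fixed training data $\mathcal{D}_j$ and seed $s$ from its dependence on the test input $Z_{i,\setminus j}$, so that the measurability assertion is made with respect to the right sigma-algebra on $\mathbb{R}^{d-1}$. Once this separation is explicit, the remaining argument reduces to standard closure properties and presents no substantive mathematical difficulty.
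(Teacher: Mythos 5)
Your proposal is correct and follows essentially the same route as the paper's proof: fix the seed to make training deterministic, observe that the resulting tree-based predictor is built from elementary (piecewise-constant, threshold-based) operations and is hence Borel-measurable, and conclude by closure under composition. Your version merely spells out the preimage argument for axis-aligned splits and the coordinate-projection composition more explicitly than the paper does.
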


\begin{proof}[Proof of Lemma~\ref{lem:measurable_mapping}]
Most standard supervised learning models (e.g., gradient-boosted decision trees, CatBoost, XGBoost) operate by computing a function over finite data $\mathcal{D}_j$, possibly using random subsampling or initialization. If the internal randomness (e.g., seeds, feature sampling) is fixed, then the output model is fully determined by $\mathcal{D}_j$. Therefore, $\theta_1^{(j)}$ is a deterministic function of its inputs, and since it is composed of elementary functions (e.g., trees, weighted sums, thresholds), it is measurable. The composition $\hat{Z}_{i,j} = \theta_1^{(j)}(Z_{i, \setminus j})$ is therefore also measurable.
\end{proof}

This completes the justification for the well-definedness of the imputation for missing entries.

\textbf{3. One-pass process.}  
The imputation algorithm proceeds by iterating once through each feature index $j \in \{1, \dots, d\}$. For each column $j$, a single model $\theta_1^{(j)}$ is trained using only the observed entries in column $j$. Once trained, this model is used to impute missing entries in that column, i.e., those with $M_{i,j} = 1$.

Either in the warm-up or in the polishing stage, no column is revisited after imputation, and no entry is updated more than once. Consequently, the algorithm completes in a single pass over the feature set and produces a complete matrix $\hat{Z}$ with well-defined values for all $(i,j)$.
\end{proof}

\section{Diffusion Details}
\label{app:imputation_diffusion_details}
\subsection{Diffusion Process Used in This Paper}
After the warm-up stage, we obtain $X_a \in \mathbb{R}^{m \times d}$, where missing entries are estimated from the warm-up stage, and observed entries are preserved. Specifically, for all $(i,j)$ such that $M_{i,j} = 0$, we have $(X_a)_{i,j} = X_{i,j}$. The goal of this section is to provide a theoretical foundation for the diffusion-based imputation stage that follows.
For simplicity of notation, we write $X_t$ and $X_0$ as full matrices, but the forward and reverse diffusion processes are applied to each row (i.e., each data sample $x \in \mathbb{R}^d$).
\paragraph{Forward Process.} Starting from $X_0 = X_a$, we define the forward diffusion process as adding Gaussian noise with increasing scale:
\[
X_t = X_0 + \sigma(t) \cdot \varepsilon, \quad \varepsilon \sim \mathcal{N}(0, I),
\]
where $\sigma(t)$ is a monotonically increasing noise schedule, and $X_t$ is the perturbed input at time $t$. The forward distribution is thus:
\[
p(X_t \mid X_0) = \mathcal{N}(X_t; X_0, \sigma^2(t) I).
\]
Let $M$ be the binary mask indicating missing entries. We denote $X_t^{\text{obs}}$ and $X_t^{\text{mis}}$ as the observed and missing components of $X_t$ respectively, based on $M$.

\paragraph{Loss.} 
The denoising network $\theta_2$ is trained to approximate the conditional score function $\nabla_{X_t} \log p(X_t \mid X_0)$, which, under the known Gaussian kernel, has a closed-form expression:
\[
\nabla_{X_t} \log p(X_t \mid X_0) = -\frac{X_t - X_0}{\sigma^2(t)} = -\frac{\varepsilon}{\sigma(t)}.
\]
Thus, the training objective becomes:
\[
\mathcal{L}_{\text{SM}}(\theta_2) = \mathbb{E}_{X_0, \varepsilon, t} \left[ \left\| \theta_2(X_t, t, M) - \nabla_{X_t} \log p(X_t \mid X_0) \right\|_2^2 \right],
\]
which, in practice, reduces to:
\[
\mathcal{L}_{\text{SM}}(\theta_2) = \mathbb{E}_{X_0, \varepsilon, t} \left[ \left\| \theta_2(X_t, t, M) + \frac{\varepsilon}{\sigma(t)} \right\|_2^2 \right].
\]

\paragraph{Reverse Process.}
We now describe the reverse-time stochastic differential equation (SDE) governing the generation of clean data from noise, following the Variance Exploding (VE) framework. Given the forward process defined by
\[
X_t = X_0 + \sigma(t) \cdot \varepsilon, \quad \varepsilon \sim \mathcal{N}(0, I),
\]
the corresponding reverse-time SDE is:
\[
dX_t = -2 \dot{\sigma}(t) \cdot \sigma(t) \cdot \nabla_{X_t} \log p(X_t) \, dt + \sqrt{2 \dot{\sigma}(t) \cdot \sigma(t)} \, dW_t,
\]
where $\sigma(t)$ is the noise scale and $\dot{\sigma}(t)$ denotes its derivative with respect to time. This formulation ensures that as time evolves from $t = T$ to $t = 0$, the process denoises the sample towards the clean data distribution.

In practice, the score function $\nabla_{X_t} \log p(X_t)$ is unknown and is approximated by a denoising network $\theta_2(X_t, t, M)$, trained to estimate the conditional score $\nabla_{X_t} \log p(X_t \mid X_0)$ over the missing entries. The reverse process is discretized as:
\begin{align}
\nonumber X_{t - \Delta t}^{\text{mis}} = &X_t^{\text{mis}} - 2 \dot{\sigma}(t) \cdot \sigma(t) \cdot \theta_2(X_t, t, M) \\
&   + \sqrt{2 \dot{\sigma}(t) \cdot \sigma(t)} \cdot z,\\
X_{t - \Delta t}^{\text{obs}} = &X_a^{\text{obs}},
\end{align}
where the mask $M$ ensures that observed entries remain unchanged at all steps, $z \sim \mathcal{N}(0, I)$, and $\theta_2$ only updates the missing components. The functions $2 \dot{\sigma}(t) \cdot \sigma(t)$ and $\sqrt{2 \dot{\sigma}(t) \cdot \sigma(t)}$ play the roles of $\alpha(t)$ and $\beta(t)$, respectively, and are derived from the continuous noise schedule $\sigma(t)$ used in the forward process.

We now establish that our specific implementation of RefiDiff yields samples from the desired conditional distribution, as stated in Proposition~\ref{thm:diffusion_imputation}.

\begin{proposition}
\label{thm:diffusion_imputation}
Let $X_a \in \mathbb{R}^{m \times d}$ be the warm-up imputed matrix with binary mask $M$, and assume that the denoising model $\theta_2$ approximates the conditional score function $\nabla_{X_t} \log p(X_t \mid X_0)$ for all $t$, where $X_0 = X_a$. Then, the reverse diffusion process with observed values clamped, initialized from $X_T \sim \mathcal{N}(0, \sigma^2(T) I)$ and updated via:
\[
X_{t - \Delta t}^{\text{mis}} = X_t^{\text{mis}} + \alpha(t) \cdot \theta_2(X_t, t, M) + \beta(t) \cdot z_t
\]
yields a sample $\hat{X}_a$ such that:
\[
\hat{X}_a^{\text{obs}} = X_a^{\text{obs}}, \quad \hat{X}_a^{\text{mis}} \sim p(X^{\text{mis}} \mid X^{\text{obs}} = X_a^{\text{obs}})
\]
\end{proposition}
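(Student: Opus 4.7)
The plan is to handle the observed and missing coordinates separately. The observed case is immediate: by the update rule $X_{t-\Delta t}^{\text{obs}} = X_a^{\text{obs}}$, the observed coordinates are never perturbed throughout the reverse trajectory, so by induction on the discretization steps $T, T-\Delta t, \dots, 0$ we obtain $\hat{X}_a^{\text{obs}} = X_a^{\text{obs}}$ deterministically. This discharges the first part of the conclusion with no probabilistic content.

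For the missing coordinates, my approach has three steps. First, I would invoke the reverse-time SDE result of Anderson, adapted to the VE formulation reviewed in Appendix C: if the forward SDE is $dX_t = \sqrt{2\dot\sigma(t)\sigma(t)}\,dW_t$ with marginal $p_t$, then the time-reversed process $dX_t = -2\dot\sigma(t)\sigma(t)\,\nabla_{X_t}\log p_t(X_t)\,dt + \sqrt{2\dot\sigma(t)\sigma(t)}\,d\bar W_t$, initialized from $p_T$, has marginal $p_0$ at $t=0$. Second, I would connect the training target to this true marginal score via the denoising score matching identity: minimizing $\mathbb{E}\|\theta_2(X_t,t,M) - \nabla_{X_t}\log p(X_t\mid X_0)\|^2$ is equivalent up to a data-independent constant to minimizing $\mathbb{E}\|\theta_2(X_t,t,M) - \nabla_{X_t}\log p_t(X_t)\|^2$, so under the idealized assumption of the proposition we may treat $\theta_2$ as equal to the marginal score. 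Third, since $\theta_2$ receives the mask $M$ and the clamped observed entries as input at every step, the effective score acting on the missing coordinates is $\nabla_{X_t^{\text{mis}}}\log p_t(X_t^{\text{mis}}, X_t^{\text{obs}})$ evaluated at $X_t^{\text{obs}}=X_a^{\text{obs}}$, which is precisely $\nabla_{X_t^{\text{mis}}}\log p_t(X_t^{\text{mis}}\mid X_t^{\text{obs}}=X_a^{\text{obs}})$ by Bayes' rule. Applying Anderson's theorem to this conditional density and letting $t\to 0$ yields $\hat{X}_a^{\text{mis}} \sim p(X^{\text{mis}}\mid X^{\text{obs}}=X_a^{\text{obs}})$.

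The main obstacle is the middle step: the hard-clamping update $X_t^{\text{obs}} \equiv X_a^{\text{obs}}$ is not identical to the principled RePaint-style strategy of replacing observed coordinates by their properly noised versions $X_a^{\text{obs}} + \sigma(t)\varepsilon$, so one has to argue that with the mask passed explicitly to $\theta_2$, the score the network targets is already the conditional score at the noise-free observed value rather than a convolution with the forward kernel on the observed block. I would resolve this by noting that the forward SDE is coordinate-wise independent in its noise driver, so the joint density factorizes as $p_t(X_t^{\text{mis}}, X_t^{\text{obs}}) = \int p_t(X_t^{\text{mis}}, X_t^{\text{obs}}\mid X_0)\,p_0(X_0)\,dX_0$, and under the assumption that $\theta_2$ matches the conditional score of $p(X_t\mid X_0)$ restricted to the missing block (which is what the masked EDM loss trains), the clamping scheme coincides with conditioning on $X^{\text{obs}}_0 = X_a^{\text{obs}}$ rather than on a noised observation. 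The remaining technicalities (measurability of the SDE solution, the initialization $X_T$ being close in total variation to the true marginal $p_T^{\text{mis}\mid\text{obs}}$ since $p_T \to \mathcal{N}(0,\sigma^2(T)I)$ under VE as $\sigma(T)\to\infty$, and convergence of the Euler--Maruyama discretization as $\Delta t\to 0$) are standard and would be cited rather than derived. The sharper, nonasymptotic form of this statement, together with the ensembling-over-$N$ refinement, is precisely what Proposition~\ref{prop:approximation_bound_brief} quantifies.
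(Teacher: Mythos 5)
Your proposal is correct and follows essentially the same route as the paper's own proof: establish $\hat{X}_a^{\text{obs}} = X_a^{\text{obs}}$ by construction, then invoke reverse-time SDE theory with the clamping interpreted as conditioning on $X^{\text{obs}} = X_a^{\text{obs}}$, and conclude via standard convergence of the discretized (Langevin-type) reverse process to $p(X^{\text{mis}} \mid X^{\text{obs}} = X_a^{\text{obs}})$. You are in fact more explicit than the paper on the two delicate points --- the passage from the conditional score $\nabla_{X_t}\log p(X_t \mid X_0)$ to the marginal score via the denoising score-matching identity, and the gap between hard clamping and RePaint-style noised observations --- which the paper's proof handles only implicitly through the delta-function factorization $p(X_t \mid X_a^{\text{obs}}) = \delta(X_t^{\text{obs}} - X_a^{\text{obs}})\, p(X_t^{\text{mis}} \mid X_a^{\text{obs}})$.
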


The above result confirms that our non-iterative diffusion-based imputation mechanism, initialized from warm-up filled data $X_a$, and conditioned on the observed entries via mask $M$, is theoretically guaranteed to recover the correct conditional distribution over missing entries. This enables a mathematically principled one-shot denoising approach for missing value imputation.

\subsection{Proof of Proposition~\ref{thm:diffusion_imputation}}
\begin{proof}

The forward process perturbs $X_0$ to obtain:
\[
X_t = X_0 + \sigma(t) \cdot \varepsilon, \quad \varepsilon \sim \mathcal{N}(0, I)
\]
which implies:
\[
p(X_t \mid X_0) = \mathcal{N}(X_t; X_0, \sigma^2(t) I)
\]

Given the forward Gaussian kernel, the conditional score is:
\[
\nabla_{X_t} \log p(X_t \mid X_0) = -\frac{X_t - X_0}{\sigma^2(t)} = -\frac{\varepsilon}{\sigma(t)}
\]
The network $\theta_2$ is trained to match this quantity:
\[
\theta_2(X_t, t, M) \approx \nabla_{X_t} \log p(X_t \mid X_0)
\]

At each timestep $t$, we clamp the observed values:
\[
X_t^{\text{obs}} := X_a^{\text{obs}} \Rightarrow \delta(X_t^{\text{obs}} - X_a^{\text{obs}})
\]
This means:
\[
p(X_t \mid X_a^{\text{obs}}) = \delta(X_t^{\text{obs}} - X_a^{\text{obs}}) \cdot p(X_t^{\text{mis}} \mid X_a^{\text{obs}})
\]

By ~\cite{repaint, vp}, if the forward SDE has known \( p(X_t \mid X_0) \), and the score \( \nabla_{X_t} \log p(X_t \mid X_0) \) is known, then the reverse-time SDE:
\[
dX_t = \left[ -2 \, \sigma(t) \dot{\sigma}(t) \nabla_{X_t} \log p(X_t) \right] dt + \sqrt{2 \sigma(t) \dot{\sigma}(t)} \, dW_t
\]
when integrated from $t = T$ to $t = 0$, samples from $p(X_0)$.

But in our case, $p(X_t)$ is restricted by clamping observed entries, so the reverse SDE becomes:
\[
dX_t^{\text{mis}} = -\nabla_{X_t^{\text{mis}}} \log p(X_t^{\text{mis}} \mid X_a^{\text{obs}}) \, dt + \sqrt{2} \, dW_t
\]

We now discretize time with steps \( t_0 = 0 < t_1 < \dots < t_N = T \), and set:
\[
X_{t_{i-1}}^{\text{mis}} = X_{t_i}^{\text{mis}} + \alpha(t_i) \cdot \theta_2(X_{t_i}, t_i, M) + \beta(t_i) \cdot z_i
\]
This is Langevin dynamics guided by the learned conditional score. Under standard conditions (Lipschitz continuity of score, finite variance of $p(X_t)$, small enough steps $|t_{i} - t_{i-1}|$), it is guaranteed to converge to:
\[
X_0^{\text{mis}} \sim p(X^{\text{mis}} \mid X^{\text{obs}} = X_a^{\text{obs}})
\]
and $X_0^{\text{obs}} = X_a^{\text{obs}}$ by construction. Therefore, from the definition,
\[
\hat{X}_a^{\text{obs}} = X_a^{\text{obs}}, \quad \hat{X}_a^{\text{mis}} \sim p(X^{\text{mis}} \mid X^{\text{obs}} = X_a^{\text{obs}})
\]
\end{proof}

\section{Conditional Consistency of RefiDiff and the Proof}
\label{sec-appendix_cond_consistency}
To further formalize the theoretical behavior of RefiDiff under ideal conditions, we show that it asymptotically recovers the true conditional distribution.
\begin{proposition}[Conditional Consistency of RefiDiff]
\label{thm:cond_consistency}
Let $\mathbf{x} = (\mathbf{x}^{\text{obs}}, \mathbf{x}^{\text{mis}})$ be a data sample with observed and missing components. Suppose RefiDiff is trained with an ideal denoising score function $s_\theta(\mathbf{x}_t, \mathbf{m}) = \nabla_{\mathbf{x}_t} \log p(\mathbf{x}_t \mid \mathbf{m})$, where $\mathbf{m}$ is a binary mask indicating missing entries. Then, 
under ideal conditions (i.e., perfect score function and infinite data), the masked reverse diffusion process asymptotically recovers samples from the true conditional distribution $p(\mathbf{x}^{\text{mis}} \mid \mathbf{x}^{\text{obs}})$, as shown in Proposition~\ref{thm:cond_consistency}:
\[
\mathbf{x}_0^{\text{mis}} \sim p(\mathbf{x}^{\text{mis}} \mid \mathbf{x}^{\text{obs}}).
\]
\end{proposition}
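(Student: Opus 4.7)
The plan is to derive the result by showing that (i) the forward diffusion, when we clamp the observed coordinates, induces a well-defined conditional marginal $p(\mathbf{x}_t^{\text{mis}}\mid \mathbf{x}^{\text{obs}})$ at every time $t$, and (ii) the reverse SDE driven by the ideal score that RefiDiff targets is exactly Anderson's time-reversal of this conditional forward process. Propositions~\ref{thm:warmup} and~\ref{thm:diffusion_imputation} already give the one-shot, sample-level version; here the goal is to upgrade the statement to full distributional consistency under the stronger assumption that $s_\theta$ matches the true score everywhere.

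First I would fix the mask $\mathbf{m}$ and observe that the VE forward kernel factorizes across coordinates, so that the joint density satisfies $p(\mathbf{x}_t^{\text{mis}}, \mathbf{x}_t^{\text{obs}}\mid \mathbf{x}^{\text{obs}}) = p(\mathbf{x}_t^{\text{mis}}\mid \mathbf{x}^{\text{obs}})\,\delta(\mathbf{x}_t^{\text{obs}}-\mathbf{x}^{\text{obs}})$ under the clamping rule used in Proposition~\ref{thm:diffusion_imputation}. The key Bayes identity I would invoke is
\[
\nabla_{\mathbf{x}_t^{\text{mis}}} \log p(\mathbf{x}_t^{\text{mis}}\mid \mathbf{x}^{\text{obs}}) \;=\; \nabla_{\mathbf{x}_t^{\text{mis}}} \log p(\mathbf{x}_t^{\text{mis}}, \mathbf{x}^{\text{obs}}),
\]
since the normalizer $p(\mathbf{x}^{\text{obs}})$ does not depend on $\mathbf{x}_t^{\text{mis}}$. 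By hypothesis, the ideal mask-conditioned score $s_\theta(\mathbf{x}_t,\mathbf{m}) = \nabla_{\mathbf{x}_t}\log p(\mathbf{x}_t\mid \mathbf{m})$ supplies exactly the missing-coordinate block of this gradient, so the update used in RefiDiff's reverse step coincides with the conditional score for $\mathbf{x}_t^{\text{mis}}$ at every $t$.

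Next I would apply Anderson's reverse-time SDE theorem to the conditional forward process on the missing coordinates with $\mathbf{x}^{\text{obs}}$ frozen. This yields a reverse SDE whose drift is precisely $-2\dot\sigma(t)\sigma(t)\,\nabla_{\mathbf{x}_t^{\text{mis}}}\log p(\mathbf{x}_t^{\text{mis}}\mid \mathbf{x}^{\text{obs}})$ and whose diffusion coefficient is $\sqrt{2\dot\sigma(t)\sigma(t)}$, matching the update rule in Proposition~\ref{thm:diffusion_imputation} after the Bayes identity above. Initializing at $t=T$ with $\mathbf{x}_T^{\text{mis}}\sim \mathcal{N}(0,\sigma^2(T)I)$ and integrating to $t=0$ therefore samples from $p(\mathbf{x}_0^{\text{mis}}\mid \mathbf{x}^{\text{obs}})$; one takes the standard VE limit $\sigma(T)\to\infty$ so that the prior mismatch at $t=T$ contributes no bias. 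The discretization-to-continuum and prior-mismatch limits are handled by the usual convergence arguments for score-based SDEs under Lipschitz-score and finite-second-moment assumptions, with an ensemble average over $N$ trajectories reducing Monte-Carlo variance but not altering the limit.

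The main obstacle I expect is not the reverse SDE derivation itself, which is classical, but the rigorous justification that hard-clamping $\mathbf{x}_t^{\text{obs}} = \mathbf{x}^{\text{obs}}$ at every discretized step is consistent with the continuous-time conditional SDE. A naïve clamp produces a singular measure, so I would state this carefully as a limit: either (a) replace the $\delta$-clamp by a Gaussian posterior approximation with vanishing variance and verify that the induced reverse dynamics converge weakly to the target (in the spirit of Song et al.'s conditional-sampling analysis and the RePaint harmonization step), or (b) restrict attention to the missing block directly and treat $\mathbf{x}^{\text{obs}}$ as a time-independent parameter of the score, so that no measure on the observed coordinates ever needs to be propagated. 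Once either viewpoint is adopted, the consistency conclusion $\mathbf{x}_0^{\text{mis}}\sim p(\mathbf{x}^{\text{mis}}\mid\mathbf{x}^{\text{obs}})$ follows from the already-proved Proposition~\ref{thm:diffusion_imputation} together with an approximation argument in the score-network error, which vanishes by the ``ideal score'' assumption.
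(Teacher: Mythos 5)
Your proposal follows essentially the same route as the paper's proof: both invoke the time-reversal of the forward SDE with the exact mask-conditioned score (Anderson / Song et al.) and then argue that clamping the observed coordinates restricts the dynamics to sampling from $p(\mathbf{x}^{\text{mis}} \mid \mathbf{x}^{\text{obs}})$. Your treatment is in fact more careful than the paper's two-sentence argument --- in particular the Bayes identity relating the conditional score on the missing block to the joint score, and the explicit handling of the singular measure induced by hard clamping (which the paper glosses over) --- but the underlying idea and the key cited theorem are the same.
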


\begin{proof}
The RefiDiff reverse process evolves according to the time-reversed stochastic differential equation (SDE):
\[
d\mathbf{x}_t = \left[f(\mathbf{x}_t, t) - g(t)^2 \nabla_{\mathbf{x}_t} \log p(\mathbf{x}_t \mid \mathbf{m})\right] dt + g(t) d\bar{\mathbf{w}}_t,
\]
where $f$ and $g$ define the forward SDE drift and diffusion coefficients, and $\bar{\mathbf{w}}_t$ denotes standard Brownian motion in reverse time. Under the assumption that $s_\theta(\cdot) = \nabla \log p(\cdot \mid \mathbf{m})$ is exact, and the forward process defines a valid marginal $p(\mathbf{x}_T)$, the result from score-based diffusion theory (e.g., \cite{song2021scorebased}) implies that the reverse process samples from $p(\mathbf{x}_0 \mid \mathbf{m})$.

Because RefiDiff clamps observed values throughout the diffusion trajectory (i.e., $\mathbf{x}_t^{\text{obs}} = \mathbf{x}^{\text{obs}}$), the learned score function and reverse trajectory apply only to $\mathbf{x}^{\text{mis}}$, yielding samples from $p(\mathbf{x}^{\text{mis}} \mid \mathbf{x}^{\text{obs}})$ as desired.
\end{proof}
{\textit{Remark.}} While idealized, the assumption that $s_\theta = \nabla \log p$ (i.e., perfect score function) is standard in diffusion-based generative modeling theory \cite{song2021scorebased} and serves to clarify the limit behavior of RefiDiff.
\section{Theoretical Bound on Approximate Conditional Sampling}
\label{sec-appendix_approx_bd}

We now provide a quantitative approximation guarantee for the conditional sampling behavior of RefiDiff, under mild assumptions on the learned score function and discretization, which is the full version of Proposition~\ref{prop:approximation_bound_brief}.

\begin{proposition}[Approximate Conditional Consistency Bound]
\label{prop:approximation_bound}
Let $\mathbf{x}=(\mathbf{x}^{\rm obs},\mathbf{x}^{\rm mis})$ be a data sample and suppose RefiDiff uses a learned denoiser $s_\theta(\mathbf{x}_t, t, \mathbf{m})$ approximating the true conditional score
$\nabla_{\mathbf{x}_t}\log p(\mathbf{x}_t\mid \mathbf{m})$. Let the reverse diffusion be run in a discretized SDE with timestep $\delta t$, and $N$ independent trajectories are averaged to form an empirical imputed conditional distribution $\hat p(\mathbf{x}^{\rm mis}\mid \mathbf{x}^{\rm obs})$. 
Under the following regularity conditions:
\begin{itemize}
  \item Finite Fisher information and bounded second moments of the data distribution,
  \item Smooth, Lipschitz-continuous score function and noise schedule,
\end{itemize}
if the learned score satisfies the uniform $L^2$ bound
\[
\mathbb{E}_{t\sim [0,T],\,\mathbf{x}_t \sim p_t}\big\|s_\theta(\mathbf{x}_t,t)-\nabla\log p_t(\mathbf{x}_t)\big\|^2 \le \varepsilon_\theta^2,
\]
then
\begin{align*}
  &\mathrm{KL}\bigl(\hat{p}(\mathbf{x}^{\rm mis}\mid \mathbf{x}^{\rm obs})\;\|\;p(\mathbf{x}^{\rm mis}\mid \mathbf{x}^{\rm obs})\bigr)
\;\\
&\le\; C_1\,T\,\varepsilon_\theta^2 + C_2\,\delta t + C_3\,\tfrac{1}{N},  
\end{align*}
where each constant \(C_1,C_2,C_3\) depends polynomially on Fisher information, ambient dimension, noise schedule smoothness, and time horizon \(T\).
\end{proposition}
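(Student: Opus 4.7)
The plan is to decompose the total KL divergence via a chain of intermediate measures that isolate each of the three error sources. Define: (i) $q_{\rm ideal}$, the distribution on $\mathbf{x}^{\rm mis}$ produced by the continuous-time reverse SDE with the exact conditional score $\nabla\log p_t(\cdot\mid \mathbf{m})$ and observed coordinates clamped; (ii) $q_{\rm disc}$, the distribution produced by its Euler--Maruyama discretization at step $\delta t$ with the exact score; (iii) $q_\theta$, the same discretization but using the learned $s_\theta$; and finally (iv) $\hat p$, the empirical estimate formed by averaging $N$ independent trajectories from $q_\theta$. Using a chain/triangle-type inequality for KL (for example, bridging through total variation via Pinsker/Bretagnolle--Huber) one upper-bounds $\mathrm{KL}(\hat p\,\|\,p(\cdot\mid \mathbf{x}^{\rm obs}))$ by the sum of KL between adjacent pairs. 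Proposition~\ref{thm:cond_consistency} already yields $q_{\rm ideal}=p(\cdot\mid\mathbf{x}^{\rm obs})$, so the first gap vanishes and only three terms remain.

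I would bound the score-error gap $\mathrm{KL}(q_\theta\,\|\,q_{\rm disc})$ using Girsanov's theorem applied to the two path measures on $[0,T]$ that share initialization and diffusion coefficient $g(t)$ but differ in drift by the score residual. The resulting log Radon--Nikodym derivative has expected quadratic variation $\tfrac{1}{2}\int_0^T g(t)^2\,\mathbb{E}\|s_\theta(\mathbf{x}_t,t,\mathbf{m})-\nabla\log p_t(\mathbf{x}_t\mid\mathbf{m})\|^2\,dt$, which by the uniform $L^2$ hypothesis and the noise-schedule-smoothness bound on $g(t)^2$ gives $C_1\,T\,\varepsilon_\theta^2$; the data-processing inequality then transfers this bound from the path measures to the terminal marginals. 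For the discretization gap $\mathrm{KL}(q_{\rm disc}\,\|\,q_{\rm ideal})$ I would apply a standard weak-error analysis of Euler--Maruyama for the reverse SDE with Lipschitz drift (which follows from the assumed Lipschitz score and smooth $g$): the one-step weak error is $O(\delta t^2)$, summing over the $T/\delta t$ steps and converting strong error to KL via a Talagrand $T_2$ transport inequality (valid under finite Fisher information) yields the term $C_2\,\delta t$.

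The Monte Carlo term $\mathrm{KL}(\hat p\,\|\,q_\theta)$ will be the main obstacle, since the empirical-to-true KL is not generically $O(1/N)$ for continuous distributions and can in fact diverge for a raw atomic empirical measure. To obtain the claimed $1/N$ rate I would interpret $\hat p$ consistently with the authors' averaging scheme, either (a) as the distribution of the mean imputation $\bar{\mathbf{x}}^{\rm mis}=\tfrac{1}{N}\sum_k \mathbf{x}^{\rm mis,(k)}$, whose variance along any bounded test functional scales as $1/N$ and converts by a Jensen-type argument into $\mathrm{KL}(\hat p\,\|\,q_\theta)\le C_3/N$, or (b) as a kernel-smoothed empirical density with bandwidth tuned to the assumed regularity, whose $\chi^2$ (and hence KL) error decays at rate $1/N$ under finite Fisher information. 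Summing the three contributions and absorbing polynomial dependencies on dimension, Fisher information, Lipschitz constants, and $T$ into $C_1,C_2,C_3$ yields the stated bound. The most delicate bookkeeping lies in this Monte Carlo step and in controlling the direction of each KL at every bridging stage, since KL is asymmetric and Girsanov naturally yields one direction; I would invoke Bretagnolle--Huber to swap directions at the cost of an extra constant wherever needed.
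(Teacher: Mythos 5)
Your decomposition is essentially the same as the paper's: the published proof splits the KL into a score-approximation term (handled by citing the convergence analyses of Chen et al.\ and Conforti et al., whose underlying engine is exactly the Girsanov argument you spell out), a discretization term of order $\delta t$ from Euler--Maruyama analysis, and a Monte Carlo term of order $1/N$, combined by an informal ``triangle inequality for KL.'' Your version is more explicit about the machinery (path-measure Girsanov plus data processing for the first gap, weak-error analysis plus a transport inequality for the second, Bretagnolle--Huber to manage the asymmetry of KL), and in that respect it is a strictly more careful rendering of the same route. Appealing to Proposition~\ref{thm:cond_consistency} to kill the gap between $q_{\rm ideal}$ and the true conditional is also exactly what the paper implicitly does.

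The one place where you genuinely diverge is the Monte Carlo term, and you are right to flag it as the main obstacle: the paper disposes of it by invoking the law of large numbers and the Dvoretzky--Kiefer--Wolfowitz inequality, but DKW controls the sup-norm of CDF differences, not KL, and the KL divergence of a raw $N$-atom empirical measure from a continuous $q_\theta$ is $+\infty$, so the cited ``standard PAC bounds'' do not deliver $C_3/N$ as stated. Your fix (b), kernel smoothing, is the right kind of repair, though the resulting KL rate is generically dimension- and smoothness-dependent rather than a clean $1/N$. Your fix (a) does not work as written: if $\hat p$ is the law of the trajectory average $\bar{\mathbf{x}}^{\rm mis}$, then $\hat p$ concentrates toward a point mass as $N$ grows while $q_\theta$ stays spread out, so $\mathrm{KL}(\hat p \,\|\, q_\theta)$ increases rather than decays; averaging is consistent with estimating the conditional mean (which is how the main text motivates it) but not with KL-closeness to the conditional distribution. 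So your proposal matches the paper's argument where the paper is solid, and where it departs it correctly isolates the step that the paper itself leaves unproved --- but neither your sketch nor the paper's actually closes that step at the claimed $1/N$ rate.
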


\begin{proof}[Proof]
First, we provide a sketch of the bound, which follows from three components:
\begin{itemize}
\item \textbf{Score Approximation Error.} From standard score-based generative modeling theory (see \citet{song2021scorebased}), the deviation of the learned score from the true score induces a divergence between the resulting generative distribution and the data distribution. This yields a KL error proportional to $\varepsilon_\theta^2$ under Lipschitz smoothness assumptions on the score.
\item \textbf{Discretization Error.} The continuous-time diffusion process is approximated by a discretized numerical integrator (e.g., Euler–Maruyama or Heun’s method) \cite{salimans2022progressive, karras2022elucidating}, which introduces error of order $O(\delta t)$ per step. When propagated through the denoising sequence, this leads to an accumulated divergence term scaling with $\delta t$.
\item \textbf{Monte Carlo Averaging Error.} Since the final imputation is the average of $N$ stochastic diffusion trajectories, the empirical distribution $\hat{p}$ converges to the expectation over the learned conditional with variance decreasing as $1/N$, similar to 
variance reduction bounds when averaging $N$ Monte Carlo samples. The resulting KL approximation error due to finite averaging follows standard PAC bounds for Monte Carlo estimation.
\end{itemize}
Putting these together yields the desired bound. 

Next, we will provide a detailed step-by-step analysis. As stated above, we decompose the KL divergence between the imputed distribution $\hat{p}(\mathbf{x}^{\text{mis}} \mid \mathbf{x}^{\text{obs}})$ and the true conditional $p(\mathbf{x}^{\text{mis}} \mid \mathbf{x}^{\text{obs}})$ into three contributing sources of error:  1) Score approximation error due to using $s_\theta$ instead of the exact score $\nabla_{\mathbf{x}_t} \log p_t(\mathbf{x}_t)$; 2) Discretization error from solving the reverse SDE with step size $\delta t$; 3) Monte Carlo approximation error from using $N$ diffusion trajectories. We will analyze each source of error below.  

\textbf{(Source 1) Score approximation error.}

Under mild regularity assumptions—including bounded second moments, finite Fisher information relative to the Gaussian perturbation, and sufficiently smooth score functions—the error in the learned score propagates to the KL divergence between the data distributions induced by the approximate and the ideal reverse-time processes.

Specifically, if the learned score function $s_\theta(\mathbf{x}_t, t)$ satisfies the \textbf{$L^2$-accuracy condition}:
\[
\mathbb{E}_{t \sim \mathrm{Uniform}[0,T]}\,\mathbb{E}_{\mathbf{x}_t \sim p_t}\left\| s_\theta(\mathbf{x}_t, t) - \nabla_{\mathbf{x}_t}\log p_t(\mathbf{x}_t) \right\|^2 \le \varepsilon_\theta^2,
\]
where $p_t$ denotes the forward diffusion marginal at time $t$, then the KL divergence between the data distribution induced by the learned reverse SDE, denoted $p_0^{(s_\theta)}$, and the true data distribution $p_0^{\mathrm{true}}$, admits the bound:
\[
\mathrm{KL}\left(p_0^{(s_\theta)} \,\big\|\, p_0^{\mathrm{true}} \right) \leq C_1 \cdot T \cdot \varepsilon_\theta^2,
\]
where $C_1$ is a constant depending on the Fisher information, the noise schedule, and the diffusion horizon $T$. Importantly, $C_1$ scales polynomially (rather than exponentially) with respect to the ambient dimension. This result follows the convergence analysis by \citet{chen2023sampling} on sampling with minimal assumptions, and the KL divergence guarantees of score-based diffusion models under finite Fisher information by \citet{conforti2023score}.

\textbf{(Source 2) Discretization error.}  
The continuous-time reverse SDE is approximated via a numerical solver (e.g., Euler-Maruyama) with step size $\delta t$. Recent theoretical analyses of reverse-time SDE discretizations for diffusion models (see \citet{chen2023sampling, conforti2023score}) show a bound of the form:
\[
\mathrm{KL}(p_{\text{discrete}} \,\|\, p_{\text{cont}}) \leq C_2 \cdot \delta t,
\]
where $p_{\text{discrete}}$ and $p_{\text{cont}}$ denote the distributions induced by the discrete-time numerical solver and the continuous-time reverse process, respectively, and $C_2$ is a constant depending on the smoothness and regularity conditions of the diffusion process.

\textbf{(Source 3) Sampling error.}  
Let $\hat{p}$ be the empirical distribution obtained by averaging over $N$ reverse diffusion samples (trajectories). Standard convergence results for empirical measures (e.g., via the law of large numbers and concentration inequalities like the Dvoretzky–Kiefer–Wolfowitz bound) imply:
\[
\mathrm{KL}(\hat{p} \,\|\, p_{\text{discrete}}) \leq C_3 \cdot \frac{1}{N},
\]
for some constant $C_3$, assuming bounded support or sufficient tail decay.

\textbf{Putting it all together.}  
By triangle inequality for KL (or more carefully, via the chain rule for divergence across approximations), we combine the three sources:
\begin{align*}
  &\mathrm{KL}\bigl(\hat{p}(\mathbf{x}^{\rm mis}\mid \mathbf{x}^{\rm obs})\;\|\;p(\mathbf{x}^{\rm mis}\mid \mathbf{x}^{\rm obs})\bigr)
\;\\
&\le\; C_1\,T\,\varepsilon_\theta^2 + C_2\,\delta t + C_3\,\tfrac{1}{N},  
\end{align*}

This completes the proof.
\end{proof}
\begin{remark}
Proposition~\ref{prop:approximation_bound} shows that RefiDiff's imputation quality improves as the learned score function becomes more accurate, the diffusion process is discretized more finely, and more reverse trajectories are sampled. This provides guidance for tuning the number of steps and ensemble size in practice.
\end{remark}

\section{Datasets Details}
\label{app:datasets}
We use \textbf{nine} public real-world datasets, and they are available at Kaggle\footnote{\url{https://www.kaggle.com/}} or the UCI Machine Learning repository\footnote{\url{https://archive.ics.uci.edu/}}. These include \textbf{five} datasets with only numerical columns (continuous values) and \textbf{four} datasets that contain both numerical and categorical (discrete) columns. The datasets are:

\textbf{California\footnote{\url{https://www.kaggle.com/datasets/camnugent/california-housing-prices}}~\cite{pace1997sparse,geron2022hands}}, which contains 1990 census data on California houses, including longitude, latitude, housing median age, total rooms, etc. 

\textbf{Default~\cite{default}} dataset focuses on predicting customer default payments in Taiwan, comparing six data mining methods for accuracy in estimating default probability. Using a novel Sorting Smoothing Method to estimate true default probability, the study finds that artificial neural networks provide the most accurate forecasting model, with a high coefficient of determination and regression parameters close to ideal values (intercept near zero, slope near one).

\textbf{News~\cite{news}} dataset includes integer and real features, summarizing articles published by Mashable over two years. It is used for classification and regression to predict social media shares (popularity) based on article statistics.

\textbf{Magic~\cite{magic}} dataset consists of Monte Carlo-generated data simulating high-energy gamma particle detection by a ground-based Cherenkov gamma telescope using the imaging technique. It captures Cherenkov radiation from electromagnetic showers initiated by gamma rays in the atmosphere, recorded as pulses on photomultiplier tubes in a camera plane. Shower images, typically elongated clusters, are analyzed using Hillas parameters (from principal component analysis), asymmetry, and other characteristics to distinguish gamma ray signals from cosmic ray background. 

\textbf{Bean\footnote{\url{https://archive.ics.uci.edu/dataset/602/dry+bean+dataset}}~\cite{Koklu2020MulticlassCO}} dataset involves images of seven dry bean varieties, captured using a computer vision system to classify seeds uniformly based on form, shape, type, and structure. After segmentation and feature extraction, 16 features (12 dimensions and 4 shape forms) were derived from the bean grains for classification.

\textbf{Gesture~\cite{gesture}} dataset consists of features from seven videos of people gesticulating, designed for studying Gesture Phase Segmentation. Each video has two files: a raw file with frame-by-frame positions of hands, wrists, head, and spine, and a processed file with velocity and acceleration data for hands and wrists. 

\textbf{Letter~\cite{letter}}, where the goal is to classify black-and-white rectangular pixel images as one of the 26 English capital letters. The dataset includes 20,000 unique images from 20 fonts, with each letter randomly distorted. Images are represented by 16 numerical attributes (statistical moments and edge counts). 

\textbf{Adult~\cite{adult}} dataset, extracted from the 1994 Census by Barry Becker, contains  14 categorical and integer features. It is used for classification to predict whether an individual's annual income exceeds $\$50,000$ based on clean records meeting specific conditions (e.g., age $> 16$, hours worked $> 0$). 

\textbf{Shoppers~\cite{shoppers}} dataset contains 12,330 user sessions over a year, with 17 integer and real features. It is used for classification and clustering to predict purchasing intent, with 84.5\% (10,422) sessions not resulting in a purchase and 15.5\% (1,908) ending in a purchase.

The statistics of these datasets are presented in Table~\ref{tbl:stat-dataset}. 
\begin{table*}
    \centering
	\begin{tabular}{lccccccccc}
            \toprule
            \textbf{Dataset}  &  \# Rows  & \# Num & \# Cat & \# In-sample & \# Out-of-sample  \\
            \midrule 
            \textbf{California} & $20,433$ & $9$ & - & $14,303$ & $6,130$   \\
             \textbf{Default} & $30,000$ & $14$ & $9$ & $21,000$ & $9,000$\\
             \textbf{News} & $39,644$ & $45$ & $2$ & $27,750$ & $11,894$  \\
             \textbf{Magic} & $19,020$ & $10$ & - & $13,314$ & $5,706$  \\
             \textbf{Bean} & $13,610$ & $16$ & - & $9,527$ & $4,083$ \\
             \textbf{Gesture} & $9,522$ & $32$ & - & $6,665$ & $2,857$ \\
            \textbf{Letter} & $20,000$ & $16$ & - & $14,000$ & $6,000$ \\
            \textbf{Adult} & $32,561$ & $6$ & $8$ & $22,792$ & $9,769$ \\
            \textbf{Shoppers} & $12,330$ & $10$ & $7$ & $8,631$ & $3,699$ \\
            
		\bottomrule
		\end{tabular}
        \caption{Dataset statistics. \# Num stands for the number of numerical columns, and \# Cat stands for the number of categorical columns. "-" represents the corresponding entry is not present.} 
        \label{tbl:stat-dataset}

\end{table*}

\section{Implementation Details and Hyperparameters}
\label{app:implementation_details}
This section outlines the implementation environment, baseline configurations, and key hyperparameters used in our proposed method.

\paragraph{Environment.} 
Our framework is implemented using Python and PyTorch~\cite{pytorch}, along with supporting libraries such as scikit-learn~\citep{scikit-learn}. Experiments were conducted on a cluster running Ubuntu, with each compute node equipped with 4 NVIDIA V100 GPUs with 32-core CPUs and 128GB RAM. However, our codebase can be executed on a single GPU with 32GB VRAM, alongside 8-core CPUs and 32GB RAM. We used the same environment to run the baseline methods.

\paragraph{Baselines.}
We closely follow the official implementations of all baseline methods and adopt the recommended hyperparameters for fair comparison:
\begin{itemize}
    \item \textbf{DIFFPUTER:} We used the official implementation, which was available with the ICLR paper at \url{https://github.com/hengruizhang98/DiffPuter}.
    
    \item \textbf{ReMasker:} The official GitHub repository is used along with the recommended settings: \url{https://github.com/tydusky/remasker}.
    
    \item \textbf{KNN:} We use the KNN imputer from scikit-learn, following the common practice from DIFFPUTER, where the number of neighbors is set to $\sqrt{m}$, with $m$ being the number of samples.
    
    \item \textbf{HyperImpute and Others:} For HyperImpute~\cite{hyperimpute}, MissForest~\cite{missforest}, MICE~\cite{mice}, SoftImpute~\cite{softimpute}, EM~\cite{em}, GAIN~\cite{gain}, MIRACLE~\cite{miracle}, and MIWAE~\cite{miwae}, we used the implementations provided in the HyperImpute package~\footnote{\url{https://github.com/vanderschaarlab/hyperimpute}}. All hyperparameter tuning follows the same settings reported in DIFFPUTER's Appendix D.6~\cite{zhang2025diffputer}.
\end{itemize}
\paragraph{Ours.} 
To implement our framework, we adopt \texttt{XGBoost}~\cite{chen2016xgboost} as the default regressor with standard parameters, including 100 estimators and a learning rate of 0.1. For categorical variables, we employ \texttt{CatBoost}~\cite{dorogush2018catboost} as the default classifier with its default configuration. Our framework is modular and supports alternative predictive models such as Random Forest, ReMasker, and Support Vector Classifier (SVC), enabling flexibility based on user preference or domain requirements.

For the denoising network $\theta_2$, we use a custom lightweight Mamba-based architecture with a hidden dimension $hd=32$, comprising two upsampling and two downsampling blocks. In the upsampling blocks, we consider dimensionality increase in the form of $32\to 64\to 128$, and then in the downsampling blocks, we consider a gradual decrease in the form of $128\to 64\to 32$.

We set the number of reverse diffusion trials to $N=10$ by default, following the strategy used in DIFFPUTER. However, as demonstrated in Figure~\ref{subfig:california_N_samples} and Figure~\ref{subfig:magic_N_samples}, our method maintains stable performance even with smaller values of $N$ (e.g., $N=1$ or $N=3$), demonstrating robustness to stochasticity. The number of sampling steps during the reverse diffusion process is fixed at 50, consistent with DIFFPUTER.

\section{Statistical Analysis}
\label{app:statistical_analysis}
We present critical difference (CD) diagrams inspired by~\cite{demvsar2006statistical,ahamed2025tscmamba,IsmailFawaz2018deep} in Figure~\ref{subfig:num_cd} and Figure~\ref{subfig:cat_cd} to compare the average ranks of imputation methods for each setting and metric (e.g., MNAR In-MAE). For regression tasks involving numerical columns, our method achieves the top rank and is statistically distinguishable from all other baselines, demonstrating clear superiority. For categorical imputation, our method also ranks highest and forms a statistically indistinguishable group with several strong baselines, such as ReMasker and DIFFPUTER. This highlights the robustness of our approach across both numerical and categorical modalities.
\begin{figure*}
    \centering
    
    \begin{subfigure}[t]{\linewidth}
    
        \centering        
        \includegraphics[width=\linewidth]{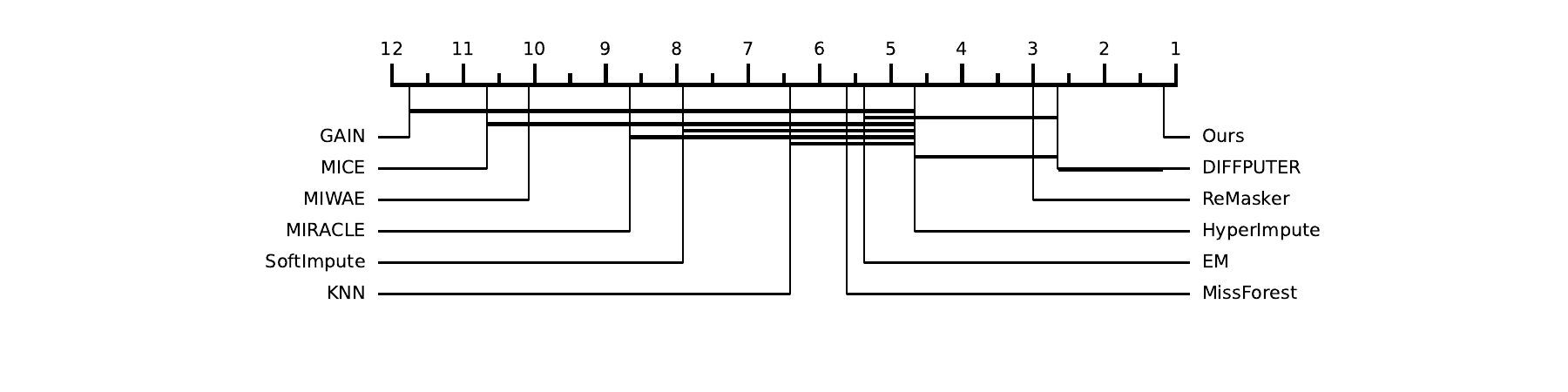}
        \caption{Numerical imputation}
        \label{subfig:num_cd}
    \end{subfigure}
    \hfill
    \begin{subfigure}[t]{\linewidth}
        \centering
        \includegraphics[width=\linewidth]{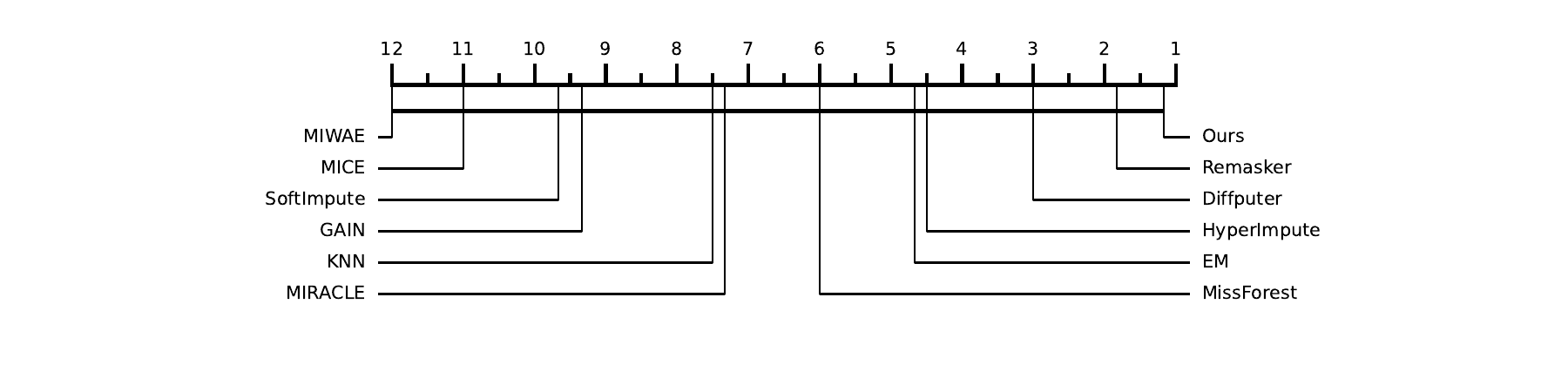}
        \caption{Categorical imputation}
        \label{subfig:cat_cd}
    \end{subfigure}

    \caption{Critical difference diagram for numerical and categorical columns performance metrics.}
    \label{fig:cd_diagram}
    
\end{figure*}

\section{Variation analysis}
\label{app:variation}
To understand the stability of different imputation methods, we analyze their performance variation under ten random masks for the top performing methods. Here we demonstrate them by mean and standard deviation in Figure~\ref{fig:mcar_in_MAE} and Figure~\ref{fig:mcar_in_RMSE} employing the MCAR in-sample setting. Our method consistently demonstrates lower standard deviation across datasets, indicating more stable behavior despite the randomness of the missing values. In contrast, methods like KNN and Remasker show higher variance on datasets such as adult and magic, suggesting a lack of consistency when the missingness pattern changes. This analysis highlights that our approach is less sensitive to variations in random masking, making it more reliable across repeated runs.
\begin{figure*}
    \centering
    \includegraphics[width=\linewidth]{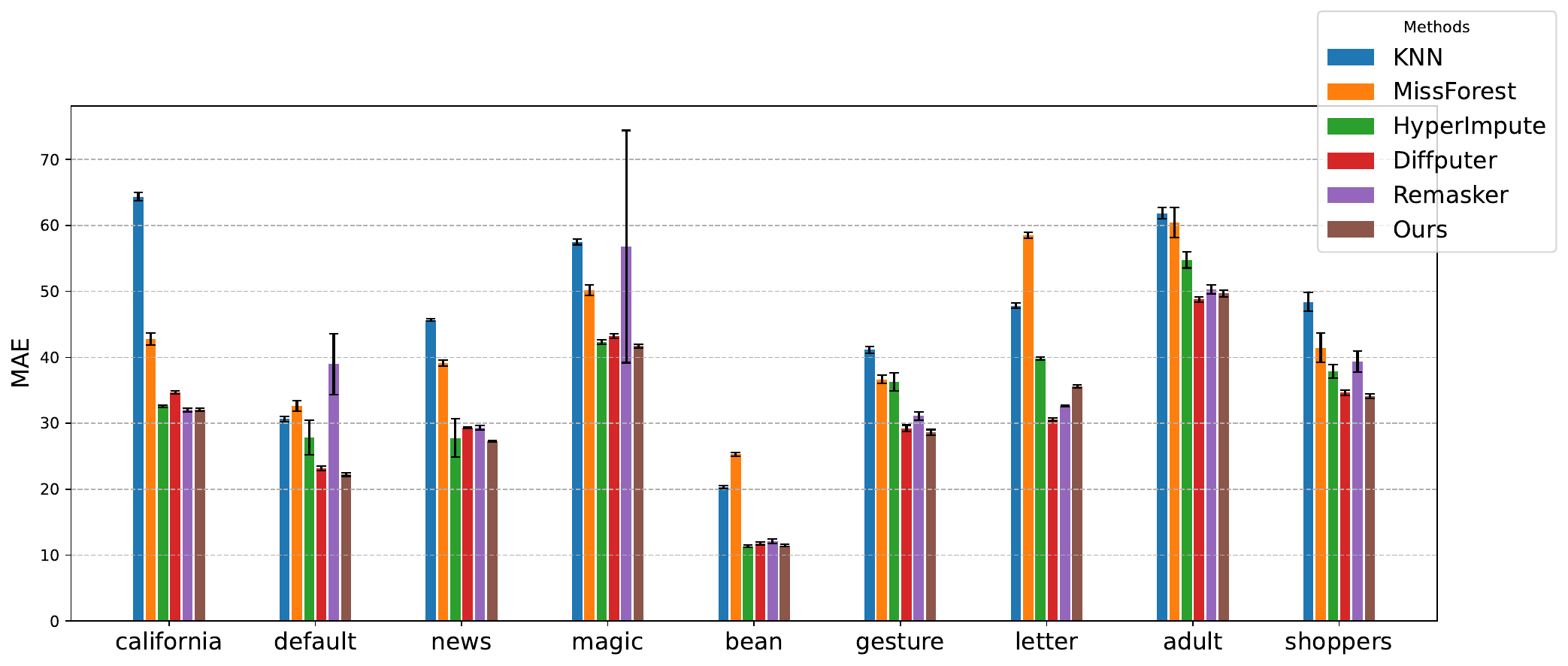}
    \caption{Variation analysis for MCAR In-sample MAE.}
    \label{fig:mcar_in_MAE}
\end{figure*}
\begin{figure*}
    \centering
    \includegraphics[width=\linewidth]{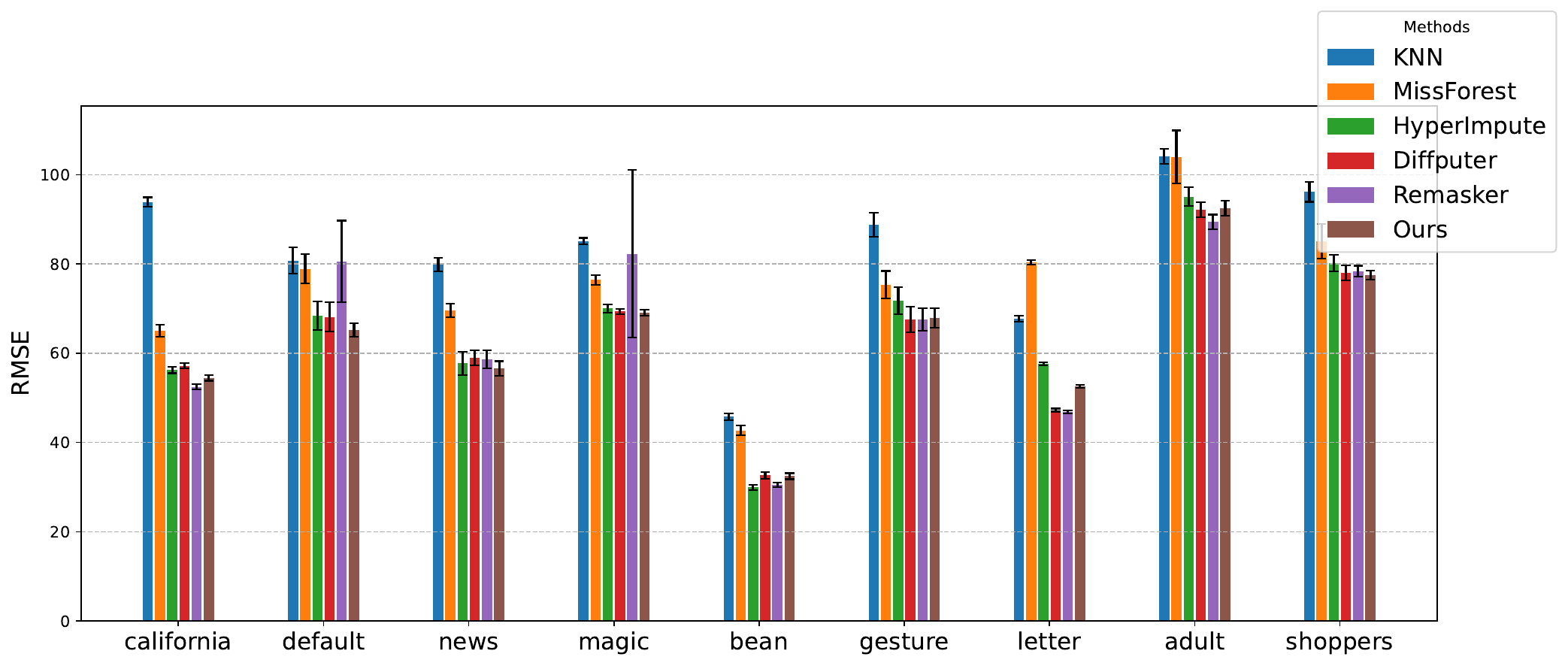}
    \caption{Variation analysis for MCAR In-sample RMSE.}
    \label{fig:mcar_in_RMSE}
\end{figure*}

\section{Additional Ablation Studies}
\label{app:additional_ablation}
\paragraph{Robustness of Our Designed Denoiser.} 
To evaluate the robustness of our designed denoiser and general applicability, we examine whether our denoiser remains effective outside our proposed framework. Specifically, we replace DIFFPUTER's original denoising network with our memory-efficient $\theta_2$ while keeping all other components of DIFFPUTER unchanged. As shown in Figure~\ref{fig:robustness_of_our_denoiser}, our denoiser successfully retains comparable performance in the DIFFPUTER framework under the MNAR setting for both the California and Magic datasets. Notably, on the Magic dataset, it even outperforms DIFFPUTER's original model despite having far fewer parameters. These findings demonstrate that our denoiser is not only memory-efficient but also robust, maintaining strong performance even when deployed within external imputation pipelines.
\begin{figure}[H]
    \centering
    \includegraphics[width=\linewidth]{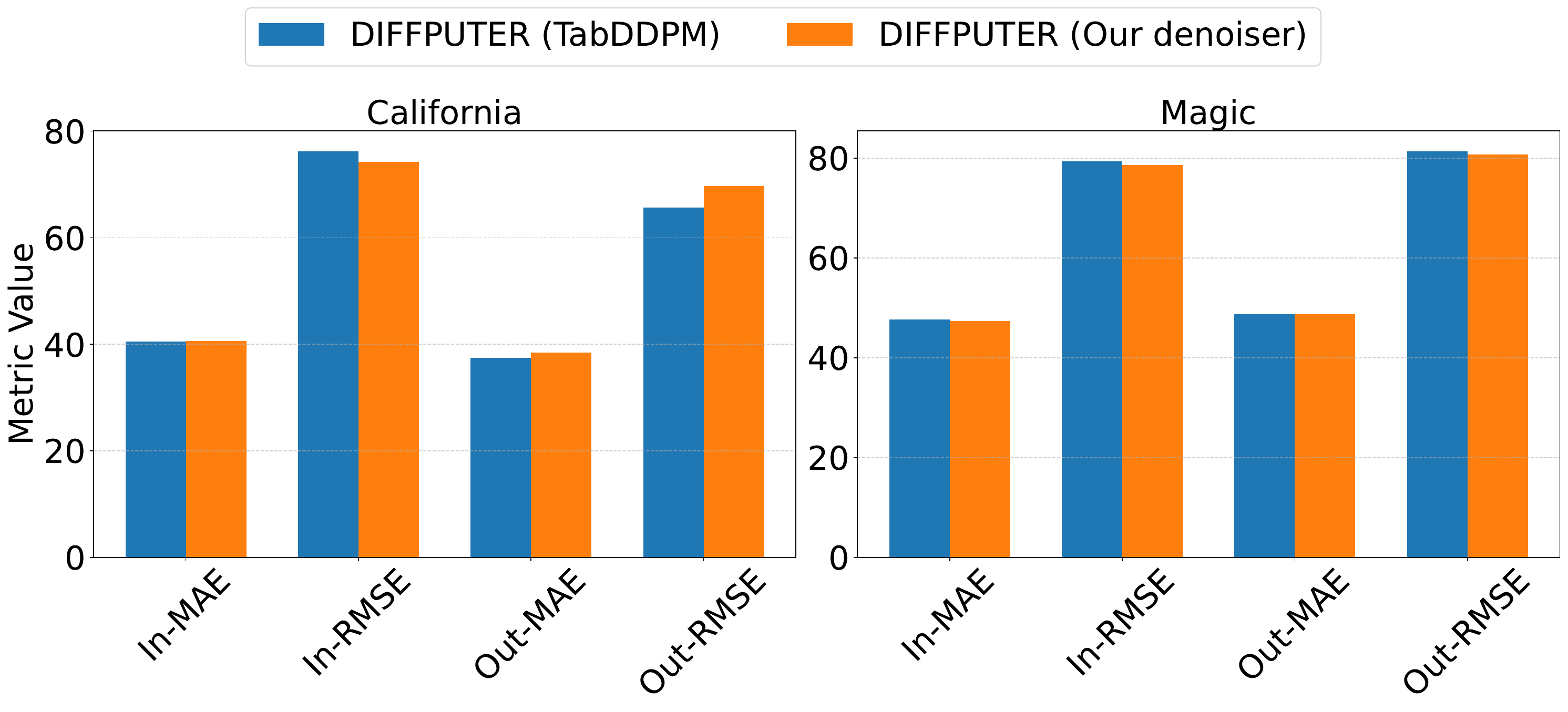}
    \caption{Robustness analysis of our designed denoiser.}
    \label{fig:robustness_of_our_denoiser}
\end{figure}
\label{denoiser-in-DiffPutter}

\paragraph{Experiment on Hidden Dimension.}

We perform an ablation study to assess the effect of varying the hidden dimension $hd$ of our denoising network $\theta_2$ under the MNAR setting using the California dataset with a single random mask. As shown in Figure~\ref{fig:ablation_hd}, increasing $hd$ generally improves both in-sample and out-of-sample performance up to a point. While higher values of $hd$ introduce greater modeling capacity, they also increase the number of parameters and computational cost. Our results demonstrate that setting $hd = 32$ offers a favorable balance, delivering competitive performance while maintaining a lightweight footprint. This makes it an effective default choice for achieving both efficiency and accuracy in imputation.
\begin{figure}[H]
    \centering
    \begin{subfigure}[t]{0.48\linewidth}
    
        \centering        
        \includegraphics[width=\linewidth]{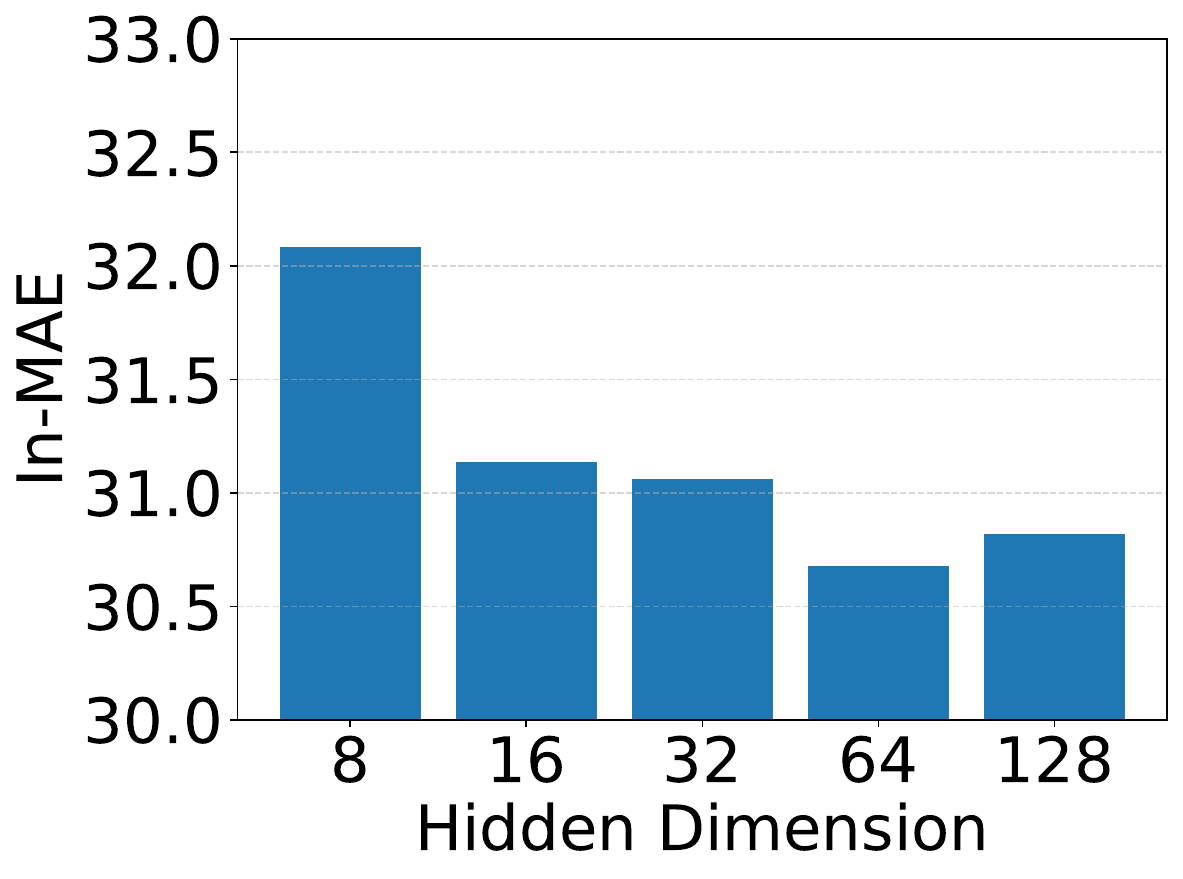}
        \caption{In-sample MAE}
        \label{subfig:in_mae_vs_hd}
    \end{subfigure}
    \hfill
    \begin{subfigure}[t]{0.48\linewidth}
        \centering
        \includegraphics[width=\linewidth]{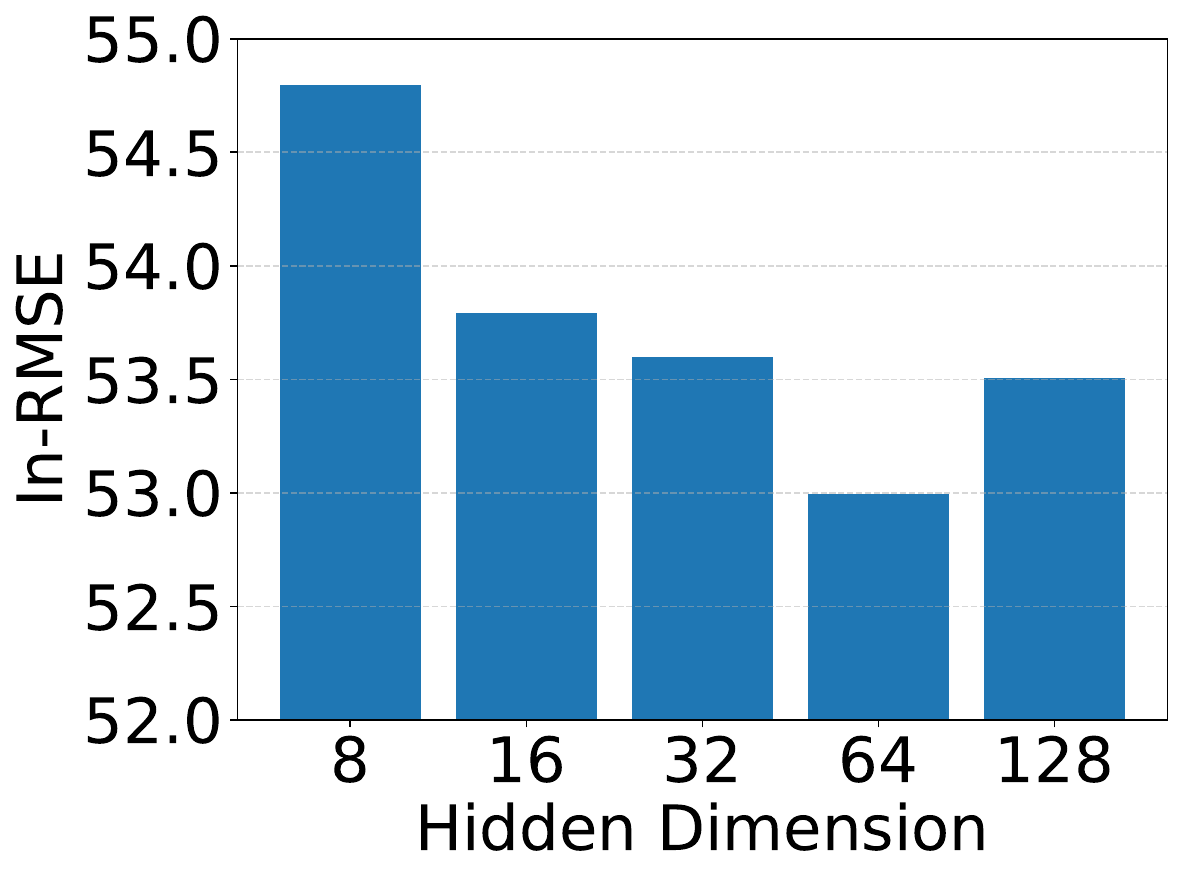}
        \caption{In-sample RMSE}
        \label{subfig:in_rmse_vs_hd}
    \end{subfigure}
    \hfill
    \begin{subfigure}[t]{0.48\linewidth}
        \centering
        \includegraphics[width=\linewidth]{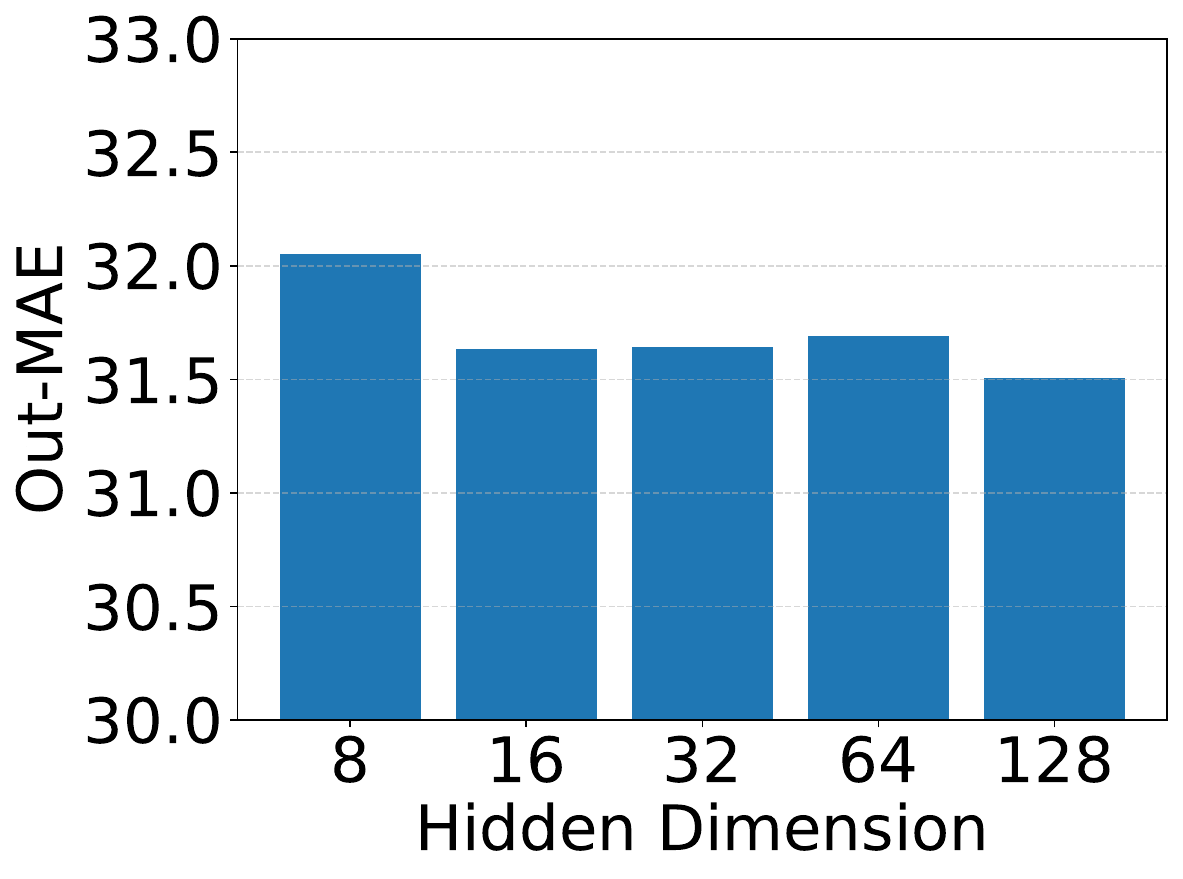}
        \caption{Out-of-sample MAE}
        \label{subfig:out_mae_vs_hd}
        
    \end{subfigure}
    \hfill
    \begin{subfigure}[t]{0.48\linewidth}
        \centering
        \includegraphics[width=\linewidth]{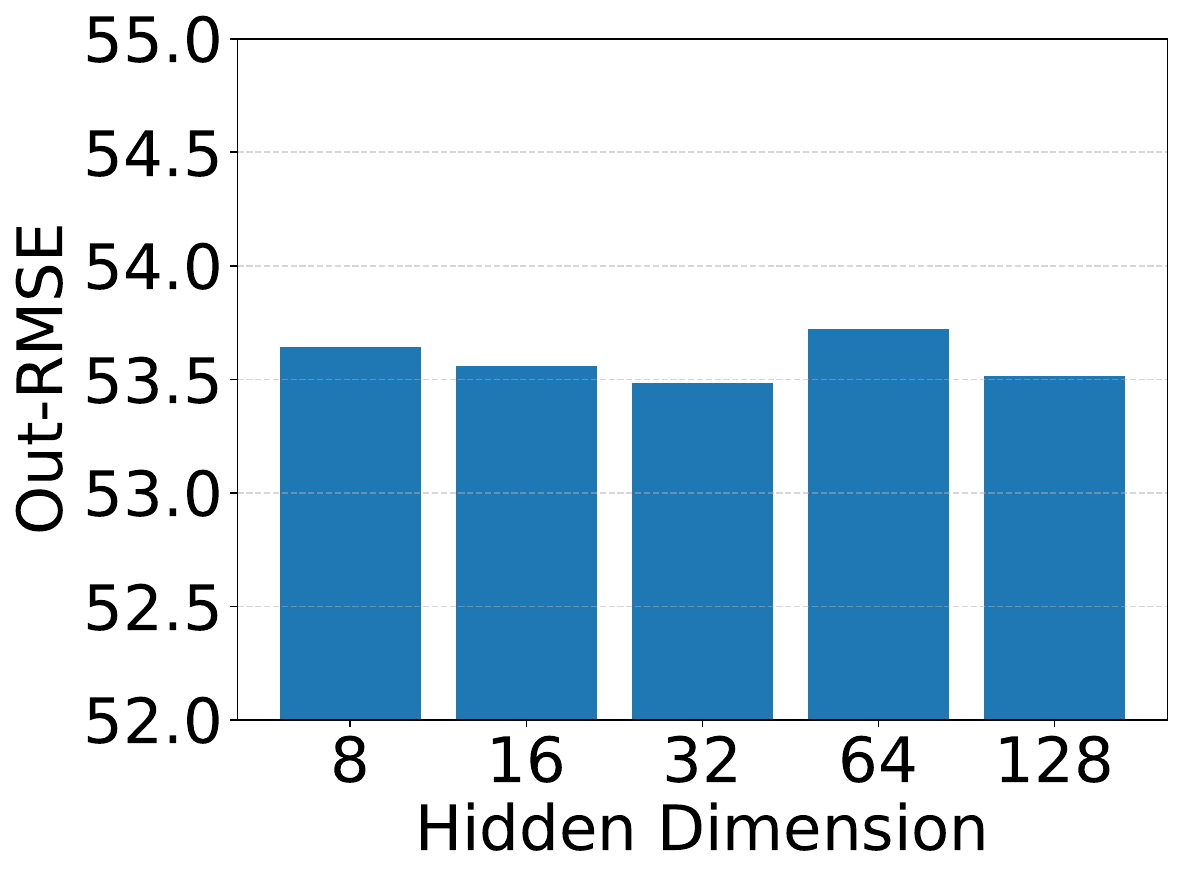}
        \caption{Out-of-sample RMSE}
        \label{subfig:out_rmse_vs_hd}
    
    \end{subfigure}
    \caption{Ablation study on the hidden dimension $hd$ of the denoising network $\theta_2$ under the MNAR setting on the California dataset.}
    \label{fig:ablation_hd}
    
\end{figure}

\paragraph{Long Range Dependency and Selectivity.}

Mamba leverages a selective state-space formulation that dynamically adapts state transitions based on input content, allowing it to prioritize relevant information while suppressing noise. This dynamic and content-aware mechanism addresses key limitations in earlier models, such as convolutional networks and Linear Time-Invariant (LTI) state-space models, which process inputs uniformly without regard to contextual relevance. To empirically validate this selective capability, we adopted the selective copying task, a synthetic benchmark designed to assess a model’s ability to recall contextually important tokens embedded among distractors. The task involves memorizing specific tokens (denoted by numbers in Figure~\ref{fig:slcopy}) within long sequences of irrelevant noise tokens (shown in zeros), where the positions of the relevant tokens vary between samples, thus requiring content-sensitive recall. We followed the evaluation protocol outlined in the original Mamba paper, training models on sequences of length 4096 drawn from a 16-token vocabulary. The objective was to accurately recover 16 target tokens interspersed throughout the sequence. Experimental results show that Mamba achieves impressive performance, attaining an accuracy of \textbf{98.73\%} with two layers and \textbf{99.41\%} with three layers. These findings highlight Mamba’s strong capacity for long-range dependency modeling and selective attention, affirming its suitability for tasks that demand precise memory and effective noise suppression.
\begin{figure}[H]
    \centering
    \includegraphics[width=\linewidth]{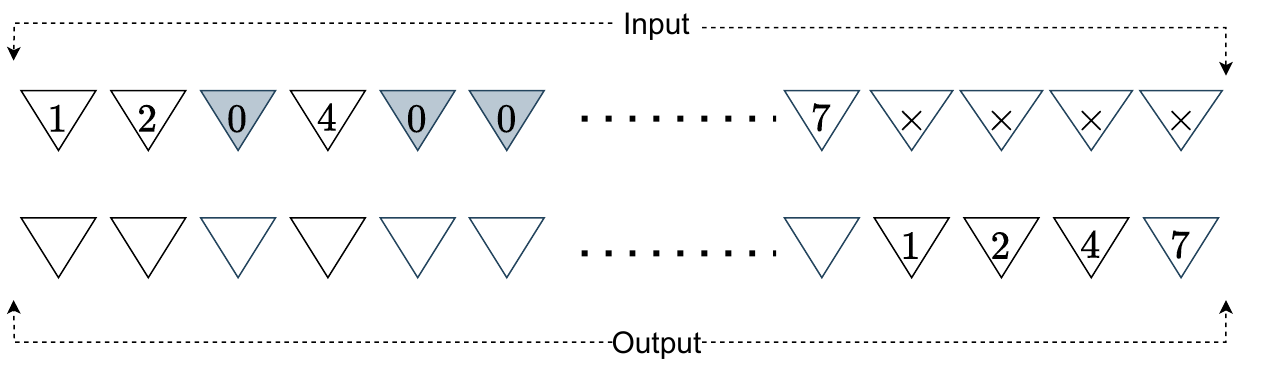}
    \caption{Conceptual diagram of long-range dependency and selectivity.}
    \label{fig:slcopy}
\end{figure}

\paragraph{Sensitivity Analysis.}

We evaluate RefiDiff's sensitivity with respect to two primary factors: (1) the choice of denoising network and (2) the number of sampling trials $N$ in the reverse diffusion process.

First, we compare our Mamba-based denoising module \( \theta_2 \) against DIFFPUTER's larger Transformer-based denoiser (TabDDPM~\cite{tabddpm}). As shown in Figures~\ref{subfig:california_denoiser} and~\ref{subfig:magic_denoiser}, our framework maintains comparable MAE and RMSE scores across datasets (within 2\% difference), demonstrating robustness to denoiser architecture changes. Notably, our denoiser achieves these results with significantly fewer parameters (Figure~\ref{subfig:denoising_param}), offering substantial computational efficiency benefits.

Second, we examine the effect of varying the number of diffusion sampling trials $N$. While DIFFPUTER reported $N$ to be critical for reducing stochastic variance, our method remains remarkably stable even with minimal sampling. As shown in Figures~\ref{subfig:california_N_samples} and~\ref{subfig:magic_N_samples}, our approach maintains consistent performance with as few as 1 or 3 trials, whereas DIFFPUTER exhibits performance drops of up to 20\% with reduced sampling. This stability eliminates the need for computationally expensive ensembling during inference.

\begin{figure}[H]
    \centering
    
    \begin{subfigure}[t]{0.4\linewidth}
    
        \centering        
        \includegraphics[width=\linewidth]{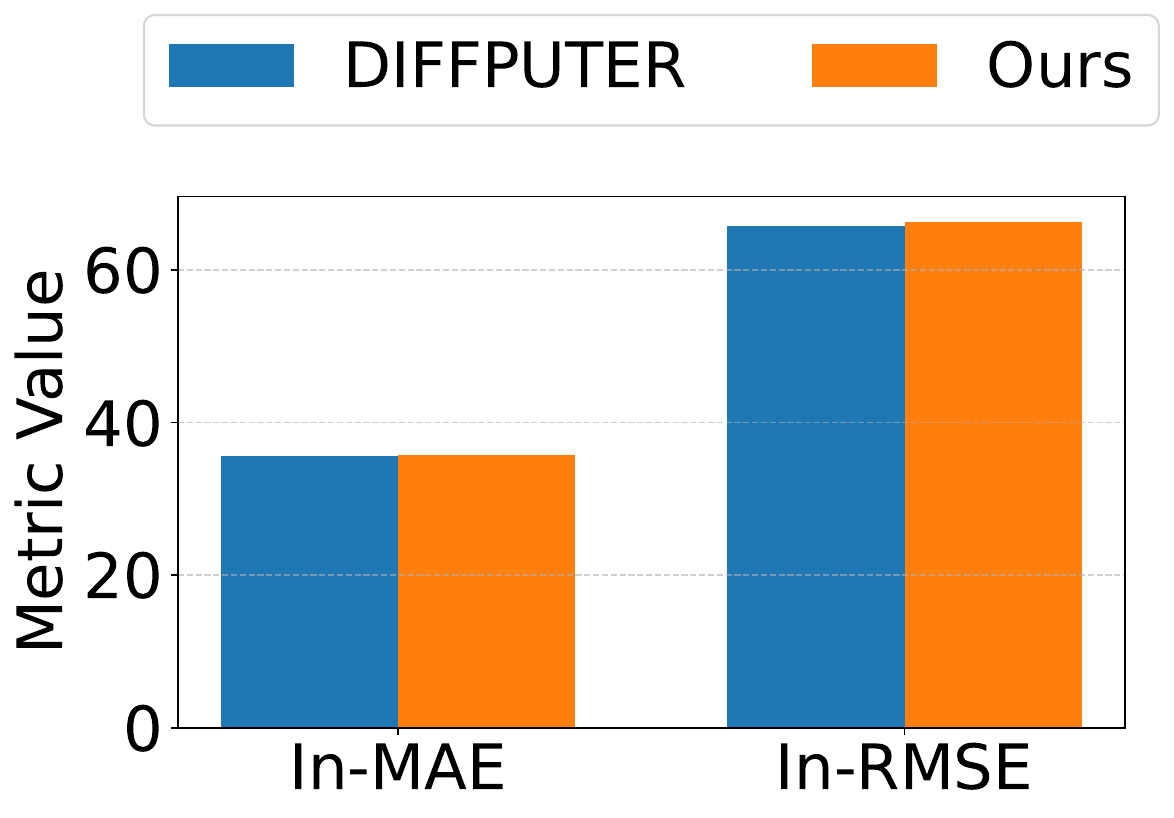}
        \caption{California}
        \label{subfig:california_denoiser}
    \end{subfigure}
    \hfill
    \begin{subfigure}[t]{0.4\linewidth}
        \centering
        \includegraphics[width=\linewidth]{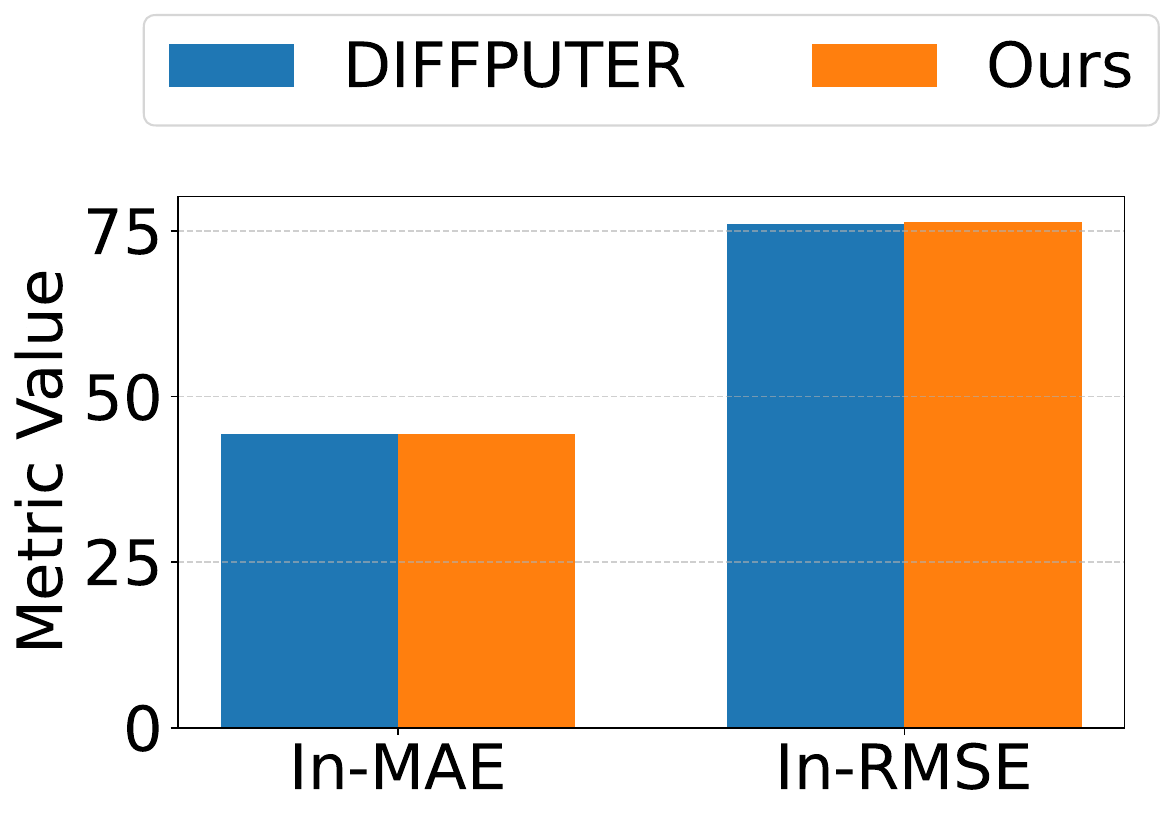}
        \caption{Magic}
        \label{subfig:magic_denoiser}
    \end{subfigure}
    \hfill
    \begin{subfigure}[t]{0.4\linewidth}
        \centering
        \includegraphics[width=\linewidth]{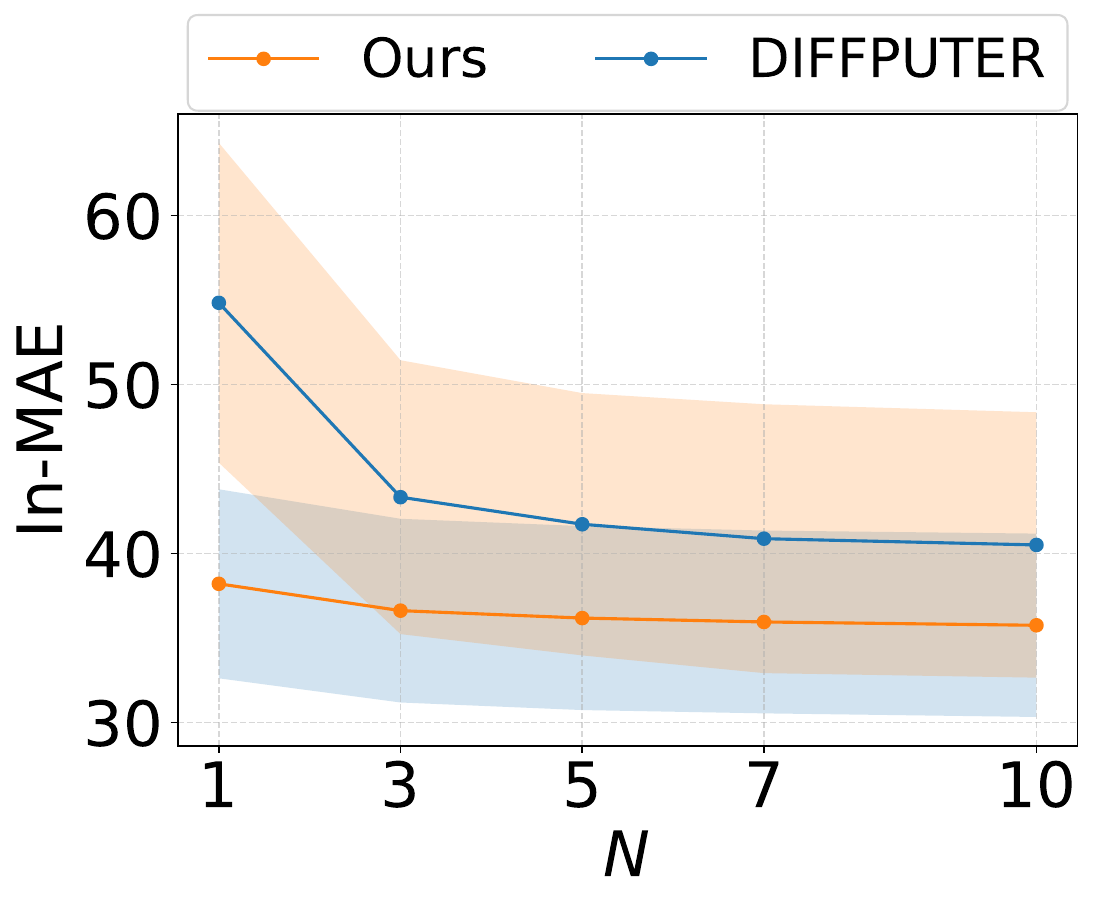}
        \caption{California}
        \label{subfig:california_N_samples}
        
    \end{subfigure}
    \hfill
    \begin{subfigure}[t]{0.4\linewidth}
        \centering
        \includegraphics[width=\linewidth]{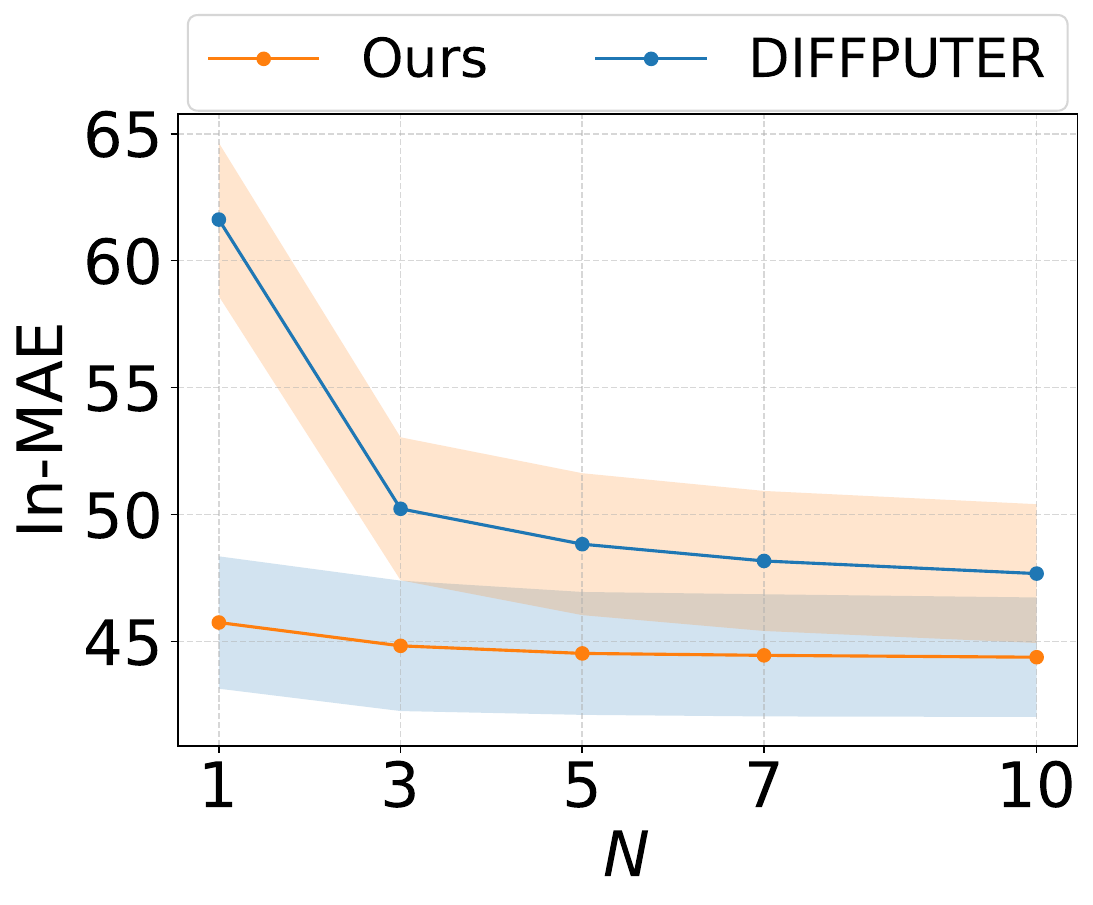}
        \caption{Magic}
        \label{subfig:magic_N_samples}
    
    \end{subfigure}
    \caption{Sensitivity analysis of our framework. Subfigures (a, b) evaluate denoising networks, while (c, d) assess the impact of sampling trials $N$ during reverse diffusion.}
    \label{fig:sensitivity}
\end{figure}

\section{Detailed results}
\label{app:detailed_result}
\begin{table*}
\centering

\begin{tabular}{llccccccccc}
\toprule
& \textbf{Method} & \textbf{California} & \textbf{Default} &\textbf{News} &\textbf{Magic} & \textbf{Bean} & \textbf{Gesture} & \textbf{Letter} & \textbf{Adult} & \textbf{Shoppers}  \\
\midrule

\multirow{13}{*}{\rotatebox[origin=c]{90}{\textbf{MAE}}} 
& EM & 40.55 & 32.88 & 39.89 & 53.52 & 11.10 & 39.37 & 56.70 & 62.72 & 45.03 \\
& MIWAE & 84.00 & 59.73 & 66.00 & 83.29 & 64.84 & 63.84 & 73.87 & 64.84 & 53.49 \\
& GAIN & 83.13 & 89.84 & 59.80 & 67.32 & 47.68 & 81.16 & 68.23 & 130.16 & 55.22 \\
& SoftImpute & 61.55 & 51.66 & 65.01 & 65.32 & 35.74 & 59.33 & 66.59 & 70.21 & 62.96 \\
& MICE & 58.94 & 63.36 & 63.96 & 79.20 & 20.91 & 69.27 & 81.79 & 103.90 & 79.80 \\
& MIRACLE & 45.72 & 44.47 & 41.07 & 50.99 & 19.52 & 76.46 & 70.75 & 64.54 & 77.29 \\
& KNN & 79.31 & 41.11 & 49.07 & 68.44 & 28.97 & 48.13 & 52.33 & 75.12 & 61.08 \\
& MissForest & 46.97 & 39.40 & 40.85 & 52.96 & 27.16 & 41.78 & 59.47 & 64.13 & 45.76 \\
& HyperImpute & 35.89 & 33.22 & 32.85 & 45.21 & 12.41 & 38.78 & 40.17 & 60.45 & 42.56 \\
& DIFFPUTER & 40.51 & 30.49 & 32.27 & 47.67 & 18.60 & 36.75 & 33.67 & 54.89 & 40.62 \\
& ReMasker & 35.02 & 51.23 & 30.53 & 62.61 & 12.41 & 35.13 & 32.54 & 54.92 & 42.51 \\
& Ours & 35.76 & 26.60 & 28.56 & 44.37 & 12.35 & 33.23 & 36.24 & 54.98 & 38.35 \\

\midrule

\multirow{13}{*}{\rotatebox[origin=c]{90}{\textbf{RMSE}}} 
& EM & 66.20 & 105.82 & 81.38 & 84.58 & 29.77 & 78.71 & 77.64 & 125.89 & 90.71 \\
& MIWAE & 122.70 & 141.61 & 126.90 & 116.83 & 104.03 & 118.22 & 98.91 & 134.54 & 127.10 \\
& GAIN & 120.30 & 173.41 & 115.03 & 98.81 & 80.45 & 129.35 & 91.12 & 204.67 & 128.29 \\
& SoftImpute & 91.07 & 118.23 & 115.37 & 104.72 & 56.43 & 112.22 & 89.00 & 133.91 & 112.00 \\
& MICE & 88.96 & 277.87 & 104.27 & 112.58 & 46.54 & 102.62 & 107.10 & 161.08 & 119.39 \\
& MIRACLE & 86.66 & 128.07 & 100.06 & 91.63 & 62.21 & 133.61 & 107.30 & 136.35 & 151.83 \\
& KNN & 120.70 & 125.09 & 106.27 & 105.05 & 75.02 & 104.61 & 75.34 & 143.94 & 124.90 \\
& MissForest & 77.79 & 117.71 & 91.84 & 83.60 & 46.48 & 87.47 & 82.01 & 119.39 & 99.29 \\
& HyperImpute & 66.60 & 98.36 & 82.06 & 78.49 & 32.23 & 79.74 & 57.91 & 125.46 & 95.61 \\
& DIFFPUTER & 76.22 & 110.96 & 87.75 & 79.41 & 64.19 & 86.87 & 52.02 & 123.30 & 101.04 \\
& ReMasker & 63.31 & 115.41 & 83.42 & 91.78 & 30.87 & 78.72 & 46.75 & 118.61 & 93.20 \\
& Ours & 66.35 & 104.95 & 79.72 & 76.38 & 33.86 & 79.69 & 53.66 & 122.22 & 92.64 \\

\bottomrule
\end{tabular}
\caption{In-sample imputation performance under the MNAR setting. Average results are represented by MAE ($\downarrow$) and RMSE ($\downarrow$).}
\label{tab:in_sample_mnar}
\end{table*}

\begin{table*}
\centering

\begin{tabular}{llccccccccc}
\toprule
& \textbf{Method} & \textbf{California} & \textbf{Default} &\textbf{News} &\textbf{Magic} & \textbf{Bean} & \textbf{Gesture} & \textbf{Letter} & \textbf{Adult} & \textbf{Shoppers}  \\
\midrule

\multirow{13}{*}{\rotatebox[origin=c]{90}{\textbf{MAE}}} 
& EM & 39.42 & 49.37 & 58.75 & 54.62 & 10.70 & 38.88 & 56.81 & 62.07 & 44.57 \\
& MIWAE & 81.54 & 54.57 & 64.16 & 83.42 & 63.05 & 61.50 & 73.90 & 61.09 & 50.71 \\
& GAIN & 74.66 & 75.88 & 77.64 & 64.35 & 45.31 & 73.93 & 69.37 & 126.15 & 52.02 \\
& SoftImpute & 58.62 & 48.20 & 72.76 & 64.48 & 34.02 & 57.22 & 66.15 & 74.51 & 61.15 \\
& MICE & 57.82 & 61.77 & 89.26 & 79.59 & 20.67 & 68.31 & 81.50 & 100.05 & 78.50 \\
& MIRACLE & 42.90 & 44.49 & 65.45 & 52.78 & 15.79 & 67.54 & 70.61 & 74.05 & 69.10 \\
& KNN & 72.44 & 33.76 & 47.16 & 65.62 & 24.05 & 44.81 & 49.35 & 68.51 & 55.99 \\
& MissForest & 45.34 & 37.05 & 40.20 & 54.94 & 27.17 & 40.37 & 59.80 & 66.27 & 44.01 \\
& HyperImpute & 37.34 & 31.92 & 29.78 & 47.91 & 12.88 & 41.56 & 41.64 & 60.18 & 42.83 \\
& DIFFPUTER & 37.49 & 26.28 & 30.77 & 48.72 & 14.30 & 33.00 & 32.31 & 50.68 & 37.31 \\
& ReMasker & 34.67 & 49.97 & 29.86 & 64.94 & 12.48 & 35.30 & 33.81 & 52.84 & 41.78 \\
& Ours & 35.16 & 24.29 & 28.49 & 46.85 & 12.58 & 33.58 & 37.86 & 54.03 & 36.58 \\

\midrule

\multirow{13}{*}{\rotatebox[origin=c]{90}{\textbf{RMSE}}} 
& EM & 62.72 & 121.33 & 320.31 & 85.77 & 28.30 & 75.03 & 77.32 & 109.85 & 82.36 \\
& MIWAE & 116.24 & 111.25 & 110.31 & 115.69 & 99.33 & 110.16 & 98.80 & 118.90 & 112.33 \\
& GAIN & 105.98 & 137.54 & 141.46 & 93.84 & 70.49 & 118.22 & 92.46 & 189.74 & 110.33 \\
& SoftImpute & 84.02 & 95.46 & 144.53 & 97.99 & 50.85 & 103.86 & 88.48 & 128.59 & 99.62 \\
& MICE & 86.25 & 307.60 & 4516.24 & 112.71 & 46.16 & 98.90 & 106.51 & 145.23 & 112.03 \\
& MIRACLE & 75.73 & 106.14 & 171.99 & 92.55 & 38.17 & 131.66 & 106.32 & 142.76 & 126.62 \\
& KNN & 108.17 & 87.79 & 86.87 & 99.44 & 57.15 & 93.46 & 70.58 & 126.17 & 106.72 \\
& MissForest & 73.22 & 91.32 & 77.03 & 86.99 & 45.86 & 80.46 & 82.64 & 128.87 & 88.64 \\
& HyperImpute & 64.38 & 75.10 & 346.00 & 81.60 & 31.38 & 77.94 & 59.91 & 113.89 & 91.28 \\
& DIFFPUTER & 65.65 & 78.87 & 71.19 & 81.38 & 42.72 & 73.12 & 50.36 & 107.27 & 84.01 \\
& ReMasker & 62.44 & 96.39 & 66.46 & 96.38 & 30.72 & 76.02 & 48.97 & 105.67 & 84.21 \\
& Ours & 62.69 & 69.82 & 64.22 & 80.06 & 33.30 & 74.92 & 55.90 & 109.89 & 80.24 \\

\bottomrule
\end{tabular}
\caption{Out-of-sample imputation performance under the MNAR setting. Average results are represented by MAE ($\downarrow$) and RMSE ($\downarrow$).}
\label{tab:out_mnar}
\end{table*}

\begin{table*}
\centering

\begin{tabular}{lcccc|cccc}
\toprule
\textbf{Method} & \textbf{Default} &\textbf{News} & \textbf{Adult} & \textbf{Shoppers} & \textbf{Default} &\textbf{News} & \textbf{Adult} & \textbf{Shoppers}  \\
& \multicolumn{4}{c}{In-Sample}&\multicolumn{4}{c}{Out-of-Sample}\\
\midrule
EM & 71.09 & 40.91 & 63.38 & 57.23 & 69.94 & 40.93 & 63.09 & 57.13 \\
MIWAE & 47.40 & 17.56 & 52.99 & 49.03 & 48.45 & 17.60 & 52.89 & 49.30 \\
GAIN & 60.75 & 26.87 & 56.11 & 49.50 & 56.89 & 28.54 & 53.38 & 49.12 \\
SoftImpute & 59.20 & 22.42 & 57.31 & 52.52 & 58.87 & 18.89 & 58.33 & 52.32 \\
MICE & 57.90 & 30.18 & 49.93 & 44.05 & 58.32 & 30.07 & 49.72 & 44.01 \\
MIRACLE & 69.40 & 39.23 & 63.21 & 48.06 & 68.57 & 36.02 & 62.19 & 51.48 \\
KNN & 69.53 & 36.95 & 58.83 & 49.75 & 69.86 & 37.57 & 59.10 & 49.82 \\
MissForest & 68.98 & 34.26 & 65.53 & 52.57 & 68.97 & 34.74 & 65.24 & 52.06 \\
HyperImpute & 72.09 & 41.94 & 67.88 & 56.84 & 71.21 & 41.96 & 68.13 & 54.56 \\
DIFFPUTER & 73.25 & 43.28 & 67.90 & 55.83 & 73.84 & 43.74 & 67.86 & 56.53 \\
ReMasker & 75.37 & 46.76 & 72.99 & 56.94 & 75.45 & 47.02 & 72.59 & 56.63 \\
Ours & 75.37 & 46.76 & 72.99 & 57.65 & 75.45 & 47.02 & 72.59 & 57.25 \\

\bottomrule
\end{tabular}
\caption{Categorical imputation performance under the MNAR setting. Average results are represented by accuracy ($\uparrow$).}
\label{tab:mnar_acc}
\end{table*}

\begin{table*}
\centering

\begin{tabular}{llccccccccc}
\toprule
& \textbf{Method} & \textbf{California} & \textbf{Default} &\textbf{News} &\textbf{Magic} & \textbf{Bean} & \textbf{Gesture} & \textbf{Letter} & \textbf{Adult} & \textbf{Shoppers}  \\
\midrule

\multirow{13}{*}{\rotatebox[origin=c]{90}{\textbf{MAE}}} 
& EM & 37.41 & 27.89 & 37.96 & 49.26 & 10.39 & 34.30 & 54.88 & 55.95 & 40.22 \\
& MIWAE & 76.40 & 51.05 & 63.35 & 76.96 & 59.98 & 57.42 & 72.44 & 58.58 & 46.78 \\
& GAIN & 78.08 & 69.36 & 55.94 & 60.04 & 34.62 & 72.62 & 64.26 & 120.61 & 47.70 \\
& SoftImpute & 55.81 & 37.98 & 62.23 & 57.63 & 27.97 & 51.02 & 64.64 & 61.58 & 55.76 \\
& MICE & 56.20 & 59.36 & 62.26 & 75.82 & 17.25 & 65.58 & 80.73 & 98.03 & 75.75 \\
& MIRACLE & 38.00 & 37.94 & 38.92 & 43.63 & 10.93 & 68.16 & 65.77 & 58.87 & 74.01 \\
& KNN & 64.37 & 30.68 & 45.66 & 57.52 & 20.28 & 41.13 & 47.84 & 61.87 & 48.42 \\
& MissForest & 42.77 & 32.62 & 39.16 & 50.16 & 25.30 & 36.66 & 58.50 & 60.47 & 41.46 \\
& HyperImpute & 32.61 & 27.83 & 27.77 & 42.32 & 11.35 & 36.28 & 39.82 & 54.77 & 37.85 \\
& DIFFPUTER & 34.67 & 23.20 & 29.36 & 43.22 & 11.77 & 29.23 & 30.60 & 48.77 & 34.65 \\
& ReMasker & 32.02 & 38.97 & 29.33 & 56.80 & 12.07 & 31.09 & 32.61 & 50.28 & 39.37 \\
& Ours & 32.03 & 22.24 & 27.29 & 41.71 & 11.44 & 28.62 & 35.58 & 49.69 & 34.12 \\

\midrule

\multirow{13}{*}{\rotatebox[origin=c]{90}{\textbf{RMSE}}} 
& EM & 58.28 & 68.80 & 63.78 & 74.25 & 27.93 & 68.39 & 74.50 & 94.13 & 76.71 \\
& MIWAE & 105.18 & 104.09 & 105.73 & 105.19 & 93.34 & 104.77 & 96.85 & 104.32 & 106.45 \\
& GAIN & 106.65 & 118.31 & 90.85 & 84.64 & 54.37 & 114.28 & 85.42 & 176.70 & 105.24 \\
& SoftImpute & 80.59 & 78.53 & 92.14 & 84.30 & 47.40 & 95.55 & 86.23 & 100.32 & 91.05 \\
& MICE & 82.49 & 95.76 & 90.07 & 105.18 & 41.13 & 94.66 & 105.32 & 132.77 & 108.35 \\
& MIRACLE & 64.57 & 99.00 & 77.51 & 71.50 & 30.48 & 118.80 & 98.70 & 104.27 & 140.18 \\
& KNN & 93.89 & 80.78 & 79.85 & 85.16 & 45.77 & 88.76 & 67.77 & 104.14 & 96.20 \\
& MissForest & 65.07 & 78.90 & 69.56 & 76.43 & 42.72 & 75.35 & 80.33 & 103.97 & 85.10 \\
& HyperImpute & 56.22 & 68.44 & 57.75 & 70.02 & 29.92 & 71.78 & 57.60 & 95.08 & 80.14 \\
& DIFFPUTER & 57.17 & 68.14 & 58.92 & 69.35 & 32.64 & 67.60 & 47.27 & 92.20 & 78.08 \\
& ReMasker & 52.49 & 80.61 & 58.65 & 82.29 & 30.48 & 67.56 & 46.86 & 89.42 & 78.37 \\
& Ours & 54.50 & 65.20 & 56.55 & 69.07 & 32.45 & 67.93 & 52.62 & 92.53 & 77.56 \\

\bottomrule
\end{tabular}
\caption{In-sample imputation performance under the MCAR setting. Average results are represented by MAE ($\downarrow$) and RMSE ($\downarrow$).}
\label{tab:in_mcar}
\end{table*}

\begin{table*}
\centering

\begin{tabular}{llccccccccc}
\toprule
& \textbf{Method} & \textbf{California} & \textbf{Default} &\textbf{News} &\textbf{Magic} & \textbf{Bean} & \textbf{Gesture} & \textbf{Letter} & \textbf{Adult} & \textbf{Shoppers}  \\
\midrule

\multirow{13}{*}{\rotatebox[origin=c]{90}{\textbf{MAE}}} 
& EM & 37.31 & 27.54 & 55.91 & 49.38 & 10.36 & 34.47 & 55.57 & 56.86 & 40.93 \\
& MIWAE & 76.85 & 51.28 & 62.59 & 76.81 & 59.96 & 56.39 & 72.94 & 59.12 & 47.32 \\
& GAIN & 69.47 & 62.82 & 72.57 & 58.36 & 37.21 & 61.17 & 66.68 & 122.98 & 46.19 \\
& SoftImpute & 55.13 & 40.89 & 70.03 & 57.03 & 27.57 & 51.06 & 64.96 & 63.08 & 55.83 \\
& MICE & 56.16 & 59.05 & 62.14 & 75.57 & 17.15 & 65.05 & 81.14 & 97.93 & 76.23 \\
& MIRACLE & 39.47 & 48.45 & 61.91 & 44.19 & 14.04 & 58.03 & 67.78 & 64.12 & 63.17 \\
& KNN & 64.63 & 30.25 & 45.52 & 57.63 & 20.35 & 40.27 & 47.99 & 62.02 & 48.84 \\
& MissForest & 42.68 & 32.44 & 39.01 & 50.28 & 25.49 & 35.89 & 58.84 & 60.66 & 42.20 \\
& HyperImpute & 33.65 & 28.95 & 33.70 & 43.08 & 12.07 & 37.94 & 41.35 & 56.29 & 40.41 \\
& DIFFPUTER & 34.65 & 22.72 & 29.11 & 43.27 & 11.72 & 27.73 & 30.30 & 48.42 & 34.62 \\
& ReMasker & 32.03 & 38.82 & 29.14 & 56.89 & 11.99 & 30.71 & 32.85 & 50.25 & 39.91 \\
& Ours & 32.93 & 22.42 & 27.75 & 42.48 & 11.77 & 29.41 & 37.10 & 51.13 & 34.81 \\

\midrule

\multirow{13}{*}{\rotatebox[origin=c]{90}{\textbf{RMSE}}} 
& EM & 58.14 & 64.84 & 269.89 & 74.63 & 27.82 & 65.78 & 75.40 & 94.40 & 76.46 \\
& MIWAE & 106.27 & 101.02 & 102.89 & 105.11 & 92.59 & 100.03 & 97.49 & 104.89 & 104.15 \\
& GAIN & 95.88 & 117.46 & 125.88 & 82.78 & 58.21 & 96.43 & 89.02 & 177.51 & 99.45 \\
& SoftImpute & 79.14 & 79.81 & 130.88 & 83.28 & 45.60 & 91.18 & 86.79 & 100.86 & 89.78 \\
& MICE & 82.39 & 93.22 & 88.90 & 104.87 & 40.87 & 91.93 & 105.95 & 132.38 & 108.28 \\
& MIRACLE & 66.22 & 107.74 & 176.60 & 71.85 & 32.35 & 113.53 & 101.89 & 109.06 & 112.67 \\
& KNN & 94.91 & 76.21 & 77.66 & 85.43 & 44.42 & 83.50 & 67.99 & 104.43 & 93.53 \\
& MissForest & 64.76 & 75.28 & 67.95 & 76.87 & 42.26 & 70.18 & 80.99 & 104.27 & 85.05 \\
& HyperImpute & 57.41 & 66.07 & 891.86 & 70.68 & 30.44 & 70.06 & 59.68 & 96.72 & 81.47 \\
& DIFFPUTER & 57.18 & 63.73 & 59.44 & 69.61 & 31.87 & 59.40 & 47.09 & 92.04 & 76.31 \\
& ReMasker & 52.58 & 78.00 & 56.89 & 82.58 & 29.88 & 63.78 & 47.19 & 88.96 & 77.48 \\
& Ours & 56.36 & 61.66 & 56.77 & 70.46 & 32.54 & 64.58 & 54.66 & 94.39 & 76.58 \\

\bottomrule
\end{tabular}
\caption{Out-of-sample imputation performance under the MCAR setting. Average results are represented by MAE ($\downarrow$) and RMSE ($\downarrow$).}
\label{tab:out_mcar}
\end{table*}
\begin{table*}
\centering

\begin{tabular}{lcccc|cccc}
\toprule
\textbf{Method} & \textbf{Default} &\textbf{News} & \textbf{Adult} & \textbf{Shoppers} & \textbf{Default} &\textbf{News} & \textbf{Adult} & \textbf{Shoppers}  \\
& \multicolumn{4}{c}{In-Sample}&\multicolumn{4}{c}{Out-of-Sample}\\
\midrule
EM & 71.11 & 41.49 & 63.10 & 57.34 & 71.32 & 40.82 & 63.07 & 57.13 \\
MIWAE & 47.91 & 17.86 & 52.73 & 49.14 & 48.32 & 17.85 & 52.85 & 49.26 \\
GAIN & 60.78 & 27.21 & 56.52 & 50.93 & 57.72 & 29.99 & 53.29 & 49.49 \\
SoftImpute & 59.72 & 18.75 & 57.89 & 52.62 & 59.52 & 19.17 & 58.43 & 52.28 \\
MICE & 57.65 & 30.33 & 49.83 & 43.82 & 57.92 & 30.37 & 49.76 & 44.10 \\
MIRACLE & 69.42 & 39.99 & 62.86 & 48.60 & 66.80 & 36.13 & 62.36 & 52.56 \\
KNN & 69.54 & 37.52 & 59.40 & 50.10 & 69.75 & 36.87 & 59.29 & 50.13 \\
MissForest & 68.38 & 35.13 & 66.21 & 53.31 & 68.70 & 34.86 & 65.89 & 53.30 \\
HyperImpute & 72.29 & 41.59 & 68.16 & 58.51 & 71.02 & 40.57 & 68.81 & 53.98 \\
DIFFPUTER & 73.40 & 43.64 & 67.96 & 56.04 & 73.61 & 43.47 & 68.00 & 56.36 \\
ReMasker & 75.69 & 47.51 & 72.93 & 56.89 & 75.83 & 46.74 & 72.67 & 56.93 \\
Ours & 76.06 & 47.50 & 72.93 & 57.75 & 76.22 & 46.72 & 72.67 & 57.22 \\

\bottomrule
\end{tabular}
\caption{Categorical imputation performance under the MCAR setting. Average results are represented by accuracy ($\uparrow$).}
\label{tab:mcar_acc}
\end{table*}

\begin{table*}
\centering

\begin{tabular}{llccccccccc}
\toprule
& \textbf{Method} & \textbf{California} & \textbf{Default} &\textbf{News} &\textbf{Magic} & \textbf{Bean} & \textbf{Gesture} & \textbf{Letter} & \textbf{Adult} & \textbf{Shoppers}  \\
\midrule

\multirow{13}{*}{\rotatebox[origin=c]{90}{\textbf{MAE}}} 
& EM & 39.65 & 31.78 & 39.33 & 54.16 & 12.59 & 37.18 & 56.42 & 64.69 & 47.38 \\
& MIWAE & 85.94 & 57.13 & 65.25 & 83.39 & 70.00 & 61.51 & 72.27 & 63.94 & 54.69 \\
& GAIN & 84.34 & 84.63 & 60.34 & 71.14 & 60.97 & 125.69 & 81.42 & 119.79 & 57.29 \\
& SoftImpute & 64.38 & 47.54 & 63.59 & 65.90 & 44.32 & 57.62 & 66.38 & 70.95 & 66.52 \\
& MICE & 57.03 & 62.64 & 63.56 & 80.18 & 22.81 & 67.69 & 81.71 & 108.05 & 81.52 \\
& MIRACLE & 47.46 & 47.38 & 41.81 & 63.00 & 23.03 & 74.91 & 73.38 & 90.53 & 73.79 \\
& KNN & 48.02 & 31.74 & 45.27 & 51.50 & 19.11 & 41.47 & 38.08 & 65.25 & 48.58 \\
& MissForest & 45.72 & 42.94 & 39.40 & 52.95 & 28.85 & 39.12 & 60.64 & 78.34 & 45.94 \\
& HyperImpute & 35.61 & 30.36 & 30.55 & 46.93 & 13.82 & 37.32 & 41.77 & 66.46 & 44.74 \\
& DIFFPUTER & 40.80 & 30.29 & 32.62 & 49.41 & 24.08 & 37.37 & 37.47 & 57.18 & 43.18 \\
& ReMasker & 34.30 & 40.23 & 29.99 & 58.31 & 13.11 & 34.42 & 32.41 & 57.31 & 45.48 \\
& Ours & 34.38 & 25.27 & 27.54 & 45.51 & 14.25 & 31.03 & 36.70 & 56.61 & 39.35 \\

\midrule

\multirow{13}{*}{\rotatebox[origin=c]{90}{\textbf{RMSE}}} 
& EM & 61.56 & 95.23 & 71.14 & 85.09 & 32.89 & 75.97 & 76.72 & 147.36 & 97.75 \\
& MIWAE & 125.03 & 132.83 & 118.92 & 115.99 & 114.81 & 115.57 & 96.63 & 154.48 & 128.22 \\
& GAIN & 121.54 & 168.43 & 109.15 & 105.19 & 96.72 & 185.64 & 109.22 & 215.97 & 130.69 \\
& SoftImpute & 91.49 & 104.97 & 106.86 & 102.26 & 67.16 & 110.22 & 88.13 & 152.09 & 116.86 \\
& MICE & 83.97 & 117.00 & 95.65 & 113.51 & 47.84 & 100.38 & 106.64 & 183.11 & 125.01 \\
& MIRACLE & 86.34 & 121.34 & 91.57 & 110.69 & 72.27 & 127.05 & 106.16 & 203.04 & 142.15 \\
& KNN & 77.12 & 98.70 & 88.48 & 83.79 & 42.78 & 90.57 & 56.41 & 153.26 & 104.73 \\
& MissForest & 74.58 & 123.56 & 80.05 & 85.39 & 50.21 & 83.00 & 83.08 & 167.86 & 101.72 \\
& HyperImpute & 64.62 & 87.76 & 70.40 & 81.35 & 34.02 & 76.64 & 59.94 & 152.97 & 99.57 \\
& DIFFPUTER & 78.46 & 104.79 & 79.66 & 81.15 & 79.35 & 88.77 & 56.16 & 144.84 & 105.38 \\
& ReMasker & 59.67 & 98.52 & 71.88 & 87.99 & 32.26 & 77.37 & 46.55 & 140.06 & 95.08 \\
& Ours & 62.60 & 87.86 & 68.75 & 78.75 & 38.46 & 75.42 & 54.07 & 144.79 & 93.29 \\

\bottomrule
\end{tabular}
\caption{In-sample imputation performance under the MAR setting. Average results are represented by MAE ($\downarrow$) and RMSE ($\downarrow$).}
\label{tab:in_mar}
\end{table*}

\begin{table*}
\centering

\begin{tabular}{llccccccccc}
\toprule
& \textbf{Method} & \textbf{California} & \textbf{Default} &\textbf{News} &\textbf{Magic} & \textbf{Bean} & \textbf{Gesture} & \textbf{Letter} & \textbf{Adult} & \textbf{Shoppers}  \\
\midrule

\multirow{13}{*}{\rotatebox[origin=c]{90}{\textbf{MAE}}} 
& EM & 37.21 & 33.41 & 54.13 & 51.22 & 11.01 & 38.58 & 57.19 & 65.79 & 44.97 \\
& MIWAE & 79.52 & 54.55 & 65.79 & 78.64 & 69.63 & 59.08 & 74.08 & 63.05 & 48.30 \\
& GAIN & 72.35 & 80.31 & 69.56 & 69.66 & 64.96 & 98.46 & 73.42 & 116.57 & 47.72 \\
& SoftImpute & 61.58 & 48.76 & 70.14 & 60.66 & 38.26 & 56.70 & 66.49 & 70.03 & 61.10 \\
& MICE & 54.89 & 62.03 & 107.26 & 76.65 & 23.09 & 68.09 & 82.24 & 105.74 & 78.53 \\
& MIRACLE & 43.09 & 54.67 & 66.77 & 59.07 & 17.66 & 66.08 & 77.42 & 113.07 & 79.69 \\
& KNN & 59.41 & 31.38 & 47.02 & 57.60 & 19.27 & 40.83 & 45.74 & 67.57 & 49.27 \\
& MissForest & 42.66 & 37.34 & 41.55 & 51.04 & 27.51 & 39.40 & 59.29 & 69.83 & 45.85 \\
& HyperImpute & 34.82 & 32.37 & 32.34 & 45.75 & 13.33 & 40.25 & 43.05 & 66.02 & 42.11 \\
& DIFFPUTER & 35.46 & 27.05 & 32.42 & 46.14 & 15.32 & 33.27 & 34.56 & 56.54 & 37.12 \\
& ReMasker & 32.81 & 39.96 & 31.24 & 56.62 & 11.94 & 35.45 & 34.36 & 58.96 & 41.18 \\
& Ours & 32.31 & 27.16 & 29.93 & 44.50 & 12.94 & 32.56 & 38.23 & 55.90 & 36.34 \\

\midrule

\multirow{13}{*}{\rotatebox[origin=c]{90}{\textbf{RMSE}}} 
& EM & 58.18 & 86.28 & 187.87 & 79.34 & 29.93 & 74.16 & 78.46 & 142.96 & 82.54 \\
& MIWAE & 112.12 & 115.24 & 111.98 & 107.92 & 105.08 & 106.06 & 99.01 & 147.02 & 109.76 \\
& GAIN & 103.73 & 157.21 & 125.57 & 100.94 & 94.64 & 153.16 & 97.53 & 204.64 & 104.45 \\
& SoftImpute & 86.16 & 101.39 & 121.47 & 89.89 & 55.11 & 103.80 & 89.08 & 145.62 & 99.88 \\
& MICE & 80.42 & 109.38 & 8728.05 & 106.85 & 49.11 & 97.48 & 107.43 & 174.15 & 113.68 \\
& MIRACLE & 75.53 & 120.09 & 116.89 & 99.96 & 45.74 & 122.22 & 112.88 & 217.21 & 141.11 \\
& KNN & 90.39 & 89.48 & 83.86 & 86.46 & 44.63 & 84.54 & 67.27 & 147.35 & 96.40 \\
& MissForest & 67.94 & 101.63 & 77.69 & 78.99 & 46.18 & 78.20 & 82.05 & 156.33 & 92.22 \\
& HyperImpute & 59.22 & 83.91 & 379.56 & 75.92 & 32.69 & 75.72 & 61.62 & 147.92 & 88.29 \\
& DIFFPUTER & 61.92 & 83.50 & 70.23 & 74.31 & 45.61 & 73.76 & 53.45 & 138.65 & 83.42 \\
& ReMasker & 57.26 & 91.79 & 68.16 & 83.31 & 29.26 & 74.93 & 49.83 & 136.54 & 83.06 \\
& Ours & 57.79 & 84.06 & 65.24 & 73.35 & 34.85 & 72.21 & 56.50 & 138.74 & 81.69 \\

\bottomrule
\end{tabular}
\caption{Out-of-sample imputation performance under the MAR setting. Average results are represented by MAE ($\downarrow$) and RMSE ($\downarrow$).}
\label{tab:out_mar}
\end{table*}

\begin{table*}
\centering

\begin{tabular}{lcccc|cccc}
\toprule
\textbf{Method} & \textbf{Default} &\textbf{News} & \textbf{Adult} & \textbf{Shoppers} & \textbf{Default} &\textbf{News} & \textbf{Adult} & \textbf{Shoppers}  \\
& \multicolumn{4}{c}{In-Sample}&\multicolumn{4}{c}{Out-of-Sample}\\
\midrule
EM & 71.29 & 28.32 & 59.27 & 60.19 & 70.89 & 41.56 & 60.41 & 58.37 \\
MIWAE & 46.31 & 14.33 & 49.01 & 52.62 & 49.07 & 15.96 & 52.14 & 50.78 \\
GAIN & 57.07 & 19.50 & 50.81 & 52.56 & 54.04 & 30.32 & 53.96 & 50.54 \\
SoftImpute & 59.38 & 15.17 & 53.46 & 54.69 & 59.62 & 17.42 & 57.02 & 53.30 \\
MICE & 58.64 & 21.33 & 46.49 & 46.87 & 58.50 & 29.77 & 49.20 & 45.91 \\
MIRACLE & 67.04 & 24.74 & 57.48 & 50.08 & 65.53 & 31.63 & 59.36 & 49.19 \\
KNN & 71.07 & 26.59 & 60.83 & 55.00 & 70.67 & 37.37 & 61.95 & 52.62 \\
MissForest & 69.23 & 25.27 & 61.05 & 53.45 & 70.41 & 35.24 & 67.82 & 56.73 \\
HyperImpute & 72.12 & 26.63 & 65.73 & 53.20 & 69.56 & 38.57 & 70.89 & 49.13 \\
DIFFPUTER & 73.58 & 30.15 & 65.77 & 59.45 & 73.16 & 43.24 & 68.68 & 58.33 \\
ReMasker & 77.14 & 31.78 & 70.87 & 59.72 & 76.55 & 47.94 & 74.77 & 58.47 \\
Ours & 77.14 & 31.75 & 70.87 & 60.45 & 76.55 & 47.90 & 74.77 & 58.16 \\

\bottomrule
\end{tabular}
\caption{Categorical imputation performance under the MAR setting. Average results are represented by accuracy ($\uparrow$).}
\label{tab:mar_acc}
\end{table*}

\end{document}